\documentclass{article} %
\usepackage[preprint]{colm2025_conference}

\usepackage{microtype}
\usepackage{hyperref}
\usepackage{url}
\usepackage{booktabs}
\usepackage{xcolor}
\usepackage{svg}
\usepackage{subcaption}
\usepackage{multirow}    %
\usepackage{array}       %
\usepackage{amsmath}     %
\usepackage{enumitem}
\usepackage{colortbl}
\usepackage{booktabs}
\usepackage{amsmath}
\usepackage{amsthm}
\usepackage{amssymb}
\usepackage{algorithm}
\usepackage{algorithmic}
\usepackage{mdframed}
\usepackage{tcolorbox}
\setlength{\fboxsep}{1pt}
\usepackage[font=small,labelfont=bf]{caption}
\usepackage{xspace}

\setlength{\floatsep}{7pt}         %
\setlength{\textfloatsep}{7pt}     %
\setlength{\intextsep}{7pt}        %

\usepackage{wrapfig}

\newtheorem{theorem}{Theorem}

\newtheorem{definition}{Definition}

\usepackage{lineno}

\definecolor{darkblue}{rgb}{0, 0, 0.5}
\hypersetup{colorlinks=true, citecolor=darkblue, linkcolor=darkblue, urlcolor=darkblue}

\definecolor{algobackground}{RGB}{245, 247, 250}
\definecolor{algoborder}{RGB}{220, 225, 235}

\title{SAEs \emph{Can} Improve Unlearning: Dynamic Sparse Autoencoder Guardrails for Precision Unlearning in LLMs}

\author{
  Aashiq Muhamed\textsuperscript{\dag}, Jacopo Bonato\textsuperscript{\ddag}, \textbf{Mona Diab}\textsuperscript{\dag}, \textbf{Virginia Smith}\textsuperscript{\dag} \\
  \{amuhamed,mdiab,smithv\}@andrew.cmu.edu, jacopo.bonato.ext@leonardo.com \\
  \textsuperscript{\dag}Carnegie Mellon University, 
  \textsuperscript{\ddag}Leonardo Labs
}

\begin{document}

\ifcolmsubmission
\linenumbers
\fi

\maketitle

\begin{abstract}
\looseness=-1
Machine unlearning is a promising approach to improve LLM safety by removing unwanted knowledge from the model. However, prevailing gradient-based unlearning methods suffer from issues such as high computational costs, hyperparameter instability, poor sequential unlearning capability, vulnerability to relearning attacks, low data efficiency, and lack of interpretability. While Sparse Autoencoders are well-suited to improve these aspects by enabling targeted activation-based unlearning, prior approaches underperform gradient-based methods. This work demonstrates that, contrary to these earlier findings, SAEs can significantly improve unlearning when employed dynamically. We introduce \textbf{D}ynamic \textbf{S}AE \textbf{G}uardrails (DSG), a novel method for precision unlearning that leverages principled feature selection and a dynamic classifier. Our experiments show DSG substantially outperforms leading unlearning methods, achieving superior forget-utility trade-offs.
DSG addresses key drawbacks of gradient-based approaches for unlearning---offering enhanced computational efficiency and stability, robust performance in sequential unlearning, stronger resistance to relearning attacks, better data efficiency including zero-shot settings, and more interpretable unlearning.\footnote{Code is available at \url{https://github.com/aashiqmuhamed/DynamicSAEGuardrails}}
\end{abstract}

\section{Introduction}

Machine unlearning, the process of removing specific information from trained LLMs, is a promising tool for applications in safety, privacy, and model maintenance \citep{liu2025rethinking}. However, the predominant gradient-based unlearning methods suffer from significant limitations \citep{barez2025open}. Existing methods struggle to precisely balance forgetting target data with preserving general utility~\citep{thaker2024position}, incur high \textbf{computational costs}~\citep{cha2024towards} , exhibit \textbf{hyperparameter instability}~\citep{bu2024unlearning}, degrade quickly under \textbf{sequential unlearning} requests~\citep{shi2024muse, gao2025largelanguagemodelcontinual}, are vulnerable to \textbf{relearning attacks}~\citep{hu2024jogging, deeb2024unlearning}, lack \textbf{data efficiency}~\citep{gao2024practical}, and offer little \textbf{interpretability}~\citep{xu2024machine}. While interventions using Sparse Autoencoders (SAEs) \citep{bricken2023towards} offer a potential path towards more targeted, activation-based unlearning, existing SAE approaches have typically underperformed gradient-based approaches due to imprecise interventions that cause unintended side effects~\citep{farrell2024applying}.

This paper demonstrates that, contrary to previous findings, \textbf{SAEs \textit{can} significantly improve unlearning} when employed dynamically. We introduce \textbf{Dynamic SAE Guardrails (DSG)}, a novel method that leverages SAEs for precise, efficient, and interpretable unlearning in LLMs. DSG integrates Fisher Information-based feature selection to identify features causally linked to the forget data, with a dynamic, input-dependent classifier that triggers targeted feature clamping only when necessary. This conditional intervention acts as a guardrail, preventing the model from accessing specific knowledge pathways for relevant inputs while leaving general capabilities intact. Through extensive experiments on standard benchmarks, we show that DSG not only achieves a superior balance between forgetting and utility preservation compared to leading gradient-based and static SAE methods, but also directly addresses their core limitations.
Our main contributions are:
\begin{enumerate}
    \item We introduce DSG, a new activation-based unlearning method featuring principled SAE feature selection and a dynamic classifier for precise, conditional intervention.
    \item We demonstrate empirically that DSG achieves a superior balance between forgetting and utility preservation compared to leading gradient-based and SAE-based unlearning approaches on multiple benchmarks.
    \item We show that DSG provides substantial benefits over gradient-based unlearning such as greater \textbf{hyperparameter stability}, improved \textbf{computational efficiency}, and \textbf{sequential unlearning capability}, enhanced \textbf{resistance against relearning attacks}, enhanced \textbf{data efficiency even in the zero-shot setting} and  \textbf{interpretable unlearning}.
\end{enumerate}

\section{Background and Related Work}
\label{sec:background}

\subsection{Unlearning in Large Language Models}

\looseness=-1
Machine unlearning aims to modify a trained target model $\mathcal{M}(D)$ to behave as if specific data, the forget set $D_{forget}$, had never been part of its training data $D$ \citep{bourtoule2021machine, cao2015towards}. The resulting model, $\mathcal{M}_{unlearn}$, should ideally be indistinguishable from a model trained only on the retain set $D_{retain} = D \setminus D_{forget}$.  As retraining LLMs from scratch is computationally prohibitive, research focuses on \emph{approximate unlearning} \citep{liu2025rethinking}. These methods face the core challenge of balancing knowledge removal (\textit{forget quality}) and maintaining general capabilities (\textit{utility preservation}) \citep{shi2024muse, maini2024tofu}. 

\looseness=-1
The dominant approach for approximate unlearning involves gradient-based optimization \citep{wang2024rethinking}. Methods like Gradient Ascent (GA) \citep{jang2023knowledge}, Gradient Difference (GradDiff) \citep{liu2022backdoor}, Negative Preference Optimization (NPO) \citep{zhang2024negative}, and RMU \citep{li2024wmdp} finetune model weights to reduce the influence of $D_{forget}$, often using regularization (e.g., KL divergence) to protect utility \citep{maini2024tofu, yao2024machine}. However, these gradient-based techniques frequently suffer from significant limitations: high \textbf{computational cost} (requiring backward passes)~\citep{cha2024towards}, instability under \textbf{hyperparameter tuning}~\citep{thaker2024position, bu2024unlearning}, degraded performance under \textbf{sequential unlearning} requests~\citep{gao2025largelanguagemodelcontinual, shi2024muse}, vulnerability to \textbf{relearning attacks}~\citep{hu2024jogging, deeb2024unlearning, lucki2024adversarial}, poor \textbf{data efficiency}~\citep{gao2024practical}, and a lack of \textbf{interpretability}~\citep{barez2025open, xu2024machine}. These widespread challenges motivate the exploration of alternative unlearning paradigms, such as the activation-based interventions explored in this work.

\subsection{Sparse Autoencoders (SAEs)}

\looseness=-1
Modern DNNs operate in a regime of superposition, where multiple features or capabilities are encoded along the same dimensions of hidden activations \citep{elhage2022toy}. 
SAEs provide an unsupervised method for disentangling these superposed representations into interpretable features \citep{bricken2023towards, cunningham2023sparse}. Given activations $\mathbf{h} \in \mathbb{R}^{d_\text{model}}$ from a specific layer or component of an LLM, an SAE decomposes and reconstructs these activations using encoder and decoder functions: 
$\smash{f(\mathbf{h}) := \sigma(\mathbf{W}_\text{enc}\mathbf{h} + \mathbf{b}_\text{enc})}$ and $ \smash{\hat{\mathbf{h}}(f) := \mathbf{W}_\text{dec}f + \mathbf{b}_\text{dec}}$.
In other words, SAEs express model activations as a sparse linear combination of interpretable feature vectors: 
$\mathbf{h} = \hat{\mathbf{h}} + \varepsilon(\mathbf{h}) = \sum_{i=1}^{d_\text{SAE}} f_i(\mathbf{h})\mathbf{v}_i + \mathbf{b} + \varepsilon(\mathbf{h})$
where $f_i(\mathbf{h}) \in \mathbb{R}$ are scalar feature activations, $v_i \in \mathbb{R}^{d_\text{model}}$ are unit vector feature directions, $b \in \mathbb{R}^{d_\text{model}}$ is a bias term, and $\varepsilon(\mathbf{h}) \in \mathbb{R}^{d_\text{model}}$ is the SAE error term. Wider SAEs continue to improve feature granularity and reduce the error term.

\looseness=-1
SAEs are trained on activations collected from the model processing pretraining data where training is conducted separately for each layer or component of interest (e.g., residual stream, attention outputs) using an objective that minimizes reconstruction loss while enforcing sparsity: $
L = \|\mathbf{h} - \hat{\mathbf{h}}(f(\mathbf{h}))\|_2^2 + \lambda \|f(\mathbf{h})\|_0$. Here $\lambda$ is a sparsity penalty coefficient encouraging most feature activations to be zero for any given input. In this work, we use JumpReLU SAEs \citep{rajamanoharan2024jumping}, which enforce sparsity using a shifted Heaviside step function. 
The interpretability of SAE features stems from their sparse activation pattern—because features are only active for a small fraction of inputs, they must capture meaningful patterns to be useful for reconstruction.
The cost of training SAEs is amortized across multiple downstream applications such as identifying and removing spurious correlations in models \citep{marks2024sparse} and steering behavior \citep{o2024steering}.

\subsection{SAEs for Unlearning}

\looseness=-1
The interpretable nature of SAE features makes them a promising candidate for targeted unlearning interventions. By identifying features that encode or mediate the knowledge intended for removal, one could potentially intervene directly on these features' activations rather than resorting to less precise, global weight modifications.

\looseness=-1
\citet{farrell2024applying} explored this direction by developing an early SAE-based unlearning method. Their approach involved: (1) Identifying features based on their activation frequency (sparsity) on the forget set $D_{forget}$, while filtering out features that were also frequently active on a retain set $D_{retain}$. (2) Implementing a \emph{static} intervention strategy during inference: whenever any of the identified forget-related features produced a positive activation at a given token, its value was clamped to a fixed negative constant.

\looseness=-1
However, because this static, token-level clamping was applied whenever a target feature was activated, regardless of the overall context of the input sequence, the method suffered from degradation of the model's general utility on tasks unrelated to the forget set. Consequently, this initial approach underperformed established gradient-based methods like RMU in achieving a favorable balance between forget quality and utility preservation.

\looseness=-1
The shortcomings of static clamping motivated our development of DSG, which introduces two key innovations: (1) a more principled feature selection method grounded in Fisher Information and causal influence, aiming for better discrimination between forget-relevant and retain-relevant features, and (2) a \emph{dynamic}, sequence-level classification mechanism that applies the intervention conditionally, only when the input sequence as a whole is deemed relevant to the forget knowledge. As we demonstrate in this work, this dynamic approach allows SAEs to achieve more effective unlearning.

\subsection{Relearning Attacks}

\looseness=-1
A significant challenge for approximate unlearning methods, particularly those modifying model weights, is their vulnerability to \textbf{relearning attacks} \citep{deeb2024unlearning, lucki2024adversarial}. Adversaries may recover forgotten information by further finetuning the unlearned model, sometimes even using tangentially related data \citep{hu2024jogging}. This suggests gradient-based weight modifications might primarily suppress, rather than erase, knowledge, leaving models susceptible if finetuning is allowed.

\looseness=-1
Activation-based interventions like DSG represent a fundamentally different approach, operating on internal representations during inference rather than permanently modifying weights. This distinction offers potential security advantages in the common \textbf{API-based (black-box) threat model}, where users can query the model but cannot access its internal parameters. First, effective circumvention techniques like \textbf{obfuscation attacks} \citep{bailey2024obfuscated} typically require \textbf{white-box access} to model gradients and internal activations. Without this visibility, attackers in API-based settings find it difficult to craft inputs that bypass activation monitoring while still eliciting harmful outputs. Second, even when considering conventional \textbf{relearning attacks} through API-permitted finetuning, DSG shows improved resilience. This resilience stems from the \textbf{Superficial Alignment Hypothesis} \citep{zhou2023lima}, which posits that activation patterns remain relatively stable during finetuning while weights change more significantly. By targeting these stable activation structures rather than the more malleable weights, DSG is more resistant to standard finetuning-based relearning attacks.

\section{Dynamic SAE Guardrails (DSG)}
\label{sec:methodology}
\begin{wrapfigure}{r}{0.45\textwidth}
    \centering
     \vspace{-0.2in}
    \includegraphics[width=\linewidth]{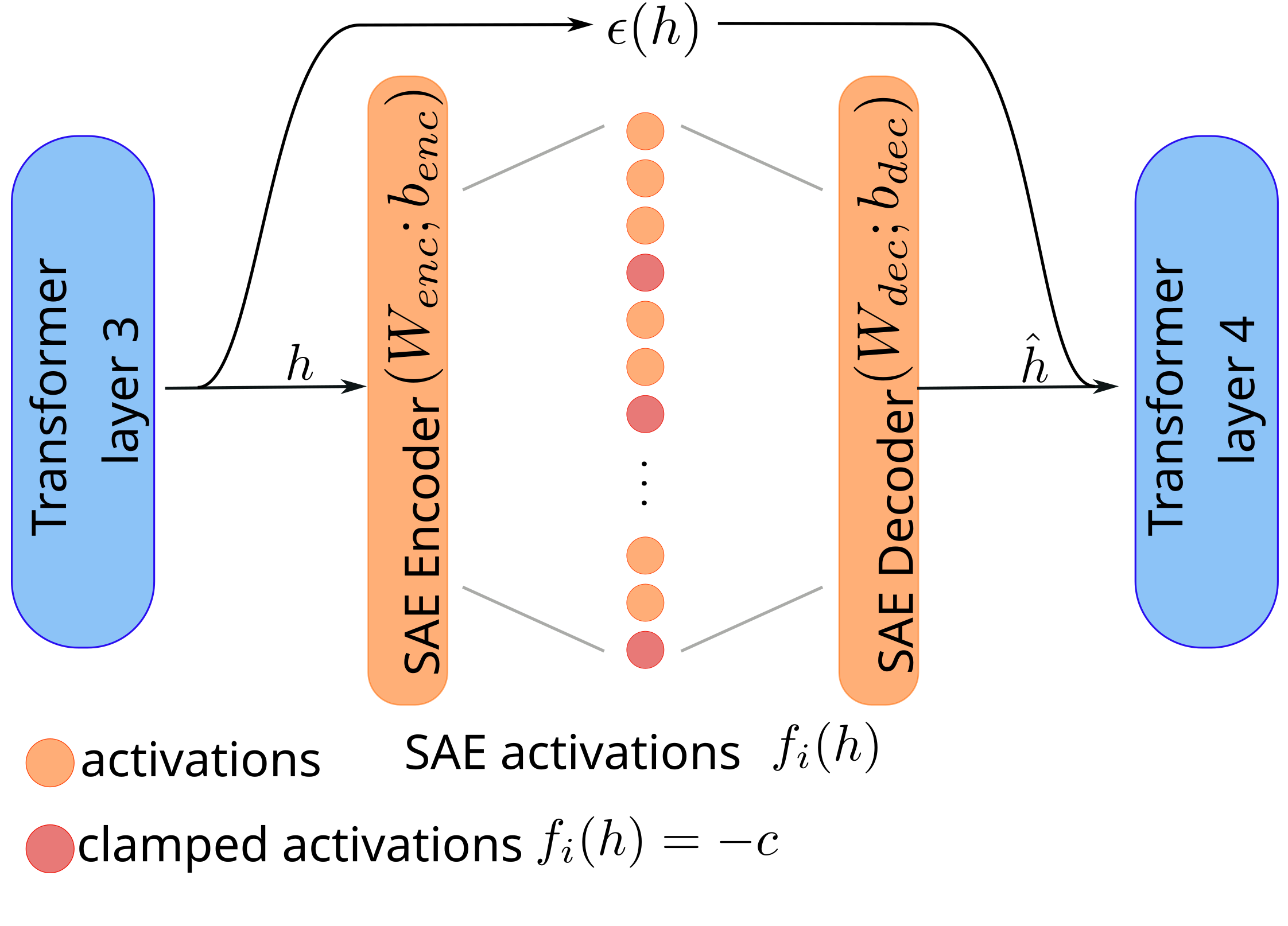}
    \vspace{-0.3in}
    \caption{\small An illustration of DSG}
    \label{fig:sae_plot}
    \vspace{-0.3in}
\end{wrapfigure}

DSG (\autoref{fig:sae_plot}, Algorithm \ref{alg:DSG}) is a targeted unlearning method for LLMs that leverages the interpretability of SAEs and combines: (1) a causal framing that motivates feature selection, (2) a theoretically justified feature importance scoring based on Fisher Information (FI), (3) a dynamic, input-dependent classification rule based on a statistically optimal threshold, and (4) a targeted clamping intervention to remove the influence of selected features.

\subsection{Causal Framework and Problem Setup}

We frame unlearning through a causal perspective where $\mathcal{D}_{\text{forget}}$ and $\mathcal{D}_{\text{retain}}$ influence model representations $\mathbf{E}$ and outputs $\mathbf{Y}$ through multiple pathways \citep{shen2024camu}. These include representation-mediated pathways ($\mathcal{D} \rightarrow \mathbf{E} \rightarrow \mathbf{Y}$), potential direct influence ($\mathcal{D} \rightarrow \mathbf{Y}$), and intertwined knowledge ($\mathcal{D}_{\text{forget}} \leftrightarrow \mathcal{D}_{\text{retain}}$) where conceptual overlap exists between the datasets.
SAE features $f_j$ derived from $\mathbf{E}$ serve as interpretable mediators  \citep{pearl2009causality} of information flow through these causal pathways. From this perspective, unlearning involves blocking pathways from $\mathcal{D}_{\text{forget}}$ to $\mathbf{Y}$ while preserving the pathways from $\mathcal{D}_{\text{retain}}$ to $\mathbf{Y}$. DSG implements this causal intervention $\mathrm{do}(f_j =-c)$ on forget features identified by analyzing SAE activation patterns across both datasets.

\subsection{Feature Selection: Identifying Causal Mediators via Fisher Information}
To identify which SAE features, $F_{\text{forget}}$ mediate the causal influence of $\mathcal{D}_{\text{forget}}$, 
we establish two key theoretical connections: first between FI and feature activation, and second between FI and causal influence. We then describe our percentile based feature selection.

\begin{theorem}[Fisher Information Approximation]
\label{thm:fisher_approx}
\looseness=-1
For an SAE with small reconstruction error and input $\mathbf{h}$, the expected squared gradient of reconstruction loss with respect to feature $j$'s decoder weights $\boldsymbol{\theta}_{j,\cdot} $ is proportional to the second moment of that feature's activation:
$\smash{
\mathbb{E}_{\mathbf{h}} [ \|\nabla_{\boldsymbol{\theta}_{i,\cdot} }\ell(\mathbf{h})\big\|^2 ] \approx \epsilon^2\mathbb{E}_{\mathbf{h}}[f_j(\mathbf{h})^2]}
$
where w.h.p, reconstruction errors are bounded by $\epsilon^2$.
\end{theorem}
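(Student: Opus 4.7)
The plan is to reduce the theorem to a direct decoder-gradient computation, and then invoke the high-probability reconstruction-error bound to factor the resulting expectation. First I would write $\ell(\mathbf{h}) = \|\mathbf{h} - \mathbf{b}_{\text{dec}} - \sum_{i} f_i(\mathbf{h})\mathbf{v}_i\|_2^2$ and identify $\boldsymbol{\theta}_{j,\cdot}$ with the $j$-th decoder column $\mathbf{v}_j$. Because the encoder activations $f_i(\mathbf{h})$ depend only on encoder weights, differentiation with respect to $\mathbf{v}_j$ touches only the $i=j$ term in the sum, giving $\nabla_{\mathbf{v}_j}\ell(\mathbf{h}) = -2\, f_j(\mathbf{h})\, \mathbf{r}(\mathbf{h})$ with residual $\mathbf{r}(\mathbf{h}) := \mathbf{h} - \hat{\mathbf{h}}(f(\mathbf{h}))$. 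Squaring and taking expectations yields the clean factorization $\mathbb{E}[\|\nabla_{\mathbf{v}_j}\ell\|^2] = 4\, \mathbb{E}[f_j(\mathbf{h})^2 \|\mathbf{r}(\mathbf{h})\|^2]$, and this is the identity on which the theorem rests.

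The second step is to convert the joint second moment $\mathbb{E}[f_j^2\|\mathbf{r}\|^2]$ into the factored form $\epsilon^2\,\mathbb{E}[f_j^2]$. I would introduce the event $A = \{\|\mathbf{r}(\mathbf{h})\|^2 \le \epsilon^2\}$, which by the SAE's high-probability reconstruction guarantee has $\Pr(A) \ge 1-\delta$ for small $\delta$. On $A$ the integrand is dominated by $\epsilon^2 f_j^2$; on $A^c$, a mild tail assumption on $f_j^2\|\mathbf{r}\|^2$ controls the residual at $O(\delta)$. Absorbing the constant $4$ and the vanishing tail into the proportionality symbol $\approx$ then delivers the stated approximation. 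An essentially equivalent route, closer to the Fisher-information folklore, is to treat $\epsilon^2 := \mathbb{E}[\|\mathbf{r}(\mathbf{h})\|^2]$ as the mean squared reconstruction error and invoke an approximate-independence/mean-field argument between the near-isotropic residual and the sparse activation, which yields the same factorization.

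The main obstacle is precisely this decoupling of $f_j^2$ and $\|\mathbf{r}\|^2$ inside the expectation: the factored form $\epsilon^2\,\mathbb{E}[f_j^2]$ is exact only when the two are uncorrelated or when $\|\mathbf{r}\|^2$ is almost surely bounded. For a well-trained SAE in the small-$\epsilon$ regime, the residual should behave as noise roughly orthogonal to the active feature directions, so one expects weak correlation with any individual $f_j$; still, rigorously justifying this requires either the uniform bound above or explicit distributional assumptions on the reconstruction error. A secondary bookkeeping point is confirming that no gradient flows through $f_j$ via the encoder when differentiating with respect to $\boldsymbol{\theta}_{j,\cdot}$, which is automatic because decoder columns do not appear in the encoder map, and reconciling the apparent typo that writes $\boldsymbol{\theta}_{i,\cdot}$ on the left while $f_j$ sits on the right in the theorem statement.
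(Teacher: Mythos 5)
Your proposal is correct and follows essentially the same route as the paper: differentiate the reconstruction loss with respect to the decoder row to get a gradient of the form (feature activation) $\times$ (residual), take the squared-norm expectation, and use the high-probability bound $\|\mathbf{r}(\mathbf{h})\|^2 \le \epsilon^2$ to split off a negligible tail term and factor the expectation as $\epsilon^2\,\mathbb{E}[f_j(\mathbf{h})^2]$. The only cosmetic difference is the constant (the paper's $\tfrac{1}{2}$ in the loss removes your factor of $4$), which is immaterial to the stated proportionality, and your observation about the $i$ vs.\ $j$ subscript mismatch in the theorem statement is accurate.
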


The proof of Theorem \ref{thm:fisher_approx} is provided in Appendix \ref{app:fisher_proof}. This establishes that squared feature activations are proportional to the Fisher Information of the corresponding decoder weights.

\begin{theorem}[Fisher Information as a Proxy for Causal Feature Importance]
\label{thm:fisher_causal}
Under standard assumptions, 
Fisher Information associated with SAE features provides an approximation of their causal influence as mediators between specific training data and model outputs. For any SAE feature 
$f_j$, 
the expected squared activation
$\mathbb{E}_{D}[f_j(\mathbf{h})^2]$ 
on dataset $D$ is proportional to the causal influence of that feature as a mediator of information from $D$ to model outputs.
\end{theorem}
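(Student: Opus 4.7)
The plan is to bridge the interventional statement of Theorem~\ref{thm:fisher_causal} with the activation-based statement of Theorem~\ref{thm:fisher_approx} through the standard information-theoretic identity that second-order perturbations of a log-likelihood are governed by Fisher Information. Concretely, I would define the causal influence of feature $j$ as a mediator from $D$ to $\mathbf{Y}$ via the expected KL divergence between the natural output distribution and the interventional distribution induced by $\mathrm{do}(f_j = c)$, then show that to leading order in the perturbation this divergence is proportional to $\mathbb{E}_D[f_j(\mathbf{h})^2]$. Theorem~\ref{thm:fisher_approx} then converts that second moment into the decoder-weight Fisher Information, closing the loop.

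First I would write the SAE-mediated pathway explicitly as $\mathcal{D} \rightarrow \mathbf{h} \rightarrow f(\mathbf{h}) \rightarrow \hat{\mathbf{h}} \rightarrow \mathbf{Y}$ and observe that, because the decoder is linear, an intervention $\mathrm{do}(f_j = c)$ modifies the downstream input by exactly the additive term $(c - f_j(\mathbf{h}))\mathbf{v}_j$, independent of the other active features. The high-probability reconstruction bound $\|\varepsilon(\mathbf{h})\| \le \epsilon$ from Theorem~\ref{thm:fisher_approx} ensures this is the only non-negligible perturbation propagated downstream. Taking $c$ near $0$ (the DSG clamp target) and Taylor-expanding $\log p(\mathbf{Y} \mid \hat{\mathbf{h}} + (c - f_j(\mathbf{h}))\mathbf{v}_j)$ to second order in the perturbation yields
\[
\mathbb{E}_D\!\left[\mathrm{KL}\!\left(p(\mathbf{Y}\mid \mathbf{h}) \,\|\, p_{\mathrm{do}(f_j=0)}(\mathbf{Y}\mid \mathbf{h})\right)\right] \;\approx\; \tfrac{1}{2}\,\mathcal{I}_{\mathbf{Y},j}\,\mathbb{E}_D[f_j(\mathbf{h})^2],
\]
where $\mathcal{I}_{\mathbf{Y},j} := \mathbf{v}_j^\top\, \mathbb{E}[-\nabla^2_{\hat{\mathbf{h}}} \log p(\mathbf{Y}\mid \hat{\mathbf{h}})]\, \mathbf{v}_j$ is a feature-specific Fisher Information coefficient that is intrinsic to the decoder direction and the downstream network, and crucially does \emph{not} depend on $D$. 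Because this coefficient is a common factor, the dataset-dependent part of the causal influence is carried entirely by $\mathbb{E}_D[f_j(\mathbf{h})^2]$, which is precisely the quantity Theorem~\ref{thm:fisher_approx} identifies with Fisher Information of the decoder weights.

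The main obstacle will be making "causal influence as a mediator" rigorous when SAE features are not mutually independent: a single-feature intervention $\mathrm{do}(f_j = c)$ must be well-defined in the presence of correlations with other simultaneously active features. I would handle this by leaning on the linearity of the decoder, which makes the effect on $\hat{\mathbf{h}}$ exactly $(c - f_j(\mathbf{h}))\mathbf{v}_j$ regardless of the values of the other $f_k$, and on the sparsity enforced by the JumpReLU so that with high probability only a small, controllable set of features is active at once, keeping the Taylor remainder bounded uniformly by a term of the same order as the $\epsilon^2$ reconstruction bound. The remaining "standard assumptions" invoked in the statement I would collect explicitly as: bounded Hessian of $\log p(\mathbf{Y}\mid \hat{\mathbf{h}})$ on the support of $\hat{\mathbf{h}}$, a well-defined score function for the output head, and the high-probability reconstruction guarantee inherited from Theorem~\ref{thm:fisher_approx}; under these, the proportionality $\mathrm{CausalInfluence}_D(f_j) \propto \mathbb{E}_D[f_j(\mathbf{h})^2]$ follows.
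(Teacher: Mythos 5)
Your proposal is correct in its overall architecture and matches the paper's proof at the structural level: both define the causal influence of $f_j$ as the expected KL divergence between the output distributions under $\mathrm{do}(f_j = f_j(\mathbf{h}))$ and $\mathrm{do}(f_j = 0)$, both show via a local expansion that this KL is quadratic in $f_j(\mathbf{h})$ with a dataset-independent coefficient, and both then take the expectation over $D$ and invoke Theorem~\ref{thm:fisher_approx} to close the loop. Where you differ is in how the quadratic coefficient arises. The paper posits a near-deterministic Gaussian output model $p(\mathbf{Y}\mid \mathrm{do}(f_j=\alpha)) = \mathcal{N}(\mathbf{g}_j(\alpha), \sigma^2\mathbf{I})$, uses the closed-form Gaussian KL, and Taylor-expands the response map $\mathbf{g}_j$ to first order, obtaining the constant $\|\nabla \mathbf{g}_j(0)\|^2/(2\sigma^2)$; you instead use the general second-order KL--Fisher identity in $\hat{\mathbf{h}}$-space, exploiting decoder linearity to write the intervention as the exact perturbation $(c - f_j(\mathbf{h}))\mathbf{v}_j$, which yields the coefficient $\mathbf{v}_j^\top \mathbb{E}[-\nabla^2_{\hat{\mathbf{h}}}\log p(\mathbf{Y}\mid\hat{\mathbf{h}})]\,\mathbf{v}_j$. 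Your route is somewhat more general (no Gaussian noise model needed) and makes the mediation pathway and the role of the decoder direction explicit, while the paper's route buys a fully closed-form KL at the cost of the stronger distributional assumption. Note that both arguments share the same weak point, which you should state rather than assert: the dataset-independence of the proportionality constant is an assumption in either case --- the paper bakes it in by letting $\mathbf{g}_j$ depend on $\alpha$ alone (``ignoring prior correlations with $\mathbf{h}$''), and your version requires the output Fisher quadratic form along $\mathbf{v}_j$ to be approximately constant over the support of $\hat{\mathbf{h}}$, which does not follow from boundedness of the Hessian alone. Since the theorem is explicitly an ``under standard assumptions'' approximation result, this is not a gap relative to the paper's own level of rigor, but it should be listed among your collected assumptions.
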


\looseness=-1
Proof of Theorem \ref{thm:fisher_causal} is in Appendix \ref{app:causal_connection}. 
Under these assumptions, a feature with large expected squared activation on $\mathcal{D}_{\text{forget}}$ contributes significantly to the model's FI \emph{with respect to that data}. This suggests that intervening on that feature (e.g., clamping its activation) would substantially affect the model's output distribution when processing inputs similar to those in $\mathcal{D}_{\text{forget}}$. 
Squared activation serves as a computationally tractable proxy for causal influence.

\paragraph{Importance Scores.}  
DSG obtains token-level SAE activations from each sequence in both $\mathcal{D}_{\text{forget}}$ and $\mathcal{D}_{\text{retain}}$, squares them, and aggregates the results into matrices $\smash{\mathbf{A}_{\text{forget}} \in \mathbb{R}^{n_F \times d_{\text{SAE}}}}$ and $\smash{\mathbf{A}_{\text{retain}} \in \mathbb{R}^{n_R \times d_{\text{SAE}}}}$ ($n_F$ and $n_R$ are the total numbers of tokens in the respective datasets). 
For each token $t$ in sequence $x$, we have the activation $\smash{f_j(\mathbf{h}_{x,t})}$ of feature $j$ on the hidden state $\smash{\mathbf{h}_{x,t}}$. Each entry of the activation matrices is thus $\smash{\mathbf{A}_{\text{forget}}[i,j]\approx\bigl[f_j(\mathbf{h}_{x,t})\bigr]^2}$ for a token $t$ in sequence $x \in \mathcal{D}_{\text{forget}}$ (and similarly for $\mathbf{A}_{\text{retain}}$). 
From these, we compute the average squared activation per feature as $\smash{\texttt{forget\_score}(j)=   1/n_F \sum_{x \in \mathcal{D}_{\text{forget}}}\sum_{t=1}^{|x|}\bigl[f_j(\mathbf{h}_{x,t})\bigr]^2}$ and $\smash{\texttt{retain\_score}(j)= 1/n_R \sum_{x \in \mathcal{D}_{\text{retain}}}\sum_{t=1}^{|x|}\bigl[f_j(\mathbf{h}_{x,t})\bigr]^2}$, and define the relative importance by $\texttt{imp\_ratio}(j)=\frac{\texttt{forget\_score}(j)}{\max\{\texttt{retain\_score}(j),\varepsilon\}}$, with $\varepsilon>0$ to avoid division by zero. By Theorem~\ref{thm:fisher_causal} this ratio represents the relative causal influence of feature $j$.

\begin{algorithm}[htbp]
\caption{\small \textbf{Dynamic SAE Guardrails (DSG)}}
\label{alg:DSG}
\begin{tcolorbox}[
  colback=algobackground,
  colframe=algoborder,
  boxrule=0pt,
  arc=1pt,
  left=0pt,
  right=0pt,
  top=0pt,
  bottom=0pt,
  width=\textwidth,
  toptitle=0pt,
  bottomtitle=0pt
  
]
{\small
\begin{algorithmic}[0]
\REQUIRE LLM with SAE features $\{f_j\}$; datasets $\mathcal{D}_{\text{forget}}, \mathcal{D}_{\text{retain}}$; clamp strength $c$; percentiles $(p_{\text{ratio}}, p_{\text{dyn}})$; feature count $n_{\text{feats}}$
\STATE \textbf{Feature Selection:} 
\STATE \hspace{0.5em} Compute feature importance scores and threshold $\tau_{\text{ratio}}$ from percentiles
\STATE \hspace{0.5em} Identify $F_{\text{forget}} = \{j: \texttt{imp\_ratio}(j) \geq \tau_{\text{ratio}}\}$
\STATE \hspace{0.5em} Sort $F_{\text{forget}}$ by descending $\texttt{forget\_score}(j)$ and select top $n_{\text{feats}}$ features to form $S_{n_{\text{feats}}}$
\STATE \textbf{Dynamic Threshold Calibration:} 
\STATE \hspace{0.5em} Compute $\rho(x) = \frac{1}{|x|}\sum_{t} \mathbf{1}[\exists j\in S_{n_{\text{feats}}}: f_j(\mathbf{h}_t)>0]$ for each $x \in \mathcal{D}_{\text{retain}}$
\STATE \hspace{0.5em} Set threshold $\tau = \text{Percentile}(\{\rho(x)\}_{x\in \mathcal{D}_{\text{retain}}}, p_{\text{dyn}})$
\STATE \textbf{Inference-Time Intervention:} 
\STATE \hspace{0.5em} For input sequence $x$, compute $\rho(x)$ and classify as forget-relevant if $\rho(x) > \tau$
\STATE \hspace{0.5em} If forget-relevant: For each token $t$ and feature $j \in S_{n_{\text{feats}}}$, set $f_j'(\mathbf{h}_t) = -c$
\STATE \hspace{0.5em} Otherwise: Preserve all feature activations
\end{algorithmic}
}
\end{tcolorbox}
\end{algorithm}

\paragraph{Percentile-Based Feature Selection.}
To select features most causally relevant to $\mathcal{D}_{\text{forget}}$, we employ a percentile-based approach using $p_{\text{ratio}}$ to compute $\tau_{\text{ratio}}$ as $\smash{\text{Percentile}({\{\texttt{imp\_ratio}(j)\}_{j=1}^{d_{\text{SAE}}}}, p_{\text{ratio}})}$. $\text{Percentile}(S, p)$ returns the value $v$ such that $p$\% of elements in set $S$ are less than or equal to $v$. For example, with $p_{\text{ratio}} = 95$, $\tau_{\text{ratio}}$ is set so 95\% of features have $\texttt{imp\_ratio}(j) \leq \tau_{\text{ratio}}$. We define the set of forget-mediating features as $F_{\text{forget}} = \{j : \texttt{imp\_ratio}(j)\ge\tau_{\text{ratio}}\}$. To filter out noisy features, we sort features in $F_{\text{forget}}$ by descending $\texttt{forget\_score}(j)$ and select the top $n_{\text{feats}}$ to form the final intervention set $S_{n_{\text{feats}}}$.

\subsection{Dynamic Sequence-Level Classification and Intervention}
\label{sec:dynamic}
DSG employs a dynamic input-dependent classification mechanism to minimize unintended side-effects on content unrelated to the forget knowledge. 
\begin{definition}[Forget-Set Activated Token]
A token $x_t$ is considered \emph{forget-set activated} if at least one feature $j \in S_{n_{\text{feats}}}$ has a positive activation: $\smash{f_j(\mathbf{h}_t) > 0}$.
\end{definition}

For an input sequence $\smash{x=(x_1,\dots,x_T)}$ of length $T$, we define the statistic $\smash{\rho(x) = \frac{1}{T}\sum_{t=1}^T \mathbf{1}[\exists\,j\in S_{n_{\text{feats}}}:\; f_j(\mathbf{h}_t)>0]}$, representing the percentage of forget-set activated tokens. A high $\rho(x)$ indicates that query $x$ strongly relies on features we've identified as causally linked to the forget knowledge.

\begin{wrapfigure}{r}{0.5\textwidth}
    \centering
    \includegraphics[width=\linewidth]{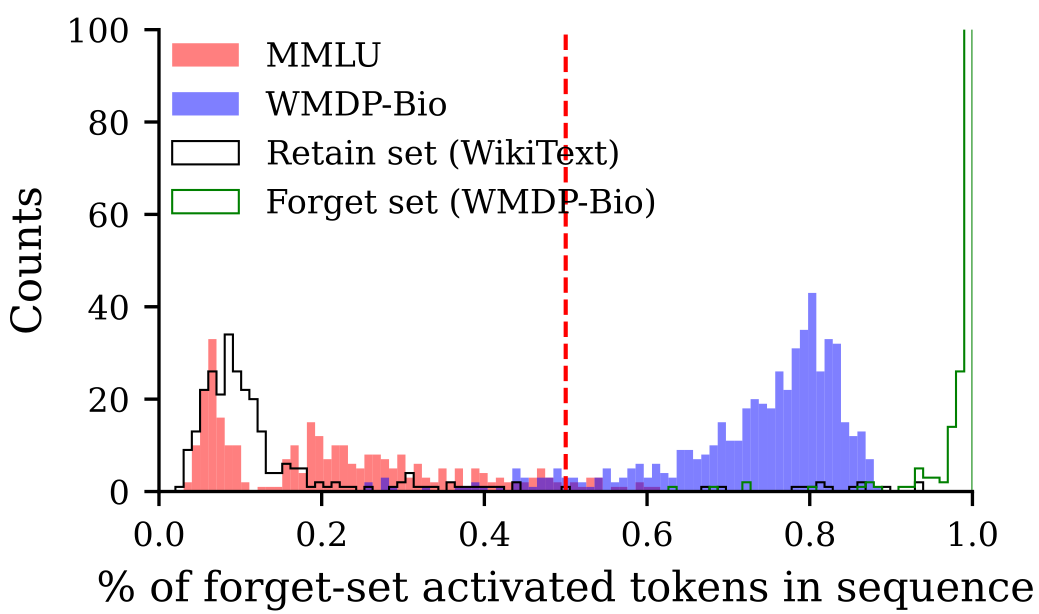}
    \caption{\small Distribution of $\rho(x)$ for unlearning on WMDP-Bio. Threshold at 95th percentile (dashed red line) separates MMLU from WMDP.}
    \label{fig:bio_token}
    \vspace{-0.2in}
\end{wrapfigure}

\looseness=-1
\paragraph{Threshold Selection and Classification.} We select a threshold $\tau \in [0, 1]$ based on the distribution of $\rho(x)$ on $\mathcal{D}_{\text{retain}}$ using $\tau = \text{Percentile}(\{\rho(x)\}_{x\in \mathcal{D}_{\text{retain}}}, p_{\text{dyn}})$ which controls the trade-off between unlearning effectiveness and performance preservation.

Empirically, we find $\rho(x)$ is stochastically larger on $\mathcal{D}_{\text{forget}}$ than on $\mathcal{D}_{\text{retain}}$, as seen in Figure~\ref{fig:bio_token}, which shows the distribution for both forget-domain queries (WMDP-Bio) and general knowledge queries (MMLU). $\tau$ is chosen to control the retain set's false-positive rate and separates forget-set queries effectively, achieving high recall on $\mathcal{D}_{\text{forget}}$. On this example, DSG successfully transfers from retain set (WikiText) and forget set to the test query set. We define classifier $C(x) = \mathbf{1}[\rho(x) > \tau]$, labeling inputs as either \emph{forget-relevant} or \emph{retain-relevant}.

\paragraph{Conditional Clamping.} Our intervention is conditional on the classifier $C(x)$. When $C(x) = 1$ (forget-relevant), for each token $x_t$ and feature $j \in S_{n_{\text{feats}}} $, %
we set $\smash{f'_j(\mathbf{h}_t) = -c}$, where $-c$ is a large negative constant we call \emph{clamp strength}. This implements a targeted $\smash{do(f_j(\mathbf{h}_t) = -c)}$ operation, selectively severing the causal pathway \emph{only when the input query is deemed forget-relevant}. When $C(x) = 0$ (retain-relevant), we leave all features unchanged: $\smash{f'_j(\mathbf{h}_t) = f_j(\mathbf{h}_t)}$. This preserves the model's original behavior for queries unrelated to the targeted knowledge, minimizing side-effects and maintaining performance on $\mathcal{D}_{\text{retain}}$.

\begin{theorem}[Neyman-Pearson Optimality]
\label{thm:neyman_pearson}
If $\rho(X)$ is stochastically larger under $\mathcal{D}_{\text{forget}}$ than under $\mathcal{D}_{\text{retain}}$, then among all classifiers with a false-positive rate at most $\alpha$, the threshold test $\smash{C^*(x) = \mathbf{1}[\rho(x) > \tau^*]}$, where $\smash{\Pr_{X \sim \mathcal{D}_{\text{retain}}}[\rho(X) > \tau^*] = \alpha}$, \emph{maximizes} the true-positive rate.
\end{theorem}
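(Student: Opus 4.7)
The plan is to reduce the problem to a one-dimensional comparison along the threshold parameter by exploiting the monotonicity of survival functions, and to invoke the stochastic dominance hypothesis at the end to certify that the resulting test is informative. Let $\bar F_{\text{forget}}(\tau) := \Pr_{X \sim \mathcal{D}_{\text{forget}}}[\rho(X) > \tau]$ and $\bar F_{\text{retain}}(\tau) := \Pr_{X \sim \mathcal{D}_{\text{retain}}}[\rho(X) > \tau]$ denote the survival functions of $\rho(X)$; these coincide with the TPR and FPR of the threshold test $C_\tau(x) = \mathbf{1}[\rho(x) > \tau]$, and both are non-increasing in $\tau$ as complements of CDFs.

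First I would identify the relevant competitors. DSG is parameterized by its threshold, so the natural comparison class is the one-parameter family of threshold rules on $\rho$. By monotonicity the feasibility set $\{\tau : \bar F_{\text{retain}}(\tau) \leq \alpha\}$ equals $[\tau^*, \infty)$, with $\tau^*$ defined by $\bar F_{\text{retain}}(\tau^*) = \alpha$. Next, since $\bar F_{\text{forget}}(\tau)$ is itself non-increasing, on this feasible set it is maximized at the left endpoint $\tau = \tau^*$, giving $\text{TPR}(C^*) \geq \text{TPR}(C_\tau)$ for every feasible $\tau$. Finally, evaluating the stochastic dominance hypothesis at $\tau = \tau^*$ yields $\text{TPR}(C^*) = \bar F_{\text{forget}}(\tau^*) \geq \bar F_{\text{retain}}(\tau^*) = \alpha$, certifying that the maximum power is at least the size $\alpha$ and hence non-trivially better than chance.

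The main obstacle is the phrase ``all classifiers'' in the statement: this wording is strictly broader than the family of threshold rules on $\rho$. Under stochastic dominance alone, a measurable classifier $C : \mathcal{X} \to \{0,1\}$ may have a likelihood ratio $dF_{\text{forget}}/dF_{\text{retain}}$ that is non-monotone in $\rho$, in which case the Neyman-Pearson lemma delivers an optimal test that need not agree with any threshold on $\rho$; a monotone-likelihood-ratio hypothesis would be needed to promote the threshold test to full Neyman-Pearson optimality. I would therefore be explicit in the write-up that the optimality is among threshold rules on $\rho$ (the family DSG actually implements), and frame stochastic dominance as precisely the minimal condition that makes the best member of this family a useful detector. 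Any attempt to extend the conclusion to arbitrary measurable classifiers without strengthening the hypothesis will fail, and this is the step where the argument cannot be pushed further without additional assumptions.
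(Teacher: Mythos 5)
Your proposal is correct as far as it goes, but it deliberately proves a weaker statement than the theorem, and the comparison with the paper's own proof is instructive: the paper does attempt the full claim over \emph{all} classifiers, and the step where it does so is exactly the step you flagged as impossible. Concretely, the paper's argument takes an arbitrary acceptance region $A$ with $\Pr_{\mathcal{D}_{\text{retain}}}[A]\le\alpha$ and asserts that ``exchanging'' low-$\rho$ points of $A$ for high-$\rho$ points outside $A$, at equal retain-probability, can never decrease the forget-probability, attributing this to stochastic dominance. That exchange step silently requires a monotone-likelihood-ratio (in $\rho$) property: stochastic dominance constrains only the two marginal laws of the scalar $\rho(X)$, not the likelihood ratio on level sets of $\rho$, so a swap toward higher $\rho$ can strictly lose forget-mass. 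A three-point counterexample makes this concrete: let $\rho\in\{1,2,3\}$ with retain probabilities $(1/3,1/3,1/3)$ and forget probabilities $(0,2/3,1/3)$; dominance holds, yet at $\alpha=1/3$ the threshold test (clamp only when $\rho=3$) has true-positive rate $1/3$, while the classifier that clamps exactly on $\{\rho=2\}$ has the same false-positive rate and true-positive rate $2/3$. So the theorem as stated for arbitrary measurable classifiers does not follow from the stated hypothesis, and the paper's proof does not close this gap. Your route---monotone survival functions, feasible set $[\tau^*,\infty)$, optimum at the left endpoint, then dominance to certify $\mathrm{TPR}\ge\alpha$---is sound and correctly scopes the conclusion to the family $\{\mathbf{1}[\rho(x)>\tau]\}$ that DSG actually implements; what it gives up is precisely the global optimality claim, which would need either an MLR hypothesis on $\rho$ (whereupon the classical Neyman--Pearson/Karlin--Rubin argument yields the stated conclusion) or a restatement of the comparison class as you propose. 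Note also that the dominance result in Theorem~\ref{appthm:dominance} inherits the same issue, since its Step~3 invokes optimality of $\{\rho>\tau^*\}$ over \emph{all} sets of the given retain-measure, including the static clamp set $B_{\text{stat}}$, which need not be a threshold set in $\rho$.
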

\looseness=-1
The proof appears in Appendix \ref{app:neyman_pearson_proof} and states that under the stochastic dominance assumption, thresholding $\rho(x)$ is the optimal classification approach for a given false-positive rate.

\looseness=-1
The dynamic clamping in DSG contrasts with static clamping methods \citep{farrell2024applying}, which intervene based only on feature activation without sequence-level classification, and risk inadequate coverage on $\mathcal{D}_{\text{forget}}$ or excessive side-effects on $\mathcal{D}_{\text{retain}}$. DSG avoids this suboptimal trade-off---we formally prove (Theorem \ref{appthm:dominance}, Appendix \ref{app:dominance_proof}) that for any static approach, DSG achieves equal or greater coverage on $\mathcal{D}_{\text{forget}}$ with equivalent side-effects on $\mathcal{D}_{\text{retain}}$, \textbf{providing a superior unlearning-utility trade-off}.

\section{Experiments and Results}

\subsection{Unlearning on WMDP}
\label{sec:unlearningWMDP}

We evaluate DSG on the WMDP dataset \citep{li2024wmdp}, which benchmarks hazardous knowledge unlearning across multiple domains. We focus on  WMDP-Bio (biosecurity) and WMDP-Cyber (cybersecurity). For each domain, our unlearning setup uses domain-specific $\mathcal{D}_{\text{forget}}$—PubMed papers containing bio-weapon related content for WMDP-Bio and GitHub repositories for WMDP-Cyber—and WikiText \citep{merity2016pointer} as $\mathcal{D}_{\text{retain}}$. We evaluate unlearning effectiveness using the WMDP multiple-choice question test sets, which were not exposed to models during the unlearning process.

Following SAEBench \citep{karvonen2025saebench}, we evaluate unlearning only on questions the target model correctly answers across all 24 permutations of the 4 multiple-choice options. This yields 522/1273 questions for WMDP-Bio and 275/1987 questions for WMDP-Cyber.
For evaluating model utility, we similarly filter MMLU questions that the model answers correctly across all permutations. This yields 305 questions from history, computer science, geography, and human aging for WMDP-Bio. For WMDP-Cyber, we use 371 MMLU questions, replacing computer science with  biology.
Table~\ref{tab:bio_performance} reports the configuration that minimizes WMDP accuracy while maintaining at least 99\% of the target model MMLU accuracy along with with MT-Bench scores that measure general fluency.

\paragraph{Experimental Setup.} 
We implement DSG using \texttt{gemma-2-2b-it} model with \texttt{gemma-scope-2b-pt-res} SAE (width 16k) \citep{lieberum2024gemma} applied to layer 3 at $\ell_0$ 142. We use $P_{dyn}=95$ for both domains, and $P_{ratio}=95$ for WMDP-Bio and $P_{ratio}=90$ for WMDP-Cyber.
We compare DSG against several baselines across a broad hyperparameter sweep:
GA \citep{jang2023knowledge}, NPO \citep{zhang2024negative}, SSD \citep{foster2024fast}, SCRUB \citep{kurmanji2023towards},\citet{farrell2024applying} and RMU \citep{li2024wmdp}. Complete hyperparameter details are provided in \autoref{app:unlearning-wmdp}.

\begin{figure}[htb]
    \centering
    \begin{subfigure}{0.495\textwidth}
        \centering
        \includegraphics[width=\textwidth]{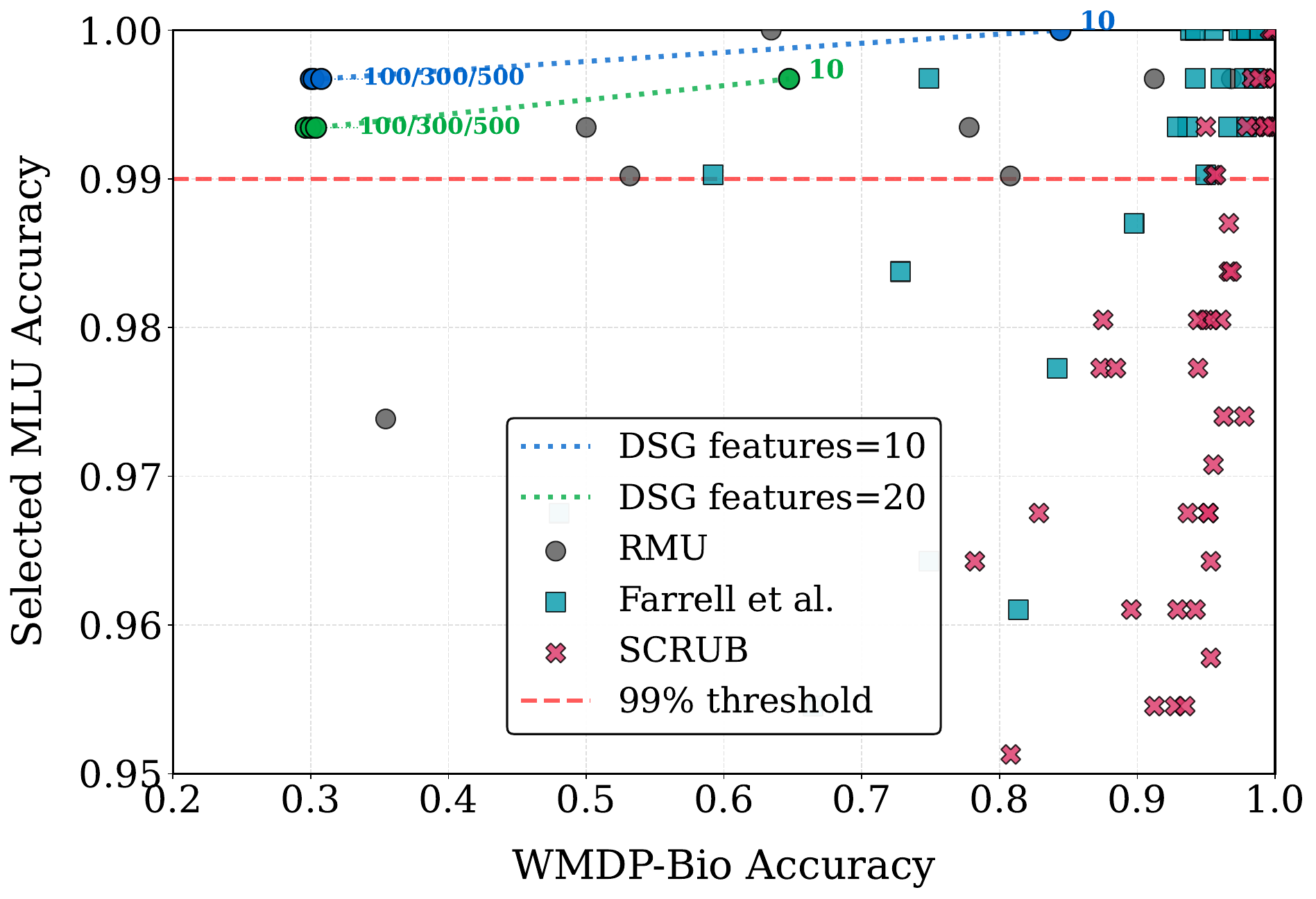}
        \label{fig:wmdp_bio_main}
    \end{subfigure}
    \hfill
    \begin{subfigure}{0.495\textwidth}
        \centering
        \includegraphics[width=\textwidth]{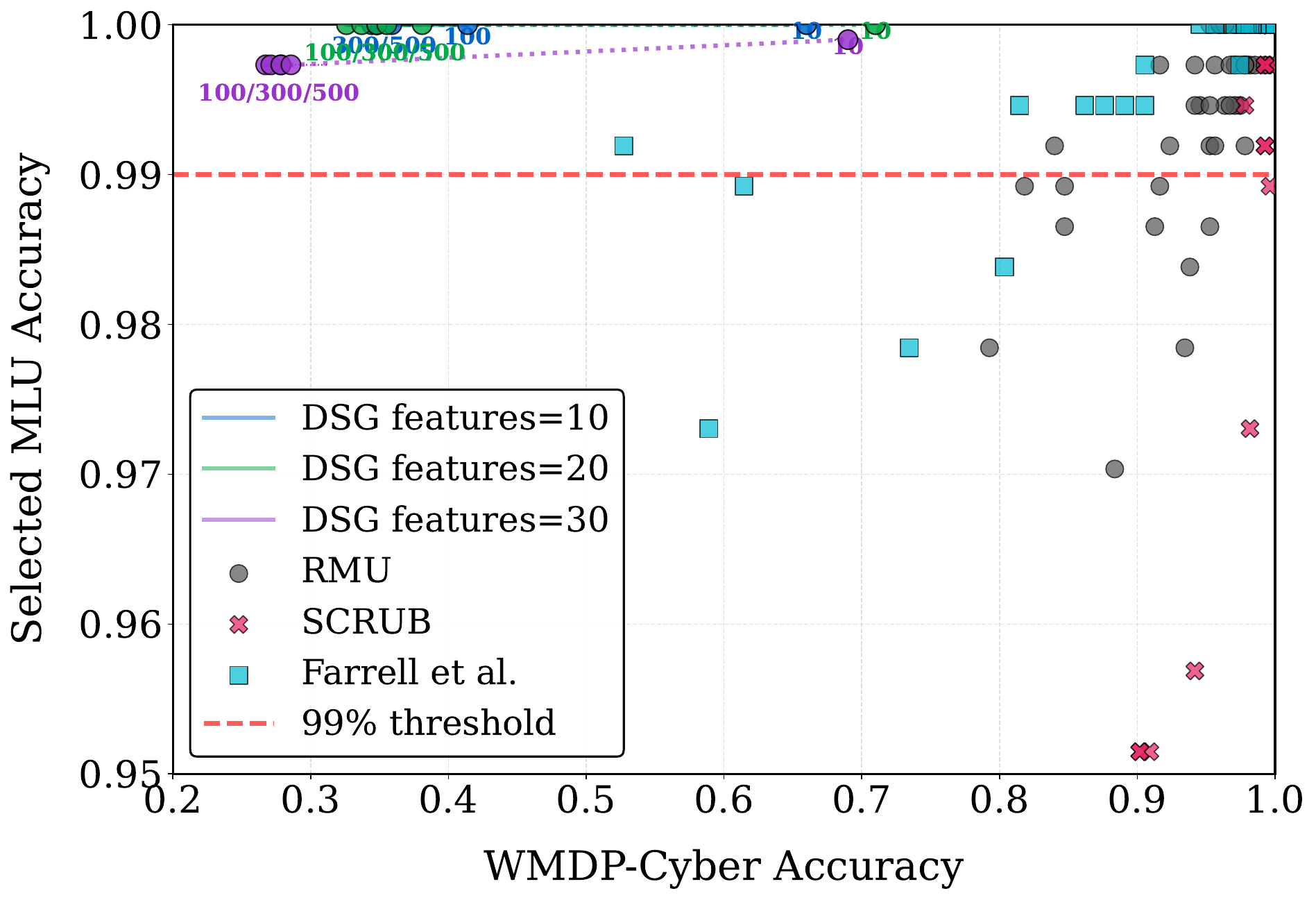}
        \label{fig:your_second_figure}
    \end{subfigure}
    \vspace{-0.3in}
    \caption{\small 
    Unlearning performance on WMDP-Bio (left) and WMDP-Cyber (right). Higher MMLU accuracy and lower WMDP accuracy is better. Clamp strengths ($c$) used for DSG points are shown as annotations. DSG Pareto-dominates the top four baseline methods (RMU, SCRUB, Farrell et al., SSD).}
    \label{fig:combined_figures}
\end{figure}

\begin{table}[htb]
\centering
\scriptsize
\begin{tabular}{l cccccc c}
\toprule
\multirow{2}{*}{Method} & \multirow{2}{*}{WMDP Bio ($\downarrow$)} & \multicolumn{5}{c}{MMLU ($\uparrow$)} & \multirow{2}{*}{MT ($\uparrow$)} \\
\cmidrule(lr){3-7} 
 & & HS Hist & C. CS & HS Geo & H. Aging & All & \\
\midrule
Target $\mathcal{M}$ & 100.00 & 100.00 & 100.00 & 100.00 & 100.00 & 100.00 & 7.36 \\
\midrule
GA & 99.44 & 98.18 & 100.00 & 100.00 & 100.00 & 99.35 & 7.44 \\
NPO & 97.95  & 99.99  & 88.88 & 100.00& 98.82 & 99.35 & 7.29 \\
SSD & 99.44 & 100.00 & 100.00 & 100.00 & 98.82 & 99.68 & 7.24 \\
SCRUB & 94.97 & 99.09 & 100.00 & 100.00 & 98.82 & 99.35 & 6.09 \\
Farrell et al. & 59.22 & 100.00 & 100.00 & 100.00 & 96.47 & 99.03 & 7.33 \\
RMU & 50.00 & 99.08 & 100.00 & 100.00 & 98.81 & 99.47 & 7.21 \\
\midrule
\textbf{DSG (Ours)} & \textbf{29.64} & 100.00 & 100.00 & 100.00 & 97.62 & 99.34 & \textbf{7.78} \\
\bottomrule
\end{tabular}
\caption{\small Unlearning performance on WMDP-Bio. All represents the average MMLU score. MT-Bench scores show 0.16 variance across 5 runs. DSG shows superior unlearning effectiveness compared to baselines while maintaining high MMLU performance. }

\label{tab:bio_performance}
\end{table}

\paragraph{Results.} 

As shown in Table~\ref{tab:bio_performance}, DSG significantly outperforms all baselines on the WMDP-Bio unlearning task, reducing accuracy to 29.64\% compared to the next best method RMU at 50.00\%. It maintains high MMLU performance (99.34\% average) and achieves the highest MT-Bench score (7.78), showing superior preservation of general model capabilities.
The results on WMDP-Cyber (Appendix~\ref{app:wmdp-cyber}) reinforce these findings. 
Figure~\ref{fig:combined_figures} provides a more comprehensive view of the unlearning-utility trade-off landscape, plotting all configurations with MMLU accuracy above 95\%. DSG Pareto-dominates all baseline methods: for any level of utility preservation (MMLU accuracy), DSG achieves more effective unlearning.

\looseness=-1
This superior performance is coupled with significant practical advantages over gradient-based methods in terms of \textbf{computational efficiency and hyperparameter stability}. Gradient-based approaches often exhibit \textbf{hyperparameter instability}, where slight tuning changes can drastically alter outcomes, risking poor unlearning or utility collapse. Furthermore, they require computationally costly backward passes through the LLM for optimization. In contrast, DSG shows greater \textbf{hyperparameter stability} (\autoref{fig:combined_figures}) and efficiency. It requires only forward passes: one to gather feature statistics initially, and then lightweight intervention during inference, completely avoiding expensive gradient calculations. This combination of efficiency and stability makes DSG particularly advantageous for large models and frequent unlearning where gradient computations impose substantial overhead.

\subsection{Unlearning on Muse}

\looseness=-1
We evaluate DSG on MUSE \citep{shi2024muse} comprising two corpora: NEWS and BOOKS, and focusing on six dimensions: verbatim memorization, knowledge memorization, privacy leakage, utility preservation, forget set scalability, and sequential unlearning.

\begin{table}[tb]
\centering
\small
\setlength{\tabcolsep}{8pt}
\renewcommand{\arraystretch}{0.75}
\resizebox{\linewidth}{!}{ 
\begin{tabular}{lrlrlrlrl}
\toprule
 & \multicolumn{2}{c}{\small \textbf{C1. No Verbatim Mem.}} & \multicolumn{2}{c}{\small \textbf{C2. No Knowledge Mem.}} & \multicolumn{2}{c}{\small \textbf{C3. No Privacy Leak.}}  & \multicolumn{2}{c}{\small \textbf{C4. Utiltiy Preserv.}} \\
 & \multicolumn{2}{c}{\small \textsf{VerbMem} on $\mathcal{D}_{\text{forget}}$ ($\downarrow$)} & \multicolumn{2}{c}{\small \textsf{KnowMem} on $\mathcal{D}_{\text{forget}}$ ($\downarrow$)} & \multicolumn{2}{c}{\small \textsf{PrivLeak} ($\in [-5\%, 5\%]$)} & \multicolumn{2}{c}{\small \textsf{KnowMem} on $\mathcal{D}_{\text{retain}}$ ($\uparrow$)} \\
\midrule
\multicolumn{9}{c}{\cellcolor{gray!10}\textbf{\textsc{News}}} \\ 
Target $\mathcal{M}$  &
$21.15$& &
$29.51$& &
$-88.16$& &
$26.78$\\
\midrule
\textsf{GA}&
0.62& \colorbox{green!20}{$\downarrow97.1\%$}&
0.00& \colorbox{green!20}{$\downarrow100.0\%$}&
-8.16& \colorbox{red!20}{\small \textsf{under-unlearn}}&
0.09& \colorbox{red!20}{$\downarrow99.7\%$}\\
\textsf{GradDiff}&
2.81& \colorbox{green!20}{$\downarrow86.7\%$}&
0.71& \colorbox{green!20}{$\downarrow97.6\%$}&
93.10& \colorbox{red!20}{\small \textsf{over-unlearn}}&
7.76& \colorbox{red!20}{$\downarrow71.0\%$}\\
\textsf{NPO}&
20.98& \colorbox{green!20}{$\downarrow0.8\%$}&
25.14& \colorbox{green!20}{$\downarrow14.8\%$}&
-53.42& \colorbox{red!20}{\small \textsf{under-unlearn}}&
29.02& \colorbox{green!20}{$\uparrow8.4\%$}\\
\textsf{SimNPO}&
21.14& \colorbox{green!20}{$\downarrow0.0\%$}&
27.70& \colorbox{green!20}{$\downarrow6.1\%$}&
-89.84& \colorbox{red!20}{\small \textsf{under-unlearn}}&
30.59& \colorbox{green!20}{$\uparrow14.2\%$}\\
\textsf{RMU}&
9.60& \colorbox{green!20}{$\downarrow54.6\%$}&
26.63& \colorbox{green!20}{$\downarrow9.8\%$}& 
75.02& \colorbox{red!20}{\small \textsf{over-unlearn}}&
26.41& \colorbox{red!20}{$\downarrow1.4\%$}\\
\textsf{\textbf{DSG (Ours)}}&
11.80& \colorbox{green!20}{$\downarrow44.2\%$}&
0.44& \colorbox{green!20}{$\downarrow98.5\%$}& 
12.08& \colorbox{red!20}{\small \textsf{over-unlearn}}&
25.65& \colorbox{red!20}{$\downarrow4.2\%$}\\
\midrule
\multicolumn{9}{c}{\cellcolor{gray!10}\textbf{\textsc{Books}}} \\
Target $\mathcal{M}$ &
$15.80$& &
$33.90$& &
$-98.80$& &
$35.28$ \\
\midrule
\textsf{GA}&
2.61& \colorbox{green!20}{$\downarrow83.5\%$}&
0.17& \colorbox{green!20}{$\downarrow99.5\%$}&
-1.58& \colorbox{green!20}{\small \textsf{acceptable}}&
0.57& \colorbox{red!20}{$\downarrow98.4\%$}\\
\textsf{GradDiff}&
9.49& \colorbox{green!20}{$\downarrow39.9\%$}&
21.57& \colorbox{green!20}{$\downarrow36.4\%$}&
-10.30& \colorbox{red!20}{\small \textsf{under-unlearn}}&
23.66& \colorbox{red!20}{$\downarrow32.9\%$}\\
\textsf{NPO}&
14.41& \colorbox{green!20}{$\downarrow8.8\%$}&
28.21& \colorbox{green!20}{$\downarrow16.8\%$}&
-97.24& \colorbox{red!20}{\small \textsf{under-unlearn}}&
37.19& \colorbox{green!20}{$\uparrow5.4\%$}\\
\textsf{SimNPO}&
14.55& \colorbox{green!20}{$\downarrow7.9\%$}&
34.36& \colorbox{red!20}{$\uparrow1.4\%$}&
-96.40& \colorbox{red!20}{\small \textsf{under-unlearn}}&
36.62& \colorbox{green!20}{$\uparrow3.8\%$}\\
\textsf{RMU}&
14.89& \colorbox{green!20}{$\downarrow5.8\%$}&
32.59& \colorbox{green!20}{$\downarrow3.9\%$}& 
-97.58& \colorbox{red!20}{\small \textsf{under-unlearn}}&
37.13& \colorbox{green!20}{$\uparrow5.2\%$}\\
\textsf{\textbf{DSG (Ours)}}&
8.73& \colorbox{green!20}{$\downarrow44.7\%$}&
1.79& \colorbox{green!20}{$\downarrow94.7\%$}& 
-23.18& \colorbox{red!20}{\small \textsf{under-unlearn}}&
37.10& \colorbox{green!20}{$\uparrow5.2\%$}\\
\bottomrule 
\end{tabular}}
\vspace{-0.08in}
\caption{ \setlength{\fboxsep}{0pt} \small \looseness=-1 Unlearning performance on MUSE. We highlight in \colorbox{green!20}{ green}  if the method satisfies the criterion and \colorbox{red!20}{red} otherwise. For privacy
leakage, large positive values suggest \colorbox{red!20}{over-unlearning}, while large negative values suggest \colorbox{red!20}{under-unlearning}. DSG shows strong performance across all metrics, achieving substantial reductions in  verbatim and knowledge memorization while maintaining high utility.
}
\label{tbl:main}
\end{table}

\paragraph{Experimental Setup.} 
 We create separate target models for NEWS and BOOKS by finetuning \texttt{gemma-2-2b-it} on each corpus for 5 epochs using learning rate $10^{-5}$ and batch size 32. For each target model, we implement DSG using \texttt{gemma-scope-2b-pt-res} SAE (width 16k) applied to layer 3. For both domains, we use clamp strength 500, $p_{ratio}=95$ and $n_\text{feats}=20$. We use $p_{dyn}=90$ for NEWS and $p_{dyn}=95$ for BOOKS, with the lower threshold for NEWS enabling more effective verbatim memorization removal. For both scalability and sequential unlearning, we use the best NEWS hyperparameters. 

 \looseness=-1
 We compare DSG against: GA, GradDiff \citep{liu2022backdoor}, NPO, SimNPO \citep{fan2024simplicity}, and RMU. Following MUSE, we train for 10 epochs using AdamW with learning rate $10^{-5}$ and batch size 32,  selecting the last epoch checkpoint before utility falls below 90\% of the target model accuracy. Complete hyperparameters can be found in \autoref{app:unlearning-muse}.
 
\paragraph{Unlearning.} \autoref{tbl:main} shows that DSG outperforms existing baselnes. It is effective at verbatim memorization removal (C1) with 44.2\% reduction on NEWS and 44.7\% on BOOKS. On knowledge memorization (C2), DSG achieves near-complete removal with 98.5\% reduction on NEWS and 94.7\% reduction on BOOKS, outperforming most baselines. On privacy leakage (C3), while not within the ideal range, DSG performs better than the majority of baselines. For utility preservation (C4), DSG maintains 95.8\% of target model performance on NEWS and achieves a 5.2\% improvement on BOOKS compared to the target model.

\begin{figure}[htb]
    \centering
    \includegraphics[width=0.6\textwidth]{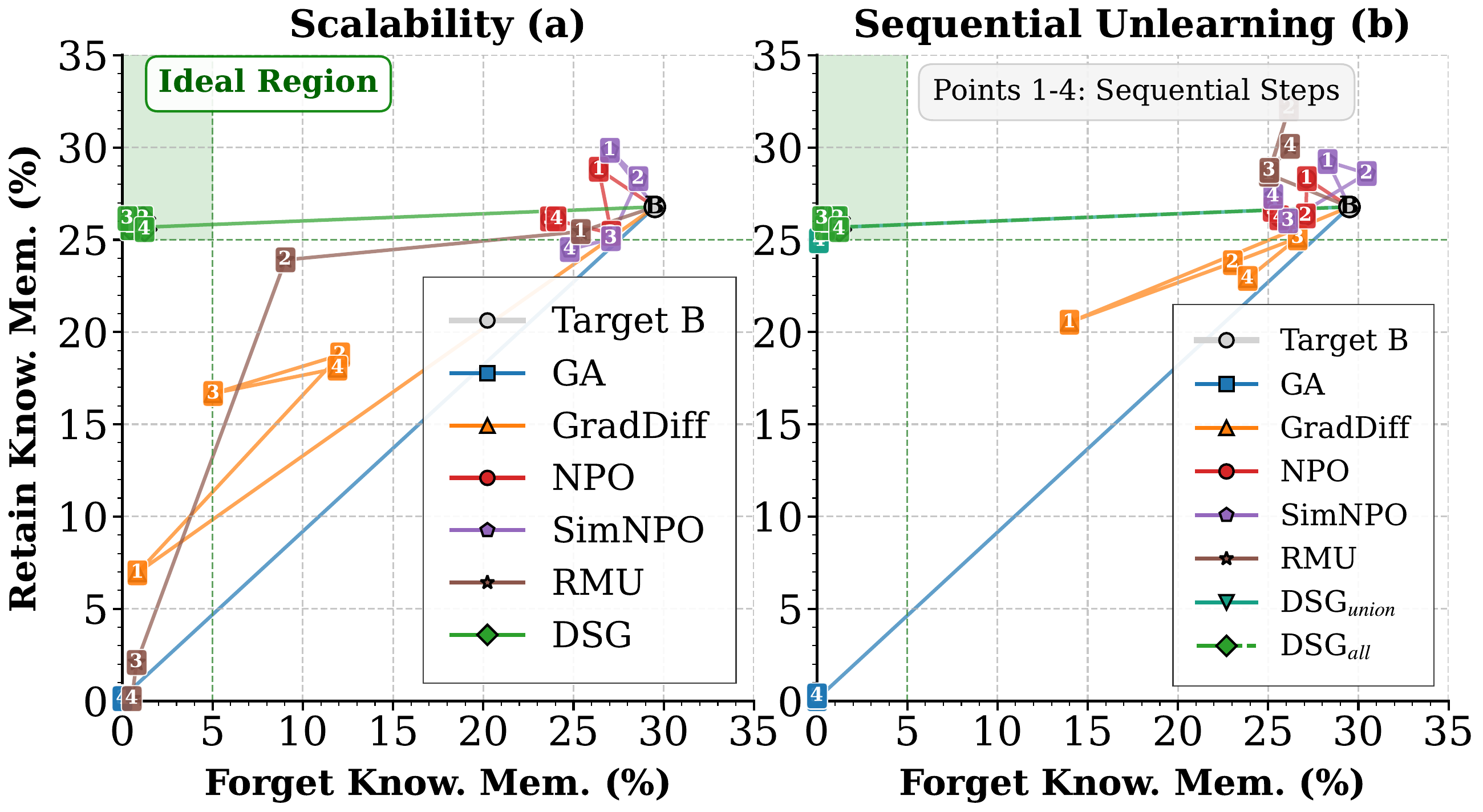}
    \caption{\small 
    (a) Scalability: Performance across increasing forget set sizes. (b) Sequential Unlearning: Performance across sequential unlearning requests}
    \label{fig:knowmem_tradeoff}
\end{figure}

\vspace{-0.1in}
\paragraph{Scalability.} Figure \ref{fig:knowmem_tradeoff}(a) shows \textbf{DSG is stable and robust when scaling to larger forget sets}. We evaluate performance across forget sets ranging from 0.8M to 3.3M tokens, and DSG maintains its position in the ideal region (high retain set knowledge, low forget set knowledge) even as the forget set size increases. In contrast, gradient-based methods exhibit substantial degradation, with increasingly poor tradeoffs between retaining general knowledge and forgetting targeted information.

\paragraph{Sequential Unlearning.} \autoref{fig:knowmem_tradeoff}(b) illustrates \textbf{DSG's effectiveness across sequential unlearning requests} on four disjoint NEWS folds. We implement two approaches: DSG$_{\text{all}}$, which cumulatively updates feature importance scores based on each new forget data request; and DSG$_{\text{union}}$, which takes the union of features selected independently at each step and uses this combined set to calculate $\rho(x)$ and threshold $\tau$ on $D_R$. Both approaches perform similarly well, consistently maintaining DSG in the ideal region where other methods rapidly degrade with each additional unlearning operation. Gradient-based methods suffer from catastrophic forgetting during sequential unlearning, where each update pushes the model further from its original performance distribution. (Details in \autoref{app:sequential_unlearning}.)

\subsection{Resistance to Relearning Attacks}
\label{sec:relearning}

\begin{figure}[htbp]
    \centering
    \begin{subfigure}{0.495\textwidth}
        \centering
        \includegraphics[width=\textwidth]{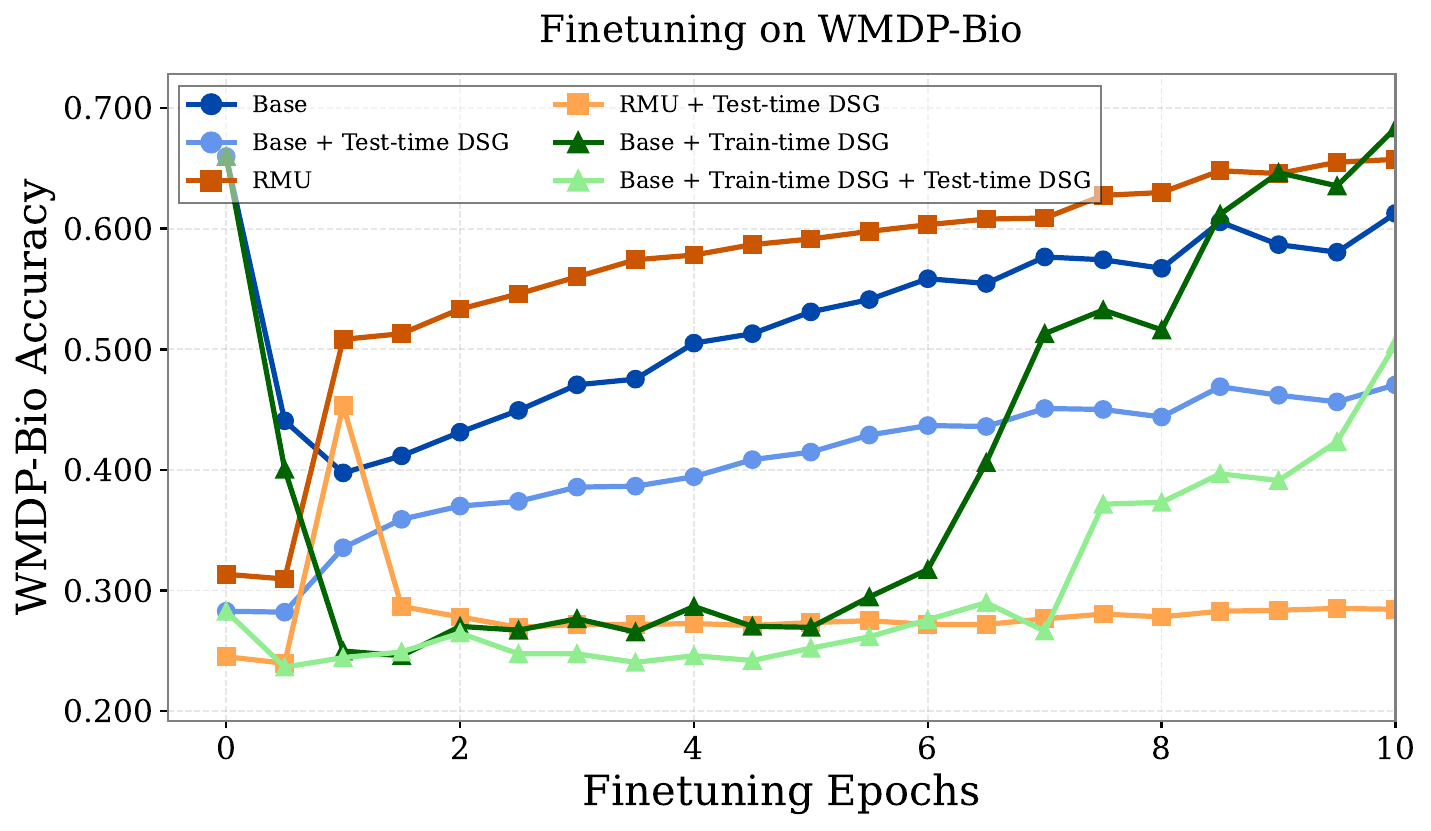}
    \end{subfigure}
    \hfill
    \begin{subfigure}{0.495\textwidth}
        \centering
        \includegraphics[width=\textwidth]{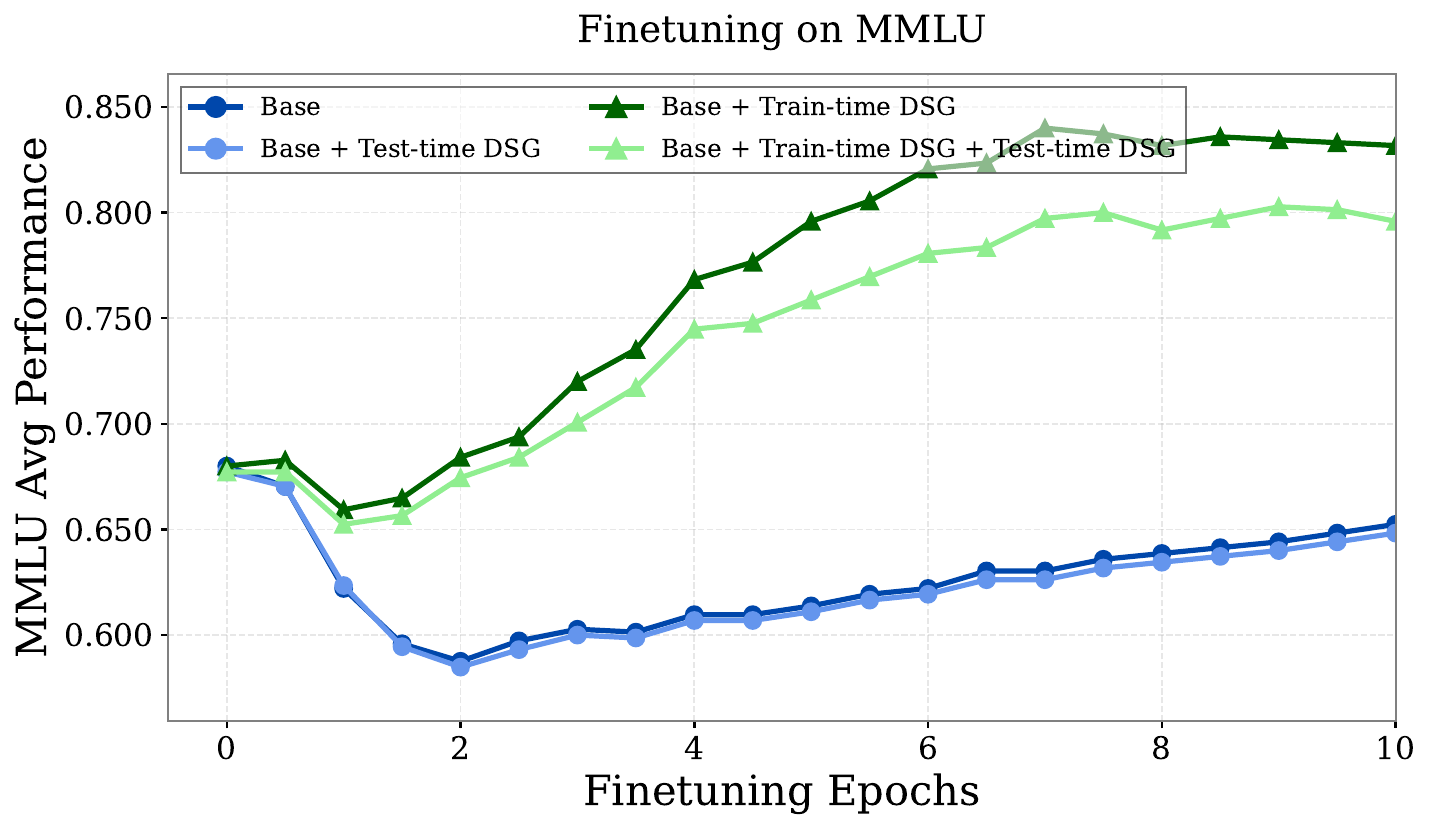}
    \end{subfigure}
    \vspace{-0.3in}
    \caption{ \small Relearning attack resistance across finetuning epochs. (a) DSG demonstrates superior resistance to relearning compared to RMU. (b) Test-time DSG preserves MMLU utility better than Train-time DSG while still providing significant protection.}
    \label{fig:relearning_main}
\end{figure}

We evaluate DSG's resistance to relearning attacks in API-based threat models where adversaries have query access but cannot directly manipulate model weights.
This resistance derives from the Superficial Alignment Hypothesis \citep{zhou2023lima}, which posits that a model's activation geometry stabilizes during pretraining and changes minimally during finetuning. Figure~\ref{fig:relearn_cosine} confirms this empirically, showing high cosine similarity between pre-finetuning and post-finetuning activation vectors, and activation magnitudes clustered around 1.0. By operating on these stable activation patterns rather than weights, DSG creates a more persistent defense. While obfuscation-based attacks have been proposed against activation-based interventions \citep{bailey2024obfuscated}, they are less effective in API-based black-box settings where attackers lack direct access to gradients and model representations.

\paragraph{Methodology.}
We evaluate two DSG defenses against relearning: (1) Test-time DSG, which applies intervention only at inference time after model finetuning, and (2) Train-time DSG, which integrates DSG during finetuning with frozen SAE parameters to filter gradients. We test six configurations with \texttt{google/gemma-2-2b-it} as base model: Base, Base+Test-time DSG, Base+Train-time DSG, Base+Train-time DSG+Test-time DSG, RMU (base model with RMU unlearning applied), and RMU+Test-time DSG. The relearning attack consists of finetuning each configuration on the WMDP-Bio test set for 10 epochs at learning rate 1e-5.

\paragraph{Results and Analysis.}
Figure~\ref{fig:relearning_main}(a) demonstrates clear differences in vulnerability to relearning attacks. Weight-based methods show high susceptibility, with RMU rapidly increasing in WMDP-Bio accuracy when finetuned, eventually exceeding the base model's finetuned performance. The base model itself shows an initial performance decrease before increasing, as the high learning rate temporarily undoes instruction tuning before relearning occurs. 

\textbf{Test-time DSG} provides substantial protection, with RMU+Test-time DSG maintaining near-random performance (25\%) throughout training. However, Base+Test-time DSG shows gradual vulnerability to relearning, with performance slowly increasing over finetuning epochs. This gradual protection erosion reveals a limitation of test-time intervention alone. 

\textbf{Train-time DSG} offers a distinct protective mechanism. Models finetuned with DSG active show immediate reduction to random-level performance that persists through approximately six epochs before gradually recovering. This delayed recovery pattern suggests DSG forces the model to develop entirely new processing circuits rather than simply reactivating suppressed knowledge. Figure~\ref{fig:train_loss_comparison} supports this interpretation, showing significantly higher training loss on WMDP-Bio compared to MMLU when finetuning with DSG active. 

\textbf{Combining both approaches} (Train-time DSG+Test-time DSG) extends resistance through epoch 7, demonstrating how these complementary mechanisms can be layered for enhanced protection. However, these approaches involve utility trade-offs. Figure~\ref{fig:relearning_main}(b) shows that while Base Finetuned and Base Finetuned+Test-time DSG maintain comparable MMLU performance, Train-time DSG exhibits moderate utility decline at higher epoch counts. 

DSG's \textbf{superior resistance to relearning attacks} stems from its activation-based intervention that leverages the stability of activation geometry during finetuning.

\subsection{Data Efficiency and Zero-shot Interpretable Unlearning}

\begin{figure}[htbp]
    \centering
    \includegraphics[width=\textwidth]{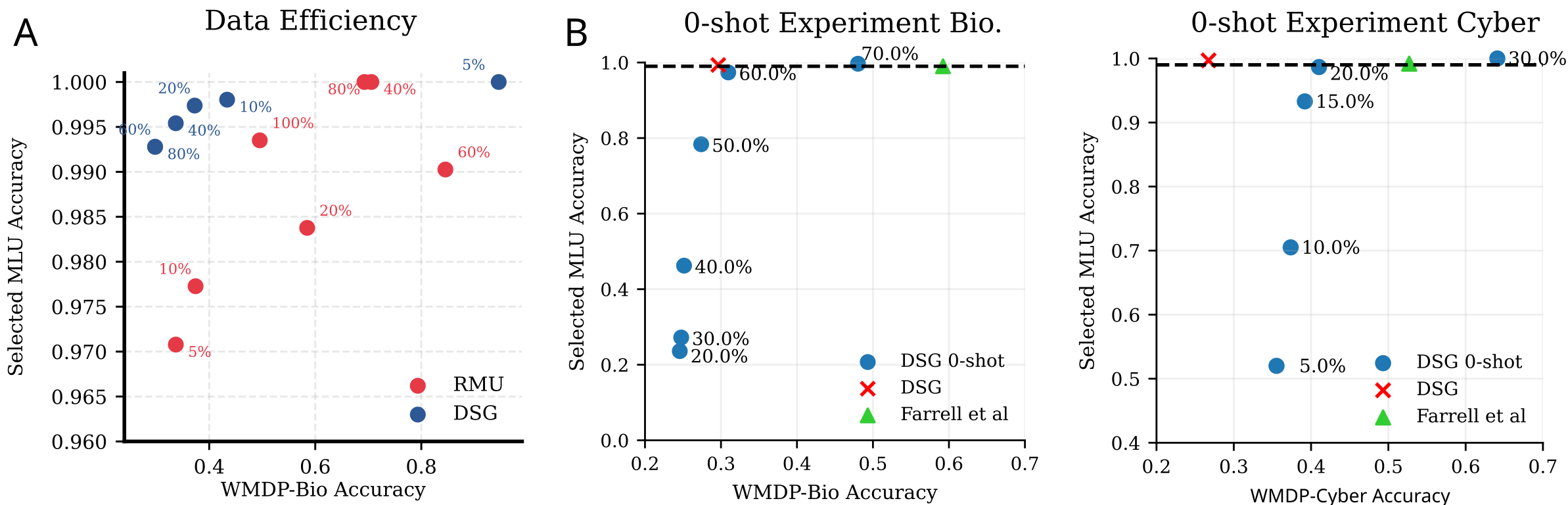}
    \vspace{-0.23in}
    \caption{\small Data efficiency analysis of DSG. (A) Performance across varying training data sizes compared to RMU. (B) Zero-shot performance on WMDP-Bio (left) and WMDP-Cyber (right) using 20 features selected via Neuropedia API with different $\tau$ thresholds (shown next to each data point).}
    \label{fig:zero_shot}
\end{figure}

\looseness=-1
We evaluate how DSG performs with limited forget and retain data on WMDP-Bio.
Figure \ref{fig:zero_shot}A shows DSG maintaining consistent performance when trained on 20-80\% of the original retain and forget datasets, preserving MMLU accuracy while keeping WMDP accuracy below 40\%. Only when dataset size falls below 20\% does effectiveness noticeably decline, with WMDP accuracy rising above 40\%. In contrast, RMU shows inconsistent results across different dataset sizes, indicating that gradient-based methods may be more \textbf{unstable to hyperparameter changes} when data is limited.

\looseness=-1
For \textbf{zero-shot evaluation}, we implement DSG without any domain-specific forget or retain data (Figure~\ref{fig:zero_shot}B), instead \textbf{leveraging the interpretability of SAEs}. We use Neuropedia \citep{neuronpedia} feature explanations to identify the forget set features by querying for concepts \textit{Biology} and \textit{Cybersecurity}, selecting the top 20 relevant features (details in \autoref{app:intepretability}). Both tasks use the \texttt{gemma-scope-2b-pt-res} SAE (width 16k) at layer 3 ($\ell_0$ 59). Since retain data is unavailable for dynamic threshold calibration, we sweep over static $\tau$ values, finding optimal settings ($\tau=60\%$ for WMDP-Bio, $\tau=20\%$ for WMDP-Cyber). Even with features selected purely based on their semantic descriptions and without dataset-specific tuning beyond $\tau$, these zero-shot DSG configurations outperform RMU and \citet{farrell2024applying}, demonstrating the potential for effective unlearning guided directly by feature interpretability.

\subsection{Ablations}

\begin{figure}[htbp]
    \centering
    \includegraphics[width=\textwidth]{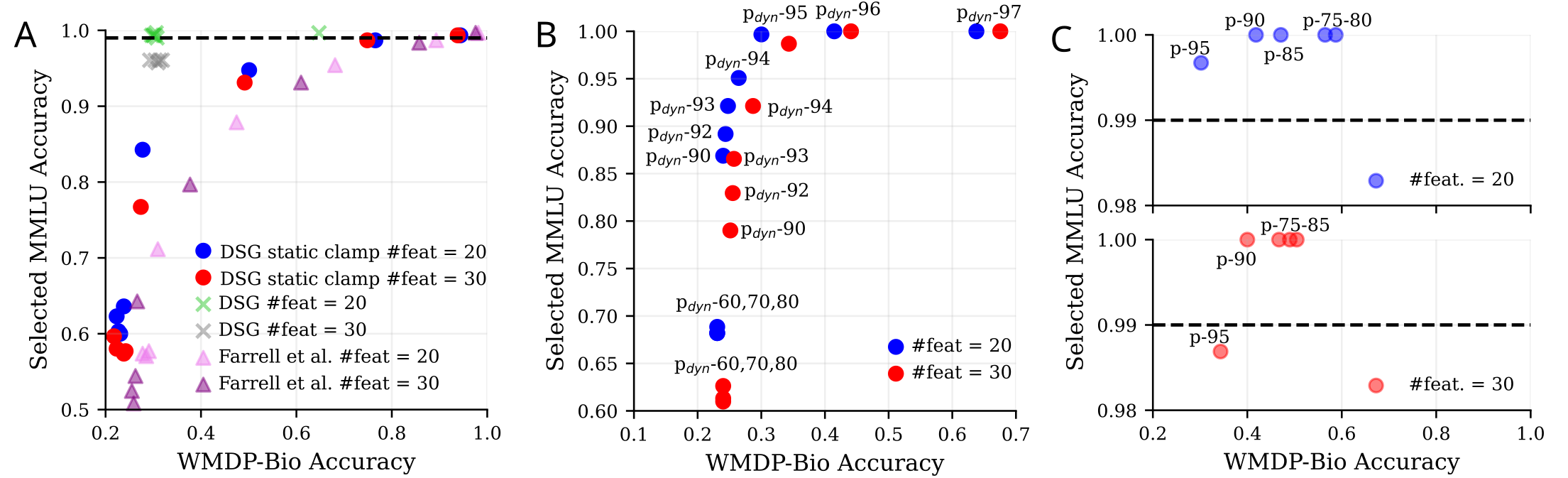} %
    \vspace{-0.2in}
    \caption{\small 
DSG Ablation studies (A) Static vs. dynamic clamping comparison with varying clamp strengths [10-500] for 20 and 30 features. (B) Effect of dynamic threshold percentile ($\smash{p_{\text{dyn}}}$) on performance (C) Impact of importance ratio threshold ($p_{\text{ratio}}$, range 75-95) for 20 and 30 features.}
    \label{fig:ablations}
\end{figure}

We evaluate each component of DSG by conducting ablation experiments on WMDP-Bio. 

\textbf{Dynamic Classification:} 
Figure~\ref{fig:ablations}A compares DSG with dynamic classification against DSG with static clamping from \citet{farrell2024applying} While static clamping effectively removes forget-set information at large clamp values ($c > 100$), it simultaneously reduces MMLU accuracy because it treats all inputs identically regardless of their forget-relevance. In contrast, our dynamic classifier only applies interventions when necessary based the statistical distribution of forget-feature activations. This conditional approach maintains higher MMLU accuracy ($>99\%$) while achieving comparable or better WMDP-Bio reduction.

\textbf{Percentile-Based Feature Selection:}  
DSG with static clamping leverages Fischer Information for feature selection instead of feature sparsity in \citet{farrell2024applying}. As shown in Figure~\ref{fig:ablations}A, across equivalent clamping strengths, this selection approach achieves 8\% lower WMDP-Bio accuracy on average while maintaining comparable MMLU performance, indicating more precise identification of forget-relevant features.

\textbf{Dynamic Threshold $p_{\text{dyn}}$:}  
\looseness=-1
Figure~\ref{fig:ablations}B shows the effect of $p_{\text{dyn}}$ on the forget-retain trade-off. Higher percentiles ($>95$) preserve more MMLU accuracy but allow more WMDP content to pass through undetected, while lower percentiles ($<90$) apply intervention more aggressively but with increased side effects on general knowledge. The optimal range 90-95 balances these considerations, removing targeted knowledge while minimizing side effects.

\textbf{Importance Ratio Threshold $p_{\text{ratio}}$:}  
As shown in Figure~\ref{fig:ablations}C, varying $p_{\text{ratio}}$ from 75-95 provides fine-grained control over feature selection. Higher values (95) select features with stronger forget-retain differentiation, yielding more targeted intervention, while lower values expand the feature set but may increase overlap with general knowledge features. Additionally we observed that the dynamic classifier can compensate for a lower $p_{\text{ratio}}$  maintaining effective forget-set filtering even when feature selection is less discriminative.

Additional ablations in \autoref{app:ablations} show that DSG is remarkably robust to clamp strength variations, and performs optimally with moderate feature counts. 
These findings highlight DSG's \textbf{practical hyperparameter stability.} Effective performance is maintained within reliable ranges for thresholds $p_{\text{dyn}}$/$p_{\text{ratio}}$ (90-95), feature counts (10-20), alongside notable robustness to clamp strength (100-500). Additionally, these hyperparameters transfer across datasets, simplifying deployment compared to gradient-based methods.

\vspace{-0.15in}
\section{Conclusion and Future Work}
\vspace{-0.1in}

In this work we introduced DSG, demonstrating that SAEs with dynamic classification enable precise, activation-based unlearning that outperform gradient-based methods across multiple benchmarks. Future directions include studying how DSG generalizes across different SAE widths, base model sizes, and configurations.

\vspace{-0.15in}
\section{Acknowledgements}
\vspace{-0.1in}
This work was supported in part by the National Science Foundation grants IIS2145670 and CCF2107024, and funding from Amazon, Apple, Google, Intel, Meta, the CyLab Security and Privacy Institute, and Leonardo Labs. Any opinions, findings and conclusions, or recommendations expressed in this material are those of the author(s) and do not necessarily reflect the views of any of these funding agencies. 
\vspace{-0.1in}

\section*{Ethics Statement}

This work introduces Dynamic SAE Guardrails (DSG), a method for targeted unlearning in large language models (LLMs). While designed to promote responsible AI by enabling the removal of unwanted knowledge, several ethical considerations arise:

\begin{itemize}[itemsep=2pt, topsep=1pt, leftmargin=12pt, rightmargin=1pt, labelsep=3pt, itemindent=5pt, parsep=2pt]
    \item \textbf{Potential for misuse:} While our focus is on removing hazardous or unwanted knowledge, the same technology could potentially be used to censor information or suppress viewpoints, leading to undesirable social consequences if deployed without careful oversight. The zero-shot capabilities, while advantageous for data-scarce scenarios, could be misused if the user-provided keywords are biased or used to target specific groups/content unfairly.
    \item \textbf{Over-reliance on interpretability:} Although SAEs offer improved interpretability compared to black-box models, feature interpretations are not always definitive or fully reliable. Misinterpreting feature roles or over-relying on imperfect interpretations could lead to unintended consequences, including the removal of valuable knowledge or the failure to remove harmful content. The quality of feature interpretation depends on the quality and representativeness of the data used to train and interpret the SAE.
    \item \textbf{Limitations of unlearning:} As with all approximate unlearning methods, DSG does not guarantee complete removal of targeted knowledge. As we show, it reduces the likelihood of the model generating outputs related to the forget set, but subtle traces or indirect influences might persist.  It is essential to acknowledge these limitations and avoid presenting DSG as a perfect solution for knowledge removal.
    \item \textbf{Dual-use concerns:} The techniques developed in this work for improving model control and safety could also be adapted by malicious actors to develop more sophisticated attacks or to create models that resist safety interventions.  We recognize this inherent dual-use nature and emphasize the need for responsible development and sharing of research findings.
    \item \textbf{Computational Cost of SAE Training:} The training of SAEs can be computationally demanding, raising environmental concerns. However there are several open-source SAEs, amortizing their cost, and the the inference-time efficiency of DSG offers some mitigation compared to gradient-based unlearning approaches.

\end{itemize}

We believe the benefits of precise, controllable unlearning for enhancing AI safety outweigh these risks, provided the technology is developed and deployed responsibly. We encourage future work to address these limitations and explore more robust evaluation methods for unlearning.

\section*{Reproducibility Statement}

To facilitate reproducibility, we have provided detailed descriptions of our experimental setups, including all relevant datasets, models, and hyperparameters.  Specifics for each set of experiments can be found as follows:

\begin{itemize}[itemsep=2pt, topsep=1pt, leftmargin=12pt, rightmargin=1pt, labelsep=3pt, itemindent=5pt, parsep=2pt]
    \item \textbf{WMDP Unlearning:} Complete hyperparameter settings for DSG and all baseline methods (GA, NPO, SSD, SCRUB, Farrell et al., RMU) are detailed in Appendix \ref{app:unlearning-wmdp-hyperparams}.  Model and SAE details are provided in Section \ref{sec:unlearningWMDP}.
    \item \textbf{MUSE Unlearning:} Hyperparameters for DSG and baseline methods (GA, GradDiff, NPO, SimNPO, RMU) are in Appendix \ref{app:unlearning-muse-hyperparams}, with model details in Section \ref{sec:methodology}.
    \item \textbf{Relearning Attacks (Section \ref{sec:relearning}):}  Model and hyperparameter configurations for the relearning experiments, including both train-time and test-time DSG interventions, are given in Appendix \ref{app:relearning_hyperparams}, along with the details in the main text.
    \item \textbf{Data Efficiency and Zero-shot Experiments:}  Model, SAE, and hyperparameter details for the data efficiency analysis and zero-shot evaluations are given in Appendix \ref{app:data-efficiency}.
    \item \textbf{Ablations:} All details related to the ablation studies, and chosen hyperparameters are in Appendix \ref{app:ablations}.
\end{itemize}

We have described the feature selection process, dynamic classification rule, and intervention mechanism in sufficient detail to allow for reimplementation (Algorithm \ref{alg:DSG} and Section \ref{sec:methodology}). We intend to release the code and relevant scripts necessary to reproduce our results upon acceptance of the work. This will include implementations of DSG, the baseline methods, and the evaluation pipelines.

\bibliography{colm2025_conference}
\bibliographystyle{colm2025_conference}

\appendix

\section{Additional Background and Related Work Details}
\label{app:background_details}

This appendix provides further details on concepts mentioned in the Background and Related Work (Section~\ref{sec:background}).

\subsection{Formal Goal of Unlearning}
\label{app:formal_unlearning}

As introduced in the main text, machine unlearning aims to transform a model $\mathcal{M}(D)$, initially trained on a dataset $D = D_{retain} \cup D_{forget}$, into an unlearned model $\mathcal{M}_{unlearn}$. The theoretical ideal of \emph{exact unlearning} requires that $\mathcal{M}_{unlearn}$ be computationally indistinguishable from a model $\mathcal{M}(D_{retain})$ that was trained exclusively on the retain set $D_{retain}$ from the beginning \citep{bourtoule2021machine, cao2015towards}. Due to the computational cost of retraining large language models from scratch, achieving exact unlearning is generally impractical. Therefore, the field primarily focuses on developing \emph{approximate unlearning} methods. These methods aim to satisfy specific criteria related to effectively removing the influence of $D_{forget}$ while preserving the model's performance on $D_{retain}$, without incurring the cost of full retraining \citep{liu2025rethinking}.

\subsection{Unlearning Evaluation Metrics and Benchmarks}
\label{app:eval_metrics}

The evaluation of approximate unlearning methods typically involves measuring two primary aspects: Forget Quality and Utility Preservation. 

\textbf{Forget Quality} quantifies the successful removal of information pertaining to the forget set $D_{forget}$. Common metrics include measuring the \emph{Forget Set Performance Degradation}, which involves observing reduced accuracy or increased loss on tasks specifically related to the content of $D_{forget}$ \citep{shi2024muse, maini2024tofu}. \emph{Memorization Metrics} gauge the model's reduced ability to recall specific sequences or knowledge points verbatim from $D_{forget}$ \citep{shi2024muse}. \emph{Privacy Leakage Metrics} evaluate the decreased success rate of Membership Inference Attacks (MIAs) that try to infer whether a given data point was part of the original $D_{forget}$, often quantified using the Area Under the Curve (AUC) of the MIA classifier \citep{shokri2017membership, shi2024muse}.

Conversely, \textbf{Utility Preservation} assesses how well the unlearned model retains its general knowledge and capabilities on tasks unrelated to $D_{forget}$. This is commonly measured by evaluating \emph{Retain Set Performance Preservation}, which checks for maintained accuracy on standard academic or commonsense reasoning benchmarks such as MMLU \citep{hendrycks2020measuring}. Additionally, \emph{General Language Modeling Performance} is often assessed by ensuring minimal increase in perplexity or loss when the model processes large, general-purpose text corpora like OpenWebText \citep{Gokaslan2019OpenWeb} or WikiText \citep{merity2016pointer}. Finally, \emph{Fluency and Coherence} of the model's generated text are important, often evaluated through automated metrics, human judgment, or interaction with benchmark chatbots like MT-Bench \citep{zheng2023judging}. Standardized benchmarks like MUSE \citep{shi2024muse}, TOFU \citep{maini2024tofu}, WMDP \citep{li2024wmdp}, and SAEBench \citep{karvonen2025saebench} provide datasets, tasks, and evaluation protocols designed to measure performance across these diverse criteria.

\subsection{Gradient-Based Unlearning Methods}
\label{app:gradient_methods}

Gradient-based unlearning techniques directly modify the weights $\theta$ of the original model $\mathcal{M}(D)$ using optimization algorithms, typically variants of gradient descent or ascent. 

\textbf{Gradient Ascent (GA)} represents a basic approach where the optimization objective is to maximize the loss function (e.g., negative log-likelihood) on the forget set $D_{forget}$, thereby pushing the model parameters away from configurations that accurately represent this data \citep{jang2023knowledge}. This method, however, often suffers from catastrophic forgetting of useful knowledge if not carefully regularized.

\textbf{Gradient Difference (GradDiff or NegGrad)} attempts to balance forgetting and retention by computing gradients for both minimizing loss on $D_{retain}$ and maximizing loss on $D_{forget}$, then applying an update based on a combination (often a subtraction) of these gradients \citep{liu2022backdoor}. 

\textbf{Negative Preference Optimization (NPO)} leverages insights from preference-based finetuning methods like DPO \citep{rafailov2023direct}, reformulating unlearning as learning to disprefer outputs related to $D_{forget}$ relative to some reference, which could be outputs from the original model or data from $D_{retain}$ \citep{zhang2024negative}. Simplified variants like SimNPO aim to reduce the computational overhead \citep{fan2024simplicity}. 

\textbf{Representation Misdirection Unlearning (RMU)} operates by injecting noise or applying targeted shifts to the internal activations of the model at specific layers, but only when processing inputs related to $D_{forget}$, while simultaneously using a regularization term to keep activations on $D_{retain}$ close to those of the original model \citep{li2024wmdp}. 

\textbf{Selective Synaptic Dampening (SSD)} aims for more targeted weight modification by estimating the importance of individual parameters for both $D_{forget}$ and $D_{retain}$ (using approximations based on Fisher information) and then selectively reducing the magnitude of parameters found to be more critical for $D_{forget}$ than for $D_{retain}$ \citep{foster2024fast}. 

\textbf{SCRUB} employs a student-teacher knowledge distillation framework; it trains a copy of the original model (the student) to diverge from the original frozen model (the teacher) on $D_{forget}$ inputs (typically by maximizing KL divergence) while simultaneously encouraging the student to mimic the teacher on $D_{retain}$ inputs (by minimizing KL divergence) \citep{kurmanji2023towards}. 

Finally, many gradient-based methods incorporate explicit \textbf{Regularization Techniques} to counteract the tendency towards catastrophic forgetting. Common regularizers include minimizing the KL divergence between the probability distributions of the unlearned and original models when evaluated on $D_{retain}$ \citep{maini2024tofu}, or directly including a term in the loss function that minimizes the model's prediction error on $D_{retain}$ \citep{yao2024machine}.

\subsection{Prior SAE Unlearning Work}
\label{app:prior_sae}

The work by \citet{farrell2024applying} is an early exploration into using Sparse Autoencoders (SAEs) for machine unlearning. We describe their methodology here.

First, they computed the activation sparsity for each feature in the SAE dictionary, calculated separately over the forget dataset ($D_{forget}$) and the retain dataset ($D_{retain}$). Sparsity here refers to the fraction of input tokens for which a given feature has a non-zero activation. 

Second, to mitigate potential damage to the model's general capabilities, they filtered out any features whose activation sparsity on the retain set $D_{retain}$ exceeded a predetermined threshold (e.g., a feature active on more than 1\% of retain tokens might be excluded). 

Third, from the pool of features that passed the retain-sparsity filter, they selected the top-$N$ features that exhibited the highest activation sparsity when measured on the forget set $D_{forget}$. The assumption was that features frequently active on forget data are likely responsible for encoding the knowledge to be removed.

Fourth, they implemented a \emph{static} intervention mechanism during inference: whenever any of the top-$N$ selected features $f_j$ produced a positive activation ($f_j(\mathbf{h}_t) > 0$) for any token $t$, its activation was clamped to a fixed negative value (e.g., -c). This clamping was applied universally, regardless of the overall context of the input sequence. 

This combination of sparsity-based feature selection and static clamping ultimately proved limiting, leading to significant side effects on utility and performance inferior to contemporary gradient-based methods like RMU on benchmarks such as WMDP-Bio. Recognizing these limitations directly motivated our work (DSG), where we instead develop and apply principled feature selection and dynamic, context-aware interventions.

\subsection{Relearning Attacks}
\label{app:relearning_details}

Approximate unlearning methods, especially those modifying model weights, face another significant challenge: \textbf{relearning attacks} \citep{deeb2024unlearning}. In these attacks, an adversary finetunes the unlearned model $\mathcal{M}_{unlearn}$ to recover the supposedly forgotten information. Such recovery is sometimes possible even using only data tangentially related to the original forget set $D_{forget}$ \citep{hu2024jogging}. The success of relearning attacks suggests that gradient-based weight modifications may primarily suppress access to knowledge rather than truly erasing it from the parameter space; subsequent finetuning can often reverse these weight adjustments, particularly if it reinforces the target concepts.

The feasibility of relearning attacks strongly depends on the threat model. In an \textbf{API-based (black-box) setting}, where adversaries only have query access, mounting effective relearning attacks is more difficult, particularly if the provider restricts extensive finetuning or monitors queries. Activation-based intervention methods like DSG, which modify internal states rather than weights to control outputs for relevant inputs, may offer greater robustness in this black-box scenario compared to weight modification techniques.

Although sophisticated \textbf{obfuscation attacks} targeting activation-based defenses exist \citep{bailey2024obfuscated}, they typically require white-box access (e.g., gradients, internal states). Such access is unavailable in a pure API setting, limiting their threat against deployed systems focused on output safety via activation manipulation. DSG's potential resilience against relearning could stem from the relative stability of activation geometry during standard finetuning, a phenomenon related to the \textbf{Superficial Alignment Hypothesis} \citep{zhou2023lima}. If DSG reliably identifies features encoding the target knowledge based on these stable patterns and consistently applies interventions, the unlearning effect may prove more durable against finetuning-based relearning attacks compared to methods reliant on weight configurations.

\section{Fisher Information Approximation Proof}
\label{app:fisher_proof}

\setcounter{theorem}{0}
\begin{theorem}[Approximate Fisher Information from SAE Features]
\label{thm:fisher_detail_appendix}
Let a sparse autoencoder (SAE) with reconstruction $\hat{\mathbf{r}}(x) = \mathbf{z}(x)\mathbf{W}_{\mathrm{dec}}$ be applied to data $x \sim \mathcal{D}$, where $\mathbf{z}(x) \in \mathbb{R}^F$ represents latent activations and $\mathbf{W}_{\mathrm{dec}} \in \mathbb{R}^{F \times D}$ the decoder weights. Define the reconstruction loss as:
\begin{equation*}
\ell(x) = \frac{1}{2}\|\hat{\mathbf{r}}(x) - \mathbf{r}(x)\|^2
\end{equation*}

If the SAE is well-trained such that reconstruction error is small with high probability, then for each row $\boldsymbol{\theta}_{i,\cdot} \in \mathbb{R}^D$ of $\mathbf{W}_{\mathrm{dec}}$ (representing feature $i$), the expected squared gradient is approximately proportional to the second moment of the feature activation.
\end{theorem}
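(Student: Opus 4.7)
The plan is to compute the per-sample gradient in closed form, take squared norms, and then decouple the two resulting factors by invoking the high-probability reconstruction-error hypothesis. Since $\hat{\mathbf{r}}(x) = \sum_i z_i(x)\boldsymbol{\theta}_{i,\cdot}$, the partial derivative of the loss with respect to the $j$-th row of $\mathbf{W}_{\mathrm{dec}}$ is immediate: the residual $\hat{\mathbf{r}}(x) - \mathbf{r}(x)$ appears as a common vector factor, and the only dependence on $\boldsymbol{\theta}_{j,\cdot}$ comes through a scalar $z_j(x)$. Hence $\nabla_{\boldsymbol{\theta}_{j,\cdot}}\ell(x) = z_j(x)\bigl(\hat{\mathbf{r}}(x) - \mathbf{r}(x)\bigr)$, and taking the Euclidean norm squared gives
\[
\|\nabla_{\boldsymbol{\theta}_{j,\cdot}}\ell(x)\|^2 \;=\; z_j(x)^2 \, \|\hat{\mathbf{r}}(x) - \mathbf{r}(x)\|^2.
\]
This is the identity that drives the entire argument; everything afterwards is about taking expectations of the right-hand side cleanly.

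Next I would take the expectation over $x \sim \mathcal{D}$. At this point the expression is a \emph{product} of the squared activation and the squared reconstruction error, and these two random variables are in general correlated, so a naive factorization is not justified. The high-probability hypothesis supplies an event $E = \{x : \|\hat{\mathbf{r}}(x) - \mathbf{r}(x)\|^2 \le \epsilon^2\}$ with $\Pr(E) \ge 1 - \delta$ for small $\delta$. Splitting the expectation over $E$ and $E^c$, on $E$ one bounds the error factor by $\epsilon^2$ and pulls it outside to obtain $\epsilon^2\,\mathbb{E}[z_j(x)^2 \mathbf{1}_E]$, while on the exceptional event $E^c$ a boundedness assumption on the SAE activations (standard for JumpReLU-style SAEs with bounded inputs) controls the tail contribution by $\delta$ times a moment of $z_j^2$, making it negligible compared to $\epsilon^2\,\mathbb{E}[z_j(x)^2]$. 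Collecting the two pieces gives the claimed approximate proportionality, with the implicit constant being the typical squared reconstruction error.

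The main obstacle, and the step where care is needed, is exactly this decoupling: one must rule out the pathological scenario in which the rare inputs with large reconstruction error are systematically the same inputs on which $z_j$ fires strongly, since in that case the tail could dominate and invalidate the approximation. I would therefore state explicitly, as part of the ``well-trained'' hypothesis, either a uniform-in-$j$ bound on the conditional error $\mathbb{E}[\|\hat{\mathbf{r}}-\mathbf{r}\|^2 \mid z_j(x)]$ or a sub-Gaussian tail on the residual together with a bounded-activation assumption. With any such regularity condition the good-event / bad-event decomposition closes, and the theorem follows. The remaining calculations---taking the gradient, squaring, and bounding the two terms---are routine and I would present them compactly once the regularity conditions are fixed.
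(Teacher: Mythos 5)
Your proposal is correct and takes essentially the same route as the paper's proof: the identical closed-form gradient identity $\|\nabla_{\boldsymbol{\theta}_{j,\cdot}}\ell(x)\|^2 = z_j(x)^2\,\|\hat{\mathbf{r}}(x)-\mathbf{r}(x)\|^2$ followed by the same good-event/bad-event split under the high-probability error bound. Your explicit regularity condition for the exceptional event is just a more careful statement of what the paper assumes implicitly when it bounds the tail term by a finite constant $C$ times $\delta$.
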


\begin{proof}
We establish this result through careful analysis of the gradient structure in sparse autoencoders.

\paragraph{Computing the Gradient of Decoder Weights.}
By definition of the reconstruction loss:
\begin{align*}
\ell(x) &= \frac{1}{2}\|\hat{\mathbf{r}}(x) - \mathbf{r}(x)\|^2 \\
&= \frac{1}{2}\|\mathbf{z}(x)\mathbf{W}_{\mathrm{dec}} - \mathbf{r}(x)\|^2
\end{align*}

For row $i$ of $\mathbf{W}_{\mathrm{dec}}$, denoted $\boldsymbol{\theta}_{i,\cdot} \in \mathbb{R}^D$, we compute the gradient:
\begin{align*}
\nabla_{\boldsymbol{\theta}_{i,\cdot}}\ell(x) &= \nabla_{\boldsymbol{\theta}_{i,\cdot}}\left[\frac{1}{2}\|\mathbf{z}(x)\mathbf{W}_{\mathrm{dec}} - \mathbf{r}(x)\|^2\right]
\end{align*}

By the chain rule:
\begin{align*}
\nabla_{\boldsymbol{\theta}_{i,\cdot}}\ell(x) &= (\mathbf{z}(x)\mathbf{W}_{\mathrm{dec}} - \mathbf{r}(x)) \cdot \nabla_{\boldsymbol{\theta}_{i,\cdot}}(\mathbf{z}(x)\mathbf{W}_{\mathrm{dec}})
\end{align*}

Since $\mathbf{z}(x)\mathbf{W}_{\mathrm{dec}}$ is linear in $\boldsymbol{\theta}_{i,\cdot}$ with coefficient $z_i(x)$, we have:
\begin{align*}
\nabla_{\boldsymbol{\theta}_{i,\cdot}}(\mathbf{z}(x)\mathbf{W}_{\mathrm{dec}}) &= z_i(x) \cdot \mathbf{I}_D
\end{align*}
where $\mathbf{I}_D$ is the $D$-dimensional identity matrix. Therefore:
\begin{align*}
\nabla_{\boldsymbol{\theta}_{i,\cdot}}\ell(x) &= z_i(x)(\hat{\mathbf{r}}(x) - \mathbf{r}(x))
\end{align*}

\paragraph{Computing the Squared Gradient Norm.}
Taking the squared norm of this gradient:
\begin{align*}
\|\nabla_{\boldsymbol{\theta}_{i,\cdot}}\ell(x)\|^2 &= \|z_i(x)(\hat{\mathbf{r}}(x) - \mathbf{r}(x))\|^2 \\
&= z_i(x)^2\|\hat{\mathbf{r}}(x) - \mathbf{r}(x)\|^2
\end{align*}

Taking the expectation over the data distribution:
\begin{align*}
\mathbb{E}_{x \sim \mathcal{D}}[\|\nabla_{\boldsymbol{\theta}_{i,\cdot}}\ell(x)\|^2] &= \mathbb{E}_{x \sim \mathcal{D}}[z_i(x)^2\|\hat{\mathbf{r}}(x) - \mathbf{r}(x)\|^2]
\end{align*}

\paragraph{Analyzing the Small Error Regime.}
When the SAE is well-trained, we can characterize its performance with a high-probability bound on reconstruction error. Specifically, assume there exist constants $\epsilon > 0$ and $\delta > 0$ such that:
\begin{equation*}
\mathbb{P}\left(\|\hat{\mathbf{r}}(x) - \mathbf{r}(x)\|^2 < \epsilon^2\right) > 1 - \delta
\end{equation*}
where $\epsilon \ll \|\hat{\mathbf{r}}(x)\|$ and $\delta$ is small. In other words, the squared reconstruction error is bounded by $\epsilon^2$ with probability at least $1-\delta$.

Under this high-probability bound, we can decompose the expectation:
\begin{align*}
\mathbb{E}[z_i(x)^2\|\hat{\mathbf{r}}(x) - \mathbf{r}(x)\|^2] &\leq \mathbb{E}[z_i(x)^2 \cdot \epsilon^2 \mid \|\hat{\mathbf{r}}(x) - \mathbf{r}(x)\|^2 < \epsilon^2] \cdot (1-\delta) + C\delta \\
&\leq \epsilon^2\mathbb{E}[z_i(x)^2] + C\delta
\end{align*}
where $C$ is a bound on the expectation in the low-probability case. For small $\delta$ and finite $C$, the second term becomes negligible, leaving:
\begin{align*}
\mathbb{E}[z_i(x)^2\|\hat{\mathbf{r}}(x) - \mathbf{r}(x)\|^2] &\approx \epsilon^2\mathbb{E}[z_i(x)^2]
\end{align*}

\paragraph{Connection to Fisher Information.}
The Fisher Information Matrix for parameter $\boldsymbol{\theta}_{i,\cdot}$ is defined as:
\begin{align*}
\mathcal{I}(\boldsymbol{\theta}_{i,\cdot}) &= \mathbb{E}_{x \sim \mathcal{D}}[\nabla_{\boldsymbol{\theta}_{i,\cdot}}\ell(x) \nabla_{\boldsymbol{\theta}_{i,\cdot}}\ell(x)^\top]
\end{align*}

The trace of this matrix, which measures the overall sensitivity of the loss to changes in $\boldsymbol{\theta}_{i,\cdot}$, is precisely:
\begin{align*}
\mathrm{Tr}(\mathcal{I}(\boldsymbol{\theta}_{i,\cdot})) &= \mathbb{E}_{x \sim \mathcal{D}}[\|\nabla_{\boldsymbol{\theta}_{i,\cdot}}\ell(x)\|^2] \\
&\approx \epsilon^2\mathbb{E}[z_i(x)^2]
\end{align*}

\paragraph{Interpretation.}
The above analysis shows that $(f_j(x))^2 = z_j(x)^2$ serves as a natural importance measure for feature $j$. Features with larger average squared activations contribute more significantly to reconstruction gradients and thus have higher Fisher Information content.
 This justifies our approach of using squared activations to identify features most strongly associated with specific knowledge domains.
\end{proof}

\section{Connecting Fisher Information to Causal Influence}
\label{app:causal_connection}

In this section, we establish how the Fisher Information associated with Sparse Autoencoder (SAE) features connects to their causal influence as mediators of information flow in language models. Drawing inspiration from causal geometry \citep{chvykov2020causal}, we provide a proof for why expected squared activation serves as a measure of feature importance.

\looseness=-1
\begin{theorem}[Fisher Information as a Proxy for Causal Feature Importance]
\label{thm:fisher_causal_influence_appendix}
Under assumptions of (i) near-deterministic mappings in the language model, (ii) well-defined causal effects under feature interventions, (iii) small SAE reconstruction error, and (iv) approximate feature independence, the Fisher Information associated with SAE features provides a principled approximation of their causal influence. Specifically, for any feature $f_j$, the expected squared feature activation $\mathbb{E}_{\mathcal{D}}[f_j(\mathbf{h})^2]$ for hidden state $\mathbf{h}$ on dataset $\mathcal{D}$ is proportional to the causal influence of that feature as a mediator of information from $\mathcal{D}$ to model outputs.
\end{theorem}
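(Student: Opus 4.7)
The plan is to adopt the causal-geometric viewpoint of \citet{chvykov2020causal}, in which the causal influence of an intervention on a variable is quantified by the Fisher Information that the outcome distribution carries about that intervention. For feature $f_j$, I define the intervention family $\mathrm{do}(f_j = f_j(\mathbf{h}) + v)$ for $v$ near $0$ and set
\[
\mathcal{I}^{\mathrm{causal}}_{j}(\mathbf{h}) := \mathbb{E}_{\mathbf{Y}}\!\left[\bigl(\partial_v \log p(\mathbf{Y}\mid \mathrm{do}(f_j = f_j(\mathbf{h})+v))\bigr)^2\right]_{v=0}.
\]
The goal is to show that, after averaging over $\mathbf{h}\sim\mathcal{D}$, this quantity is proportional (up to a feature-independent constant) to $\mathbb{E}_{\mathcal{D}}[f_j(\mathbf{h})^2]$, which by Theorem~\ref{thm:fisher_approx} is precisely the decoder-weight Fisher Information up to an $\epsilon^2$ factor; this closes the loop between causal influence and the score used for feature selection. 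Assumption~(ii) enters implicitly by making the do-operator above a well-defined object on the model.

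The first step is to trivialize the downstream likelihood using assumption~(i): under near-deterministic LM mappings, $\mathbf{Y}$ concentrates on a map $g(\mathbf{h})$ and the score-squared reduces to the directional sensitivity $\|J_g(\mathbf{h})\,\mathbf{v}_j\|^2$, where $\mathbf{v}_j$ is the $j$th decoder direction and $J_g$ is the Jacobian of $g$. Assumption~(iii) justifies replacing $\mathbf{h}$ by its SAE reconstruction $\hat{\mathbf{h}}$, so that shifting $f_j$ by $v$ shifts $\hat{\mathbf{h}}$ by $v\,\mathbf{v}_j$; assumption~(iv) ensures that this path is not confounded by co-activated features, so $\partial_v$ cleanly isolates the effect of $f_j$ without propagating through encoder-level correlations with other $f_i$.

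The crux of the argument is to recover a factor of $f_j(\mathbf{h})^2$ rather than a constant direction-norm such as $\|\mathbf{v}_j\|^2$. To do this, I recast causal influence at the \emph{natural} intervention scale rather than the infinitesimal one, following the mediation-analysis framing: the causal influence of feature $j$ as a mediator is measured by the counterfactual discrepancy between the model's natural behavior and its behavior under $\mathrm{do}(f_j = 0)$. The induced change in the reconstruction is exactly $f_j(\mathbf{h})\,\mathbf{v}_j$, so the propagated squared effect on $\mathbf{Y}$ is, under~(i) and a local Lipschitz bound along $\mathbf{v}_j$, approximated by $L_g^2\,f_j(\mathbf{h})^2\,\|\mathbf{v}_j\|^2$. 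Averaging over $\mathcal{D}$ and using the unit-norm decoder convention gives
\[
\mathbb{E}_{\mathbf{h}\sim\mathcal{D}}\bigl[\mathcal{I}^{\mathrm{causal}}_j(\mathbf{h})\bigr] \;\approx\; L_g^2\,\mathbb{E}_{\mathcal{D}}\!\bigl[f_j(\mathbf{h})^2\bigr],
\]
and since the constant $L_g^2$ does not depend on $j$, ranking features by $\mathbb{E}_{\mathcal{D}}[f_j^2]$ coincides with ranking them by causal influence, which is exactly the claim.

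The hardest part will be justifying the shift from an infinitesimal Fisher intervention (which yields the $f_j$-independent $\|J_g\mathbf{v}_j\|^2$) to the natural-scale mediation effect (which yields the desired $f_j^2\|\mathbf{v}_j\|^2$). This requires interpreting ``causal influence as a mediator'' as the natural indirect effect whose scale is set by the mediator's actual activation rather than by an arbitrary unit perturbation, and it is where assumptions~(i), (ii), and~(iv) carry the most weight; once this operational definition is fixed, the remaining steps are routine linear-algebra and expectation manipulations, and the final identification with the decoder-weight Fisher Information follows directly from Theorem~\ref{thm:fisher_approx}.
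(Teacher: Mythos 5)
Your overall route coincides with the paper's: the paper also quantifies the causal influence of $f_j$ as the natural-scale counterfactual contrast between $\mathrm{do}(f_j = f_j(\mathbf{h}))$ and $\mathrm{do}(f_j = 0)$, measured as a KL divergence between the induced output distributions, which under the near-deterministic Gaussian assumption reduces to $\tfrac{1}{2\sigma^2}\|\mathbf{g}_j(f_j(\mathbf{h}))-\mathbf{g}_j(0)\|^2$; a first-order expansion then pulls out the factor $f_j(\mathbf{h})^2$, and taking $\mathbb{E}_{\mathbf{h}\sim\mathcal{D}}$ gives the claimed proportionality, with the link to decoder-weight Fisher Information supplied exactly as you say by Theorem~\ref{thm:fisher_approx}. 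The step you flag as the hardest part --- justifying the move from the infinitesimal intervention-Fisher quantity (which is $f_j$-independent) to the natural indirect effect --- is dissolved in the paper by simply taking the natural indirect effect as the \emph{definition} of influence from the outset; your detour through $\partial_v \log p(\mathbf{Y}\mid \mathrm{do}(f_j = f_j(\mathbf{h})+v))$ at $v=0$ is unnecessary and, as you yourself observe, proves the wrong quantity.

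The genuine gap is in how you extract the proportionality constant. Replacing the propagated effect by a local Lipschitz bound $L_g^2\,f_j(\mathbf{h})^2\,\|\mathbf{v}_j\|^2$ yields an inequality, not a proportionality, and asserting that the constant is feature-independent is not justified: the correct linearization gives a constant $\tfrac{\|\nabla\mathbf{g}_j(0)\|^2}{2\sigma^2}$, and the directional sensitivity $\|\nabla\mathbf{g}_j(0)\|^2$ genuinely varies across features $j$. Consequently your concluding claim that ranking features by $\mathbb{E}_{\mathcal{D}}[f_j(\mathbf{h})^2]$ coincides with ranking them by causal influence does not follow and is stronger than the theorem. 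The statement (and the paper's use of it) only requires per-feature proportionality with a $j$-dependent but $\mathcal{D}$-independent constant: when the same feature is compared across $\mathcal{D}_{\text{forget}}$ and $\mathcal{D}_{\text{retain}}$ the constant cancels, which is precisely what makes the importance ratio $\texttt{forget\_score}(j)/\texttt{retain\_score}(j)$ a ratio of causal influences. To repair your argument, replace the uniform Lipschitz constant with the Taylor expansion of $\mathbf{g}_j$ around $0$ (as the paper does) and weaken your conclusion to the within-feature, cross-dataset form.
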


\begin{proof}
We build upon the result in Appendix~\ref{app:fisher_proof}, which showed that the expected squared activation $\mathbb{E}[f_j(\mathbf{h})^2]$ is proportional to the trace of the Fisher Information Matrix for the corresponding decoder weights.

\paragraph{Causal Model Setup.}
Consider a language model (LM) that produces hidden states $\mathbf{h}(x) \in \mathbb{R}^d$.  A Sparse Autoencoder (SAE) encodes $\mathbf{h}(x)$ into feature activations $\mathbf{z} = f(\mathbf{h}) \in \mathbb{R}^{d_{\text{SAE}}}$, i.e.\ each feature is $f_j\bigl(\mathbf{h}(x)\bigr)$. Let $\mathcal{D}_{\text{forget}}$ and $\mathcal{D}_{\text{retain}}$ be two subsets of the training data. We model the causal structure as:
\begin{align*}
  \text{Data} 
  \;\longrightarrow\; 
  \mathbf{h}(x) 
  \;\longrightarrow\; 
  \mathbf{z}=f(\mathbf{h}) 
  \;\longrightarrow\; 
  \mathbf{Y}\ (\text{model outputs})
\end{align*}
Here, $\mathbf{Y} \in  \mathbb{R}^{d_Y}$ represents the model's output vector (e.g., logits or embeddings).

\paragraph{Assumptions.}
\begin{enumerate} [itemsep=2pt, topsep=1pt, leftmargin=12pt, rightmargin=1pt, labelsep=3pt, itemindent=5pt, parsep=2pt]
\item \textbf{Near-deterministic mapping.}
   Conditioned on $\mathbf{h}$, the model output $\mathbf{Y}$ is almost deterministic (small Gaussian noise).  Formally, $p\bigl(\mathbf{Y} \mid \mathbf{h}\bigr) \approx \mathcal{N}\bigl(\boldsymbol{\mu}(\mathbf{h}),\sigma^2 \mathbf{I}\bigr)$ with small $\sigma^2$.

\item \textbf{Well-defined feature interventions.}
   We can perform $do\bigl(f_j=\alpha\bigr)$, meaning forcibly setting feature $j$ to $\alpha$ and thus severing its normal dependence on $\mathbf{h}$.

\item \textbf{Small SAE reconstruction error.}
   Writing $\hat{\mathbf{h}}(\mathbf{z})\approx \mathbf{W}\,\mathbf{z}$, we assume $\|\mathbf{h}-\hat{\mathbf{h}}(\mathbf{z})\|$ is small with high probability.

\item \textbf{Approximate feature independence.}
   Features $f_j(\mathbf{h})$ are sufficiently sparse or decorrelated that cross-terms can be neglected.
\end{enumerate}

\paragraph{Defining Causal Influence.}
We quantify the causal influence of feature $f_j$ by how much the model's output distribution $p(\mathbf{Y})$ changes when we intervene to set $f_j$ to its normal value $f_j(\mathbf{h})$ vs.\ forcing it to zero:
\begin{align*}
  \text{Influence}(f_j)
  &= \mathbb{E}_{\mathbf{h}\sim\mathcal{D}}\Bigl[
    D_{\mathrm{KL}}\!\Bigl(\,
       p\bigl(\mathbf{Y} \,\mid\, do(f_j = f_j(\mathbf{h}))\bigr)
       \;\big\|\;
       p\bigl(\mathbf{Y} \,\mid\, do(f_j = 0)\bigr)
    \Bigr)
  \Bigr]
\end{align*}
A large KL means toggling $f_j$ from 0 to its actual value drastically shifts $p(\mathbf{Y})$, so $f_j$ is a strong mediator for $\mathcal{D}$.

\paragraph{Expanding KL Divergence.}
Let $\mathbf{g}_j:\,\mathbb{R}\to\mathbb{R}^{d_Y}$ describe how feature $f_j$ shifts the model's outputs. Since we forcibly set $f_j$ (an intervention), we ignore any prior correlations with $\mathbf{h}$, and under near-determinism the output distribution is approximated by:
\begin{align*}
   p\!\bigl(\mathbf{Y} \mid do(f_j=\alpha)\bigr)
   &= \mathcal{N}\bigl(\mathbf{g}_j(\alpha),\;\sigma^2\mathbf{I}\bigr)
\end{align*}

For two different interventions $do(f_j = \alpha)$ and $do(f_j = \beta)$, we can now derive the KL divergence between the resulting output distributions. Using the standard formula for KL divergence between multivariate Gaussians with the same covariance matrix:
\begin{align*}
D_{\mathrm{KL}}(\mathcal{N}(\boldsymbol{\mu}_1, \boldsymbol{\Sigma}) \| \mathcal{N}(\boldsymbol{\mu}_2, \boldsymbol{\Sigma})) &= \frac{1}{2}(\boldsymbol{\mu}_1 - \boldsymbol{\mu}_2)^T \boldsymbol{\Sigma}^{-1} (\boldsymbol{\mu}_1 - \boldsymbol{\mu}_2)
\end{align*}

Therefore:
\begin{align*}
  D_{\mathrm{KL}}\!\Bigl(
    p\!\bigl(\mathbf{Y}\!\mid do(f_j=\alpha)\bigr)
    \,\big\|\,
    p\!\bigl(\mathbf{Y}\!\mid do(f_j=\beta)\bigr)
  \Bigr) &= D_{\mathrm{KL}}(\mathcal{N}(\mathbf{g}_j(\alpha), \sigma^2\mathbf{I}) \| \mathcal{N}(\mathbf{g}_j(\beta), \sigma^2\mathbf{I}))\\
  &= \frac{1}{2}(\mathbf{g}_j(\alpha) - \mathbf{g}_j(\beta))^T (\sigma^2\mathbf{I})^{-1} (\mathbf{g}_j(\alpha) - \mathbf{g}_j(\beta))\\
  &= \frac{1}{2\sigma^2}(\mathbf{g}_j(\alpha) - \mathbf{g}_j(\beta))^T (\mathbf{g}_j(\alpha) - \mathbf{g}_j(\beta))\\
  &= \frac{1}{2\sigma^2}\,\bigl\|\,\mathbf{g}_j(\alpha)-\mathbf{g}_j(\beta)\bigr\|^2
\end{align*}

\paragraph{First-Order Taylor Expansion.}
To make this expression more tractable, we use a first-order Taylor expansion of $\mathbf{g}_j(\alpha)$ around $\beta=0$:
\begin{align*}
   \mathbf{g}_j(\alpha) &= \mathbf{g}_j(0) + \frac{d\mathbf{g}_j}{d\alpha}\Big|_{\alpha=0} \cdot \alpha + o(\alpha)\\
   &\approx \mathbf{g}_j(0)+\bigl(\nabla \mathbf{g}_j(0)\bigr)\,\alpha
\end{align*}

When $\alpha$ is sufficiently small, the higher-order terms $o(\alpha)$ become negligible. Substituting this back into our KL divergence expression for the special case where $\beta = 0$:
\begin{align*}
   D_{\mathrm{KL}}\bigl(p(\mathbf{Y}|do(f_j = \alpha)) \| p(\mathbf{Y}|do(f_j = 0))\bigr) &= \frac{1}{2\sigma^2}\|\mathbf{g}_j(\alpha) - \mathbf{g}_j(0)\|^2\\
   &\approx \frac{1}{2\sigma^2}\|\mathbf{g}_j(0) + \nabla \mathbf{g}_j(0) \cdot \alpha - \mathbf{g}_j(0)\|^2\\
   &= \frac{1}{2\sigma^2}\|\nabla \mathbf{g}_j(0) \cdot \alpha\|^2\\
   &= \frac{1}{2\sigma^2}\,\alpha^2\,\bigl\|\nabla \mathbf{g}_j(0)\bigr\|^2
\end{align*}

This shows that the KL divergence (our measure of distribution change) grows quadratically with the intervention magnitude $\alpha$, with a proportionality constant determined by the gradient norm $\|\nabla \mathbf{g}_j(0)\|^2$.

\paragraph{Expected Causal Influence.}
Now we can compute the expected causal influence by substituting $\alpha = f_j(\mathbf{h})$ and taking the expectation over $\mathbf{h} \sim \mathcal{D}$:
\begin{align*}
  \text{Influence}(f_j)
  &= \mathbb{E}_{\mathbf{h}\sim\mathcal{D}}\,\Bigl[
    D_{\mathrm{KL}}\bigl(p(\mathbf{Y}|do(f_j = f_j(\mathbf{h}))) \| p(\mathbf{Y}|do(f_j = 0))\bigr)
  \Bigr]\\
  &\approx \mathbb{E}_{\mathbf{h}\sim\mathcal{D}}\,\Bigl[
    \frac{1}{2\sigma^2}\,f_j(\mathbf{h})^2\,\bigl\|\nabla \mathbf{g}_j(0)\bigr\|^2
  \Bigr]\\
  &= \frac{\|\nabla \mathbf{g}_j(0)\|^2}{2\sigma^2}\;
  \mathbb{E}_{\mathbf{h}\sim\mathcal{D}}\!\Bigl[f_j(\mathbf{h})^2\Bigr]
\end{align*}

Thus, the expected causal influence of feature $j$ as a mediator of information from dataset $\mathcal{D}$ is directly proportional to the expected squared activation $\,\mathbb{E}_{\mathcal{D}}\bigl[f_j(\mathbf{h})^2\bigr]$, with a proportionality constant $\frac{\|\nabla \mathbf{g}_j(0)\|^2}{2\sigma^2}$ that depends on the sensitivity of the model's outputs to changes in feature $j$.

\paragraph{Connection to Fisher Information.}
The Fisher Information for the SAE's decoder weights $\boldsymbol{\theta}_{j,\cdot}$ satisfies $\mathcal{I}\!\bigl(\boldsymbol{\theta}_{j,\cdot}\bigr) \propto \mathbb{E}_{\mathbf{h}}\bigl[f_j(\mathbf{h})^2\bigr]$ since the gradient w.r.t.\ $\boldsymbol{\theta}_{j,\cdot}$ includes $f_j(\mathbf{h})$ as a leading factor. Therefore, $\mathbb{E}[f_j(\mathbf{h})^2]$ tracks both the Fisher Information and the intervention-based notion of causal influence we derived above, establishing a direct link: $\text{Causal Influence}(f_j) \propto \text{Fisher Information}(\boldsymbol{\theta}_{j,\cdot}) \propto \mathbb{E}[f_j(\mathbf{h})^2]$. In other words, features most \emph{important} in a Fisher Information sense are precisely those with greatest causal influence on model outputs.

\paragraph{Implications for Feature Selection.}
By identifying features with high squared activations on $\mathcal{D}_{\text{forget}}$ but low activations on $\mathcal{D}_{\text{retain}}$, we can target mediators that specifically carry forget set knowledge. Clamping these features to zero during inference selectively reduces the model's capacity to propagate information from $\mathcal{D}_{\text{forget}}$ while preserving performance on $\mathcal{D}_{\text{retain}}$. 

\paragraph{Comparisons Across Datasets.}
For $\,\mathcal{D}_{\text{forget}}$ vs.\ $\,\mathcal{D}_{\text{retain}}$, we earlier defined:
\begin{align*}
  \texttt{forget\_score}(j) &= \mathbb{E}_{\mathcal{D}_{\text{forget}}}\!\bigl[f_j(\mathbf{h})^2\bigr]\\
  \texttt{retain\_score}(j) &= \mathbb{E}_{\mathcal{D}_{\text{retain}}}\!\bigl[f_j(\mathbf{h})^2\bigr]
\end{align*}

The ratio of causal influence of feature $j$ for $\mathcal{D}_{\text{forget}}$ versus $\mathcal{D}_{\text{retain}}$ is:
\begin{align*}
  \frac{\mathbb{E}_{\mathcal{D}_{\text{forget}}}[\text{Influence}(f_j)]}{
        \mathbb{E}_{\mathcal{D}_{\text{retain}}}[\text{Influence}(f_j)]} 
  &= \frac{\frac{\|\nabla \mathbf{g}_j(0)\|^2}{2\sigma^2} \cdot \mathbb{E}_{\mathcal{D}_{\text{forget}}}\![f_j(\mathbf{h})^2]}{\frac{\|\nabla \mathbf{g}_j(0)\|^2}{2\sigma^2} \cdot \mathbb{E}_{\mathcal{D}_{\text{retain}}}\![f_j(\mathbf{h})^2]}\\
  &= \frac{\mathbb{E}_{\mathcal{D}_{\text{forget}}}[f_j(\mathbf{h})^2]}{
        \mathbb{E}_{\mathcal{D}_{\text{retain}}}[f_j(\mathbf{h})^2]}\\
  &= \frac{\texttt{forget\_score}(j)}{\texttt{retain\_score}(j)}
\end{align*}

Thus, $\texttt{forget\_score}(j)/\texttt{retain\_score}(j)$ precisely captures how much more $f_j$ mediates the forget dataset relative to the retain dataset. This is the importance ratio we defined in Section \ref{sec:methodology}, which directly quantifies the relative causal influence of feature $j$ across datasets.
\end{proof}

\section{Proof of Neyman-Pearson Optimality}
\label{app:neyman_pearson_proof}

\begin{theorem}[Neyman-Pearson Optimality]
\label{thm:neyman_pearson_appendix}
Let $\mathcal{D}_{\text{forget}}$ and $\mathcal{D}_{\text{retain}}$ be the distributions of sequences from the forget and retain sets, respectively. If $\rho(X)$ is stochastically larger under $\mathcal{D}_{\text{forget}}$ than under $\mathcal{D}_{\text{retain}}$ (i.e., $\Pr_{X \sim \mathcal{D}_{\text{forget}}}[\rho(X) \geq t] \geq \Pr_{X \sim \mathcal{D}_{\text{retain}}}[\rho(X) \geq t]$ for all $t$), then among all classifiers with a false-positive rate at most $\alpha$, the threshold test $C^*(x) = \mathbf{1}[\rho(x) > \tau^*]$, where $\Pr_{X \sim \mathcal{D}_{\text{retain}}}[\rho(X) > \tau^*] = \alpha$, \emph{maximizes} the true-positive rate.
\end{theorem}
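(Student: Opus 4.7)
The plan is to recast the problem as a binary hypothesis test and apply the Neyman-Pearson framework with $\rho(X)$ as the test statistic. Let $H_0 : X \sim \mathcal{D}_{\text{retain}}$ and $H_1 : X \sim \mathcal{D}_{\text{forget}}$, so a classifier $C$ plays the role of a (possibly randomized) test, $\mathrm{FPR}(C) = \mathbb{E}_{\mathcal{D}_{\text{retain}}}[C(X)]$ is its Type I error, and $\mathrm{TPR}(C) = \mathbb{E}_{\mathcal{D}_{\text{forget}}}[C(X)]$ is its power. The threshold $\tau^*$ is pinned so that $\mathrm{FPR}(C^*) = \alpha$ exactly (with randomization at $\tau^*$ if $\rho$ has an atom there), and the goal is to prove $C^*$ maximizes power over the feasible set $\{C : \mathrm{FPR}(C) \leq \alpha\}$.

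First I would execute the standard Neyman-Pearson swap identity. For any feasible competitor $C$ and any nonnegative multiplier $k$,
\[
\mathrm{TPR}(C^*) - \mathrm{TPR}(C) \;=\; \int \bigl(C^*(x) - C(x)\bigr)\bigl(d\mathcal{D}_{\text{forget}} - k\,d\mathcal{D}_{\text{retain}}\bigr)(x) \;+\; k\bigl(\mathrm{FPR}(C^*) - \mathrm{FPR}(C)\bigr).
\]
The second term is nonnegative by the FPR constraint $\mathrm{FPR}(C) \leq \alpha = \mathrm{FPR}(C^*)$. By the definition of $C^*$, the difference $C^*(x) - C(x)$ shares sign with $\rho(x) - \tau^*$, so it remains to choose $k$ so that the signed measure $d\mathcal{D}_{\text{forget}} - k\,d\mathcal{D}_{\text{retain}}$ is nonnegative on $\{\rho > \tau^*\}$ and nonpositive on $\{\rho \leq \tau^*\}$; then the first integrand is pointwise nonnegative and the argument closes.

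The main obstacle is exactly this sign-alignment step, since classical Neyman-Pearson delivers it only when the likelihood ratio $d\mathcal{D}_{\text{forget}}/d\mathcal{D}_{\text{retain}}$ is a monotone function of $\rho$ --- a condition strictly stronger than first-order stochastic dominance, which on its own is not enough to pin down a pointwise density inequality. I would bridge the gap by strengthening the hypothesis in practice to a monotone likelihood ratio in $\rho$, which is the standard Karlin--Rubin setting and is empirically plausible given the clean separation between $F_R$ and $F_F$ visible in Figure~\ref{fig:bio_token}; under MLR the swap above goes through for arbitrary measurable classifiers, yielding the stated optimality. Alternatively, if one is content to compare $C^*$ only against pure $\rho$-threshold tests at varying $\tau$, the monotonicity of $\mathrm{FPR}$ and $\mathrm{TPR}$ in $\tau$ combined with the Strassen coupling $\rho_R \leq \rho_F$ implied by $F_F \succeq_{\mathrm{st}} F_R$ delivers the claim directly within that restricted class.
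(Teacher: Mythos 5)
Your proposal is sound in substance and, on the key point, more careful than the paper's own argument. The paper proves the theorem by a rearrangement argument on acceptance regions: for any set $A$ with $\Pr_{X\sim\mathcal{D}_{\text{retain}}}[A]\le\alpha$ it claims one can exchange low-$\rho$ points of $A$ for high-$\rho$ points outside $A$ at equal retain-mass while never decreasing forget-mass, ``because regions of higher $\rho$ values are relatively more likely under $\mathcal{D}_{\text{forget}}$.'' That exchange step is exactly the sign-alignment step you flag in your swap identity: it needs the density ratio to be monotone in $\rho$ (MLR), which is strictly stronger than first-order stochastic dominance of the marginal law of $\rho$, the only stated hypothesis. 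Dominance alone does not support optimality over \emph{all} classifiers: take $\rho\in\{1,2,3\}$ with retain probabilities $(0.5,0.4,0.1)$ and forget probabilities $(0.4,0.1,0.5)$; dominance holds, yet at $\alpha=0.6$ the non-threshold acceptance set $\{x:\rho(x)\in\{1,3\}\}$ has false-positive rate $0.6$ and true-positive rate $0.9$, beating every $\rho$-threshold test, whose best true-positive rate here is $0.6$ (smooth the atoms slightly if one insists on a $\tau^*$ attaining $\alpha$ exactly). So the gap you identified is genuine and is present, unacknowledged, in the paper's proof; your two repairs---strengthening the hypothesis to MLR in $\rho$ (the Karlin--Rubin setting), or restricting the comparison class to $\rho$-threshold tests, where dominance makes the claim essentially immediate---are the right ways to make the statement rigorous.

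As to method, you take the textbook Lagrangian route (the multiplier $k$ and the pointwise-nonnegative integrand, which also handles atoms via randomization cleanly), whereas the paper argues geometrically by swapping regions of the input space. The two are equivalent in what they ultimately require: both reduce to the assertion that $\{\rho>\tau^*\}$ carries the highest forget-to-retain likelihood, which is MLR, not dominance. Your version has the advantage of making that dependence explicit; the paper's version buys brevity at the cost of burying the assumption inside the phrase ``relatively more likely.''
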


\begin{proof}
We adapt the classical Neyman-Pearson Lemma to our unlearning context. Our goal is to find the optimal decision rule for classifying inputs as either forget-relevant or retain-relevant.

Consider the class of all decision rules $a: \mathcal{X} \rightarrow \{\text{clamp}, \text{no-clamp}\}$ with false-positive rate at most $\alpha$. That is, all rules $a$ such that:
\begin{equation*}
\Pr_{X \sim \mathcal{D}_{\text{retain}}}[a(X) = \text{clamp}] \leq \alpha
\end{equation*}

For each decision rule $a$, define its acceptance region $A = \{x \in \mathcal{X} : a(x) = \text{clamp}\}$. The constraint on false-positive rate translates to $\Pr_{X \sim \mathcal{D}_{\text{retain}}}[A] \leq \alpha$.

Now, define the threshold-based decision rule $a^*$ as:
\begin{equation*}
a^*(x) = \mathbf{1}[\rho(x) > \tau^*]
\end{equation*}
where $\tau^*$ is chosen such that $\Pr_{X \sim \mathcal{D}_{\text{retain}}}[\rho(X) > \tau^*] = \alpha$. The acceptance region for this rule is $A^* = \{x : \rho(x) > \tau^*\}$.

We need to prove that $a^*$ maximizes the true-positive rate among all rules with false-positive rate at most $\alpha$. In other words, for any rule $a$ with $\Pr_{X \sim \mathcal{D}_{\text{retain}}}[a(X) = \text{clamp}] \leq \alpha$, we must show:
\begin{equation*}
\Pr_{X \sim \mathcal{D}_{\text{forget}}}[a(X) = \text{clamp}] \leq \Pr_{X \sim \mathcal{D}_{\text{forget}}}[a^*(X) = \text{clamp}]
\end{equation*}

We use the stochastic dominance property: for any threshold $t$, $\Pr_{X \sim \mathcal{D}_{\text{forget}}}[\rho(X) \geq t] \geq \Pr_{X \sim \mathcal{D}_{\text{retain}}}[\rho(X) \geq t]$. This means that regions of higher $\rho$ values are relatively more likely under $\mathcal{D}_{\text{forget}}$ than under $\mathcal{D}_{\text{retain}}$.

Consider any decision rule $a$ with acceptance region $A$ where $\Pr_{X \sim \mathcal{D}_{\text{retain}}}[A] \leq \alpha$. Due to the stochastic dominance property, we can always construct a threshold-based region $\tilde{A} = \{x : \rho(x) > \tilde{\tau}\}$ such that:
1. $\Pr_{X \sim \mathcal{D}_{\text{retain}}}[\tilde{A}] = \Pr_{X \sim \mathcal{D}_{\text{retain}}}[A]$ (same false-positive rate)
2. $\Pr_{X \sim \mathcal{D}_{\text{forget}}}[\tilde{A}] \geq \Pr_{X \sim \mathcal{D}_{\text{forget}}}[A]$ (equal or higher true-positive rate)

This is because exchanging points from low-$\rho$ regions in $A$ with points from high-$\rho$ regions outside $A$ (while maintaining the same false-positive rate) will always increase the true-positive rate due to stochastic dominance.

If $\Pr_{X \sim \mathcal{D}_{\text{retain}}}[A] < \alpha$, we can further expand $\tilde{A}$ to $A^*$ by lowering the threshold from $\tilde{\tau}$ to $\tau^*$, which only increases the true-positive rate further.

Therefore, for any decision rule $a$ with false-positive rate at most $\alpha$:
\begin{equation*}
\Pr_{X \sim \mathcal{D}_{\text{forget}}}[a(X) = \text{clamp}] = \Pr_{X \sim \mathcal{D}_{\text{forget}}}[A] \leq \Pr_{X \sim \mathcal{D}_{\text{forget}}}[A^*] = \Pr_{X \sim \mathcal{D}_{\text{forget}}}[a^*(X) = \text{clamp}]
\end{equation*}

This establishes that the threshold test $a^*(x) = \mathbf{1}[\rho(x) > \tau^*]$ maximizes the true-positive rate among all tests with false-positive rate at most $\alpha$.
\end{proof}

\textbf{Practical Implications:} This theorem establishes the statistical optimality of our thresholding approach for making the binary decision of whether to apply an intervention. In particular, it shows that our dynamic classification rule maximizes coverage on forget-set queries while maintaining a controlled false-positive rate on retain-set queries.

\section{Proof of Dynamic Clamping Dominance}
\label{app:dominance_proof}

\begin{theorem}[Dominance of Dynamic Clamping]
\label{appthm:dominance}
Let $S_{n_{\text{feats}}}$ be a fixed subset of features identified as forget-relevant. Consider the static approach $a_{\text{static}}(x)$ that clamps features in $S_{n_{\text{feats}}}$ whenever they activate, and the dynamic approach $a_{\text{dynamic}}(x)$ that first classifies input $x$ using $C(x) = \mathbf{1}[\rho(x) > \tau]$ and only then applies clamping. Under the stochastic dominance assumption from Theorem~\ref{thm:neyman_pearson}, there exists a threshold $\tau^*$ such that $a_{\text{dynamic}}$ achieves equal or greater coverage on $\mathcal{D}_{\text{forget}}$ than $a_{\text{static}}$ while maintaining equal side-effects on $\mathcal{D}_{\text{retain}}$, making dynamic clamping strictly dominant in the coverage-side effect trade-off.
\end{theorem}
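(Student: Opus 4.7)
The plan is to view both $a_{\text{static}}$ and $a_{\text{dynamic}}$ as binary token-level classifiers and then invoke Theorem~\ref{thm:neyman_pearson} with $\rho(X)$ as the test statistic. By construction, $\rho(x)=\frac{1}{|x|}\sum_{t}\mathbf{1}[\exists\, j\in S_{n_{\text{feats}}}: f_j(\mathbf{h}_t)>0]$ is exactly the fraction of tokens in $x$ clamped by $a_{\text{static}}$, so the length-weighted token-level side-effect of static on $\mathcal{D}_{\text{retain}}$ equals $\mathbb{E}_{X\sim\mathcal{D}_{\text{retain}}}[\rho(X)]$ and its coverage on $\mathcal{D}_{\text{forget}}$ equals $\mathbb{E}_{X\sim\mathcal{D}_{\text{forget}}}[\rho(X)]$. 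In contrast, $a_{\text{dynamic}}$ with threshold $\tau$ clamps every token of $x$ precisely when $\rho(x)>\tau$ and no tokens otherwise, so its token-level side-effect and coverage are $\Pr_{\mathcal{D}_{\text{retain}}}[\rho(X)>\tau]$ and $\Pr_{\mathcal{D}_{\text{forget}}}[\rho(X)>\tau]$ respectively.

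Next I would calibrate $\tau^*$ so as to equalize side-effects: pick $\tau^*\geq 0$ satisfying $\Pr_{X\sim\mathcal{D}_{\text{retain}}}[\rho(X)>\tau^*]=\mathbb{E}_{X\sim\mathcal{D}_{\text{retain}}}[\rho(X)]=:\alpha$. Such a $\tau^*$ exists because $\rho\in[0,1]$ implies $\alpha\leq\Pr_{\mathcal{D}_{\text{retain}}}[\rho>0]$, and the survival function $\tau\mapsto\Pr_{\mathcal{D}_{\text{retain}}}[\rho>\tau]$ is monotonically decreasing from $\Pr_{\mathcal{D}_{\text{retain}}}[\rho>0]$ down to $0$; when the distribution of $\rho$ has atoms, the same level $\alpha$ is attained by a standard randomized threshold rule, exactly as in the classical Neyman-Pearson construction.

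I then invoke Theorem~\ref{thm:neyman_pearson} at this level $\alpha$: under the stochastic-dominance hypothesis, the threshold rule $C^*(x)=\mathbf{1}[\rho(x)>\tau^*]$ maximizes the true-positive rate among all classifiers with false-positive rate at most $\alpha$. Because $a_{\text{static}}$ is itself a (specific) classifier with false-positive rate exactly $\alpha$, it lies in the admissible class, and therefore $\Pr_{\mathcal{D}_{\text{forget}}}[\rho(X)>\tau^*]\geq\mathbb{E}_{\mathcal{D}_{\text{forget}}}[\rho(X)]$, which is precisely the statement that $a_{\text{dynamic}}$ achieves coverage on $\mathcal{D}_{\text{forget}}$ at least as large as that of $a_{\text{static}}$ at matched side-effects on $\mathcal{D}_{\text{retain}}$.

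The main obstacle is that $a_{\text{static}}$ is formally a token-level rule whereas Theorem~\ref{thm:neyman_pearson} is phrased for sequence-level classifiers, so a careful reduction to a common decision space is needed before Neyman-Pearson can be legitimately applied. I would resolve this by taking each $(x,t)$ pair, together with the surrounding sequence context, as the atomic unit of classification, so that both $a_{\text{static}}$ and $a_{\text{dynamic}}$ become binary functions whose (FPR, TPR) coordinates live on a single ROC diagram; stochastic dominance of $\rho$ then guarantees, via Theorem~\ref{thm:neyman_pearson}, that the ROC curve traced by thresholding $\rho$ lies on or above the operating point realized by $a_{\text{static}}$, which is what yields the claimed dominance.
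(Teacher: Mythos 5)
There is a genuine gap, and it sits exactly at the step you flagged as the ``main obstacle.'' Your formalization measures coverage and side-effects at the \emph{token} level, so the static rule's operating point is $(\mathbb{E}_{\mathcal{D}_{\text{retain}}}[\rho(X)],\,\mathbb{E}_{\mathcal{D}_{\text{forget}}}[\rho(X)])$. The paper instead defines both quantities at the \emph{sequence} level, as the probability that any clamping occurs on the input; under that accounting the static clamp set is exactly $\{x:\rho(x)>0\}$, itself a superlevel set of $\rho$, so its Neyman--Pearson step only ever compares one $\rho$-threshold region with another and goes through directly. In your version the competing classifier --- token-level static clamping, even after lifting to $(x,t)$ pairs --- is \emph{not} a function of $\rho(x)$ alone (it depends on which tokens activate), and Theorem~\ref{thm:neyman_pearson} cannot legitimately be invoked against it: stochastic dominance of $\rho$ lets threshold rules beat other $\rho$-threshold rules, but optimality of $\rho$-thresholding against arbitrary classifiers requires a monotone likelihood ratio in $\rho$ (equivalently, a concave threshold ROC), which is strictly stronger than the stated hypothesis.

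Moreover, the inequality you need can actually fail. Since $\mathbb{E}[\rho]=\int_0^1\Pr[\rho>t]\,dt$, your static point is the uniform average over $t$ of the threshold ROC points $\bigl(\Pr_{\mathcal{D}_{\text{retain}}}[\rho>t],\Pr_{\mathcal{D}_{\text{forget}}}[\rho>t]\bigr)$, and demanding that the NP-calibrated threshold at level $\alpha=\mathbb{E}_{\mathcal{D}_{\text{retain}}}[\rho]$ dominate that average is a concavity statement about this ROC. Take $\rho\in\{0,0.4,0.8\}$ with retain probabilities $(0.5,0.1,0.4)$ and forget probabilities $(0.1,0.5,0.4)$: stochastic dominance holds, $\alpha=\mathbb{E}_{\text{retain}}[\rho]=0.36$, and the standard NP randomized threshold at level $0.36$ clamps only a $0.9$ fraction of the $\rho=0.8$ sequences, giving forget coverage $0.36<0.52=\mathbb{E}_{\text{forget}}[\rho]$; a single deterministic threshold cannot even realize the level $0.36$ and, constrained to at most that false-positive rate, achieves coverage $0$. (A mixture of two \emph{different} thresholds does recover the claim here, since the averaged point lies in the convex hull of the ROC, but that is a different rule and a different argument from the one you give, and it is not supplied by Theorem~\ref{thm:neyman_pearson}.) To repair the proof, either adopt the paper's sequence-level definitions of coverage and side-effect --- under which your calibration step plus the comparison against $\{\rho>0\}$ suffices --- or add an MLR/ROC-concavity assumption on $\rho$ if you want the token-level statement.
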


\begin{proof}
We begin by formalizing the metrics used to evaluate both approaches and precisely defining their operation.

\paragraph{Preliminaries and Definitions.}
Let $\mathcal{X}$ be the space of possible input sequences. For a sequence $x = (x_1, \ldots, x_T)$ and its corresponding hidden states $\mathbf{h}_t$, we define:

\begin{itemize}
    \item A token $t$ is triggered by $S_{n_{\text{feats}}}$ if $\exists j \in S_{n_{\text{feats}}}$ such that $f_j(\mathbf{h}_t) > 0$
    \item The fraction of triggered tokens in a sequence: $\rho(x) = \frac{1}{T}\sum_{t=1}^T \mathbf{1}[\exists j \in S_{n_{\text{feats}}}: f_j(\mathbf{h}_t) > 0]$
\end{itemize}

We consider two distributions: $\mathcal{D}_{\text{forget}}$: The distribution of forget-relevant queries, and $\mathcal{D}_{\text{retain}}$: The distribution of retain-relevant queries.

\paragraph{The Two Approaches.}
For both approaches, we define a \emph{clamp set} $B_{\text{method}} \subseteq \mathcal{X}$ as the set of inputs where the method applies some clamping.

\quad 1. \textbf{Static Approach} ($a_{\text{static}}$): Clamps features in $S_{n_{\text{feats}}}$ whenever they activate on any token. here the Clamp set is $B_{\text{stat}} = \{x : \exists t, j \in S_{n_{\text{feats}}} \text{ such that } f_j(\mathbf{h}_t) > 0\}$.

\quad  2. \textbf{Dynamic Approach} ($a_{\text{dynamic}}$):  Computes $\rho(x)$ and applies a threshold test $\rho(x) > \tau$. Only clamps if the sequence passes this test. The Clamp set for threshold $\tau$: $B_{\text{dyn}}(\tau) = \{x : \rho(x) > \tau\}$.

\paragraph{Performance Metrics.}
We define:
\begin{itemize}
   \item \textbf{Coverage}: The probability that clamping occurs on forget-set queries
   $$\text{Coverage(method)} = \Pr_{x \sim \mathcal{D}_{\text{forget}}}[x \in B_{\text{method}}]$$
   \item \textbf{Side-effect}: The probability that clamping occurs on retain-set queries
   $$\text{SideEffect(method)} = \Pr_{x \sim \mathcal{D}_{\text{retain}}}[x \in B_{\text{method}}]$$
\end{itemize}

\paragraph{Step 1: Find the side-effect of the static approach.}
The static approach clamps whenever any token has an activating feature in $S_{n_{\text{feats}}}$. Therefore:

$$\text{SideEffect}(a_{\text{static}}) = \Pr_{x \sim \mathcal{D}_{\text{retain}}}[x \in B_{\text{stat}}] = \alpha$$

\paragraph{Step 2: Find a threshold $\tau^*$ that yields the same side-effect for the dynamic approach.}
By our assumption that $\rho(x)$ is stochastically larger on $\mathcal{D}_{\text{forget}}$ than on $\mathcal{D}_{\text{retain}}$, we know that $\Pr_{x \sim \mathcal{D}_{\text{retain}}}[\rho(x) > \tau]$ is a strictly decreasing function of $\tau$.

Therefore, there exists a threshold $\tau^* \in [0,1]$ such that:

$$\Pr_{x \sim \mathcal{D}_{\text{retain}}}[\rho(x) > \tau^*] = \alpha = \Pr_{x \sim \mathcal{D}_{\text{retain}}}[x \in B_{\text{stat}}]$$

This means:

$$\text{SideEffect}(a_{\text{dynamic}}(\tau^*)) = \text{SideEffect}(a_{\text{static}})$$

\paragraph{Step 3: Show that coverage is greater for the dynamic approach.}
From Theorem \ref{thm:neyman_pearson}, we know that thresholding $\rho(x)$ at $\tau^*$ gives the optimal classifier for distinguishing between $\mathcal{D}_{\text{forget}}$ and $\mathcal{D}_{\text{retain}}$ at false positive rate $\alpha$.

More formally, among all sets $A \subseteq \mathcal{X}$ with $\Pr_{x \sim \mathcal{D}_{\text{retain}}}[x \in A] = \alpha$, the set $B_{\text{dyn}}(\tau^*) = \{x : \rho(x) > \tau^*\}$ maximizes $\Pr_{x \sim \mathcal{D}_{\text{forget}}}[x \in A]$.

Since $B_{\text{stat}}$ is one such set with $\Pr_{x \sim \mathcal{D}_{\text{retain}}}[x \in B_{\text{stat}}] = \alpha$, we must have:

$$\Pr_{x \sim \mathcal{D}_{\text{forget}}}[x \in B_{\text{dyn}}(\tau^*)] \geq \Pr_{x \sim \mathcal{D}_{\text{forget}}}[x \in B_{\text{stat}}]$$

Therefore:
$$\text{Coverage}(a_{\text{dynamic}}(\tau^*)) \geq \text{Coverage}(a_{\text{static}})$$

If $\rho(x)$ is \emph{strictly} stochastically larger on $\mathcal{D}_{\text{forget}}$ than on $\mathcal{D}_{\text{retain}}$ (which holds in practice as forget-relevant features activate more frequently on forget-set queries), then this inequality is strict.

We have established that for any static clamping approach, there exists a threshold $\tau^*$ such that the dynamic approach with this threshold achieves the same side-effect on the retain set; and achieves equal or greater coverage on the forget set.
This proves that dynamic clamping dominates static clamping in the coverage-side effect trade-off.  \end{proof}

\section{Distribution of token activations on WMDP-Cyber}

\autoref{fig:cyber_token} plots the distribution of forget-set activated tokens on WMDP-Cyber. The threshold is chosen to control the retain set’s false
positive rate and we find that $p_{dyn}=95$ typically separates forget-set queries effectively achieving high recall on $\smash{\mathcal{D}_{forget}}$.
On WMDP-Cyber, DSG successfully transfers from the retain set (WikiText) and forget set to the test query set. 

\begin{figure}[htbp]
    \centering
    \includegraphics[width=0.55\textwidth]{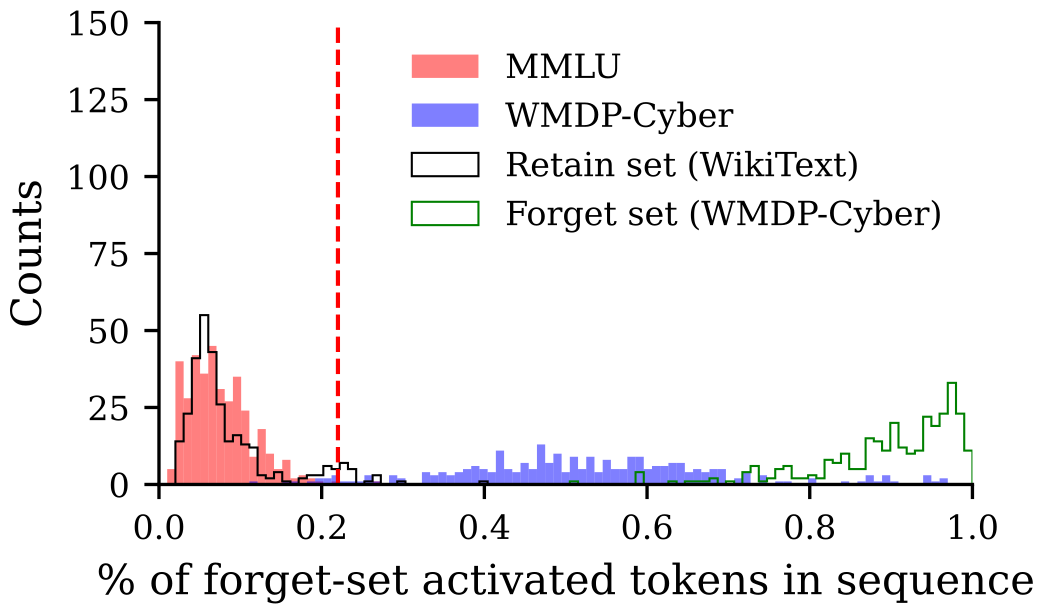} %
    \caption{\small Distribution of forget-set activated tokens for WMDP-Cyber. Threshold at the 95th percentile (dashed red line) effectively separates MMLU from WMDP. 
}
    \label{fig:cyber_token}
\end{figure}

\section{Unlearning on WMDP}
\label{app:unlearning-wmdp}

\subsection{Hyperparameter Details and Model Descriptions for Baselines}
\label{app:unlearning-wmdp-hyperparams}

To ensure a comprehensive and fair comparison of unlearning methods, we conducted extensive hyperparameter sweeps for each baseline, optimizing for both the effectiveness of knowledge removal and the preservation of model utility. For all gradient-based methods, we experimented with updating parameters in layers 3, 7, and 11 (as recommended in \citep{li2024wmdp}), as well as all layers. Unless otherwise specified, all experiments used the \texttt{google/gemma-2-2b-it} \citep{lieberum2024gemma} model.

\paragraph{Dynamic SAE Guardrails (DSG).}
Our proposed method, DSG, is a non-gradient-based intervention method that selectively removes hazardous knowledge by manipulating SAE feature activations. DSG first identifies a subset of SAE features strongly indicative of the knowledge to be forgotten, based on their differential activation patterns on forget and retain datasets. During inference, DSG employs a dynamic classifier to assess the relevance of input sequences. If a sequence is classified as forget-relevant based on the aggregate activation of selected features, DSG dynamically clamps these features to a negative value. This conditional, sequence-level clamping ensures that intervention is applied only when necessary, minimizing side effects on benign inputs and preserving model utility.

We employed the \texttt{gemma-scope-2b-pt-res} SAE (width 16k) applied to layer 3 ($\ell_0$ 142) \citep{lieberum2024gemma}. The dynamic threshold percentile ($p_{\text{dyn}}$) was fixed at 95. We swept the importance ratio percentile ($p_{\text{ratio}}$), number of selected features, and clamp strength ($c$):

\begin{table}[h]
\centering
\small
\begin{tabular}{ll}
    \toprule
    \textbf{Hyperparameter} & \textbf{Values Tested} \\
    \midrule
    Importance Ratio Percentile ($p_{\text{ratio}}$) & 90, 95 \\
    Number of Features & 10, 20, 30 \\
    Clamp Strength ($c$) & 10, 25, 50, 100, 200, 300, 400, 500 \\
    \bottomrule
\end{tabular}
\caption{\small Hyperparameter sweep for Dynamic SAE Guardrails (DSG). Fixed values: $p_{\text{dyn}}=95$.}
\label{tab:dsg_hyperparams}
\end{table}

The best configurations were: WMDP-Bio ($p_{\text{ratio}}=95$, features=20, $c=500$) and WMDP-Cyber ($p_{\text{ratio}}=90$, features=30, $c=500$).

\paragraph{Representation Misdirection for Unlearning (RMU).}
RMU \citep{li2024wmdp} is a gradient-based finetuning method that minimizes a composite loss function to achieve targeted forgetting while preserving model utility. This loss combines a forget loss and a retain loss. The forget loss acts on the model's activations on the forget dataset, increasing their norm in specific directions and making it difficult for later layers to process this information effectively. Simultaneously, the retain loss regularizes the updated model's activations on the retain dataset, encouraging activations to stay close to the original model's activations on benign data.

Key hyperparameters include the steering coefficient, which controls how much the activations are amplified on hazardous data, and the alpha parameter ($\alpha$), which balances utility preservation against knowledge removal. We focus
unlearning only on the MLPs, as recommended in \citet{li2024wmdp}.

\begin{table}[h]
\centering
\small
\begin{tabular}{ll}
    \toprule
    \textbf{Hyperparameter} & \textbf{Values Tested} \\
    \midrule
    Steering Coefficient & 1, 5, 10, 20, 100, 200, 400, 500, 800, 1000 \\
    Alpha ($\alpha$) & 0.01, 0.1, 1, 10, 100, 300, 500 \\
    Batch Size & 4, 8 \\
    Steps & 400, 800 \\
    \bottomrule
\end{tabular}
\caption{\small Hyperparameter sweep for RMU. Fixed values: Monitoring Layer ID=3, Learning Rate=5e-6.}
\label{tab:rmu_hyperparams}
\end{table}

The best configuration for WMDP-Bio used steering coefficient 400, alpha 100, monitoring layer 3, learning rate 5e-6, batch size 8, and 400 steps. For WMDP-Cyber, we used steering coefficient 500, alpha 10, monitoring layer 3, and batch size 8 with 400 steps.

\paragraph{Scalable Remembering and Unlearning unBound (SCRUB).}
SCRUB \citep{kurmanji2023towards} employs a student-teacher framework for knowledge distillation-based unlearning. It trains a student model, a clone of the original model, to forget hazardous knowledge under the guidance of the original, frozen teacher model. During forget epochs, SCRUB maximizes the KL divergence between student and teacher logits on the forget dataset. In retain epochs, it minimizes this divergence on the retain dataset, guiding the student to mimic the teacher on benign data. 

We swept across values of beta ($\beta$), a weighting factor balancing knowledge distillation and task-specific loss, while fixing alpha ($\alpha$) and gamma ($\gamma$) at 1.0:

\begin{table}[h]
\centering
\small
\begin{tabular}{ll}
    \toprule
    \textbf{Hyperparameter} & \textbf{Values Tested} \\
    \midrule
    Beta ($\beta$) & 0.0001, 0.001, 0.01, 0.1, 1, 10 \\
    Learning Rate (lr) & 1e-4, 1e-5, 5e-6 \\
    Batch Size & 4, 8 \\
    Steps & 400, 800 \\
    \bottomrule
\end{tabular}
\caption{\small Hyperparameter sweep for SCRUB. Fixed values: $\alpha=1.0$, $\gamma=1.0$, KL Temperature=2.0.}
\label{tab:scrubs_hyperparams}
\end{table}

The best configuration for WMDP-Bio used beta 0.01, learning rate 5e-6, batch size 8, and 400 maximum batches. For WMDP-Cyber, we used beta 0.1, learning rate 1e-5, batch size 8, and 400 maximum batches.

\paragraph{Selective Synaptic Dampening (SSD).}
SSD \citep{foster2024fast} identifies and dampens parameters more important for the forget set than the retain set. It adapts a method originally developed for image classification to language modeling by modifying the loss function to use log-perplexity. SSD calculates parameter importance scores based on gradients observed for both forget and retain datasets, then applies a dampening factor to parameters with higher importance for the forget dataset.

We performed a grid search spanning dampening thresholds and constants:

\begin{table}[h!]
\small
\centering
\begin{tabular}{ll}
    \toprule
    \textbf{Hyperparameter} & \textbf{Values Tested} \\
    \midrule
    Threshold & 0.1, 0.25, 0.5, 1, 2.5, 5 \\
    Dampening Constant & 1e-5, 1e-4, 1e-3, 1e-2, 1e-1, 1 \\
    \bottomrule
\end{tabular}
\caption{\small Hyperparameter sweep for Selective Synaptic Dampening (SSD).}
\label{tab:ssd_hyperparams}
\end{table}

The optimal configuration for WMDP-Bio used threshold 0.5 and dampening constant 1e-3. For WMDP-Cyber, we used threshold 1.0 and dampening constant 1e-2.

\paragraph{Static SAE Clamping (Farrell et al.).}
This non-gradient-based approach \citep{farrell2024applying} identifies salient SAE features and statically clamps their activations during inference to remove unwanted knowledge. Unlike our dynamic approach, this method applies feature clamping universally to all inputs whenever a selected feature activates, rather than conditionally based on sequence-level classification.

We varied the retain threshold, multiplier (clamp value), and number of features:

\begin{table}[h]
\centering
\small
\begin{tabular}{ll}
    \toprule
    \textbf{Hyperparameter} & \textbf{Values Tested} \\
    \midrule
    Retain Threshold & 0.01, 0.001, 0.005, 0.1, 1 \\
    Multiplier (Clamp Value) & 10, 25, 50, 100, 200, 500 \\
    Number of Features & 5, 10, 20, 30, 50 \\
    \bottomrule
\end{tabular}
\caption{\small Hyperparameter sweep for Static SAE Clamping. Fixed value: Sequence Length=1024.}
\label{tab:farrell_hyperparams}
\end{table}

The best configurations were: WMDP-Bio (retain threshold=0.01, multiplier=200, features=5) and WMDP-Cyber (retain threshold=0.005, multiplier=500, features=10).

\paragraph{Gradient Ascent (GA).}
GA \citep{jang2023knowledge} is a finetuning-based unlearning method that directly minimizes the likelihood of correct predictions on the forget dataset using gradient ascent. In contrast to standard finetuning which employs gradient descent, GA utilizes gradient ascent to maximize the cross-entropy loss on the forget dataset, pushing parameters in directions that increase prediction error on the targeted data.

We varied the learning rate and beta ($\beta$), the retain loss weight:

\begin{table}[h]
\centering
\small
\begin{tabular}{ll}
    \toprule
    \textbf{Hyperparameter} & \textbf{Values Tested} \\
    \midrule
    Learning Rate (lr) & 1e-5, 5e-5 \\
    Beta (Retain Loss Weight) & 0.01, 0.1, 1.0, 5.0, 10.0 \\
    \bottomrule
\end{tabular}
\caption{\small Hyperparameter sweep for Gradient Ascent. Fixed values: Gamma=1.0, Batch Size=8, Steps=400.}
\label{tab:ga_hyperparams}
\end{table}

We explored both with and without retain data configurations. The best setting for WMDP-Bio used learning rate 1e-5 with beta 1.0. For WMDP-Cyber, we used learning rate 1e-5 with beta 0.1.

\paragraph{Negative Preference Optimization (NPO).}
NPO \citep{zhang2024negative} adapts preference optimization techniques to treat the forget set as negative examples. It reframes unlearning as preference learning, optimizing the model to assign lower likelihood to the forget set. The beta parameter controls the extent to which the unlearned model's output distribution can diverge from the original model. To mitigate utility degradation and preserve performance on benign data, NPO can be regularized using two distinct retain loss types: Negative Log-Likelihood (NLL) and Kullback-Leibler (KL) divergence. NLL minimization directly encourages the model to maintain high probabilities for correct tokens in the retain set, calculated as the negative sum of log probabilities assigned to ground truth tokens. KL divergence minimization encourages the probability distribution of the unlearned model to remain close to that of the original model on retain set inputs, measured as the information lost when approximating the original model's distribution with the unlearned model's distribution. 

We tested NPO with various configurations:

\begin{table}[h]
\centering
\small
\begin{tabular}{ll}
    \toprule
    \textbf{Hyperparameter} & \textbf{Values Tested} \\
    \midrule
    Alpha (Retain Loss Weight) & 0.01, 0.1, 1.0 \\
    Beta (Temperature Parameter) & 0.1, 1.0 \\
    Retain Loss Type & NLL, KL \\
    \bottomrule
\end{tabular}
\caption{\small Hyperparameter sweep for NPO. Fixed values: Gamma=1.0, Learning Rate=1e-5, Batch Size=8, Steps=400.}
\label{tab:npo_hyperparams}
\end{table}

The optimal settings for WMDP-Bio used alpha 0.1, beta 0.1, and KL divergence as the retain loss type and WMDP-Cyber used alpha 1.0, beta 0.1, and KL divergence as the retain loss type.

For all methods, we selected configurations that minimized WMDP accuracy while maintaining at least 99\% of the original model's MMLU accuracy.

\paragraph{Compute.} All finetuning and inference was performed on 4 A6000 GPUs in under a day.

\subsection{Results on WMDP-Cyber}
\autoref{tab:cyber_performance} shows the performance of various unlearning baselines on WMDP-Cyber dataset. RMU is less effective on WMDP-Cyber (88.00\%), likely due to the data inefficiency of gradient-based methods on the smaller cyber forget set.

\label{app:wmdp-cyber}
\begin{table}[h!]
\centering
\small
\begin{tabular}{l cccccc c}
\toprule
\multirow{2}{*}{Method} & \multirow{2}{*}{WMDP Cyber ($\downarrow$)} & \multicolumn{5}{c}{MMLU ($\uparrow$)} & \multirow{2}{*}{MT ($\uparrow$)} \\
\cmidrule(lr){3-7} 
 & & HS Hist & C. Bio & HS Geo & H. Aging & All & \\
\midrule
Target $\mathcal{M}$  & 100.00 & 100.00 & 100.00 & 100.00 & 100.00 & 100.00 & 7.36 \\
\midrule
GA &  98.91 & 98.15 & 100.0 & 100.0 & 100.0 & 99.46 & 7.39 \\
NPO & 96.36 & 100.0  & 100.0  & 100.0  & 100.0  & 100.0 & 7.18 \\
SSD & 98.91 & 100.00 & 100.00 & 98.08 & 98.81 & 99.19 & 7.25 \\
SCRUB & 97.82 & 99.07 & 100.00 & 100.00 & 98.81 & 99.46 & 6.51 \\
Farrell et al. & 52.73 & 99.07 & 100.00 & 100.00 & 97.62 & 99.19 & 7.39 \\
RMU & 88.00 & 99.07 & 100.00 & 99.04 & 98.81 & 99.19 & 7.28 \\
\midrule
\textbf{DSG (Ours)} & 26.74 & 99.07 & 100.00 & 100.00 & 100.00 & 99.73 & 7.66 \\
\bottomrule
\end{tabular}

\caption{\small Unlearning performance on WMDP-Cyber. All represents the average MMLU score. MT-Bench scores show 0.13 variance across 5 runs. DSG shows superior unlearning effectiveness compared to other baselines while maintaining high MMLU performance. }
\label{tab:cyber_performance}
\end{table}

\subsection{MT-Bench Evaluation Details}
\label{app:mt_bench_details}

To measure the impact of unlearning on the model's general conversational abilities and fluency, we utilized the MT-Bench benchmark \citep{zheng2023judging}. Specifically, we report the average score across two conversational turns (the two-turn average score), which provides a measure of multi-turn conversational quality. Following standard MT-Bench protocol, evaluations were conducted using GPT-4~\citep{achiam2023gpt} as the judge to score the model's responses. To ensure the robustness of these fluency assessments, each model configuration reported in Section \ref{sec:unlearningWMDP} was evaluated 5 times using MT-Bench. The mean scores presented in Table \ref{tab:bio_performance} and Table \ref{tab:cyber_performance} reflect the average performance across these runs, and the standard deviation across the 5 runs is noted in the respective table captions (0.16 for WMDP-Bio results, 0.13 for WMDP-Cyber results). Higher MT-Bench scores indicate better preservation of general conversational capabilities after the unlearning procedure.

\section{Unlearning on MUSE}
\label{app:unlearning-muse}

\subsection{Hyperparameter Details and Model Descriptions for Baselines}
\label{app:unlearning-muse-hyperparams}

We provide implementation details for the baseline unlearning methods evaluated in our experiments.

\vspace{-0.1in}
\paragraph{Gradient Ascent (GA).} GA maximizes the loss on the forget set, directly opposing the standard training objective to push the model away from the forget data's distribution \citep{jang2023knowledge}. While straightforward, it often leads to catastrophic forgetting of general knowledge.

\vspace{-0.1in}
\paragraph{Gradient Difference (GradDiff).} GradDiff balances competing objectives by maximizing the loss on the forget set while minimizing the loss on the retain set \citep{liu2022backdoor}. Despite this approach, GradDiff struggles to find an optimal trade-off, resulting in either over- or under-unlearning.

\vspace{-0.1in}
\paragraph{Negative Preference Optimization (NPO).} NPO reframes unlearning within a preference learning framework, treating the forget set as negative preference data by adapting the Direct Preference Optimization objective \citep{zhang2024negative}. We use NPO with KL Divergence Minimization that augments NPO with a KL divergence term to preserve utility by minimizing distributional shift on benign data.

\vspace{-0.1in}
\paragraph{Simplified NPO (SimNPO).} A computationally efficient variant of NPO that simplifies the optimization process while retaining core principles of negative preference learning \citep{fan2024simplicity}. SimNPO trades some unlearning effectiveness for faster processing.

\vspace{-0.1in}
\paragraph{Representation Misdirection for Unlearning (RMU).} RMU injects targeted noise into specific layers to disrupt the model's ability to process information related to the forget set \citep{li2024wmdp}. Its effectiveness depends heavily on precise noise targeting and hyperparameter tuning. We injected noise in the 7th layer for both News and Books.

For all finetuning-based baselines (GA, GradDiff, NPO, SimNPO, RMU), we used AdamW optimizer with a learning rate of 1e-5 and batch size of 32. We finetuned all parameters in the model.
The optimal checkpoint for each method was determined by selecting the first epoch (within 10 epochs) where the unlearned model's utility on the retain set fell below $90\%$ that of the target model. Table \ref{tab:muse_hyperparams} summarizes the optimal epochs or $\alpha$ values for each method on both datasets.

\begin{table}[h]
\centering
\small
\begin{tabular}{lcc}
\toprule
\textbf{Unlearning Method} & \textbf{NEWS} & \textbf{BOOKS} \\
\midrule
GA & epoch 1 & epoch 1 \\
GradDiff & epoch 2 & epoch 3 \\
NPO & epoch 8 & epoch 10 \\
SimNPO & epoch 10 & epoch 10 \\
RMU & epoch 9 & epoch 10 \\
\bottomrule
\end{tabular}
\caption{\small Optimal epochs for baseline unlearning methods on MUSE benchmark, determined by utility-based stopping criteria.}
\label{tab:muse_hyperparams}
\end{table}

All finetuning and inference was performed on 4 A6000 GPUs in under a day.

\subsection{Sequential Unlearning Strategies for DSG}
\label{app:sequential_unlearning}

In real-world scenarios, unlearning requests often arrive sequentially over time. An effective unlearning method must be able to handle multiple, successive requests without significant degradation in performance or utility. In Section~\ref{sec:methodology}, we evaluated DSG's performance under sequential unlearning using the MUSE benchmark \citep{shi2024muse} with four disjoint folds of the NEWS corpus. We implemented and compared two strategies for adapting DSG to this sequential setting, referred to as DSG$_{\text{all}}$ and DSG$_{\text{union}}$. Both strategies leverage the core DSG mechanisms of feature selection and dynamic thresholding but differ in how they aggregate information across multiple unlearning requests.

\paragraph{Setup.}
Let $k = 1, 2, \dots, K$ index the sequential unlearning requests. Each request $k$ introduces a new forget dataset $D_{F,k}$. We assume the retain dataset $D_R$ remains constant throughout the process. The goal at step $k$ is to produce an unlearned model that effectively forgets the cumulative forget data $\mathcal{D}_{F,k}^{\text{cumul}} = \cup_{i=1}^k D_{F,i}$ while preserving utility evaluated on $D_R$. Let $n_{\text{feats}}$ be the desired number of features to select at each relevant stage.

\vspace{-0.05in}
\paragraph{Strategy 1: DSG$_{\text{all}}$ (Cumulative Score Update)}
This strategy treats the sequential unlearning problem as equivalent to unlearning a single, growing forget set $\mathcal{D}_{F,k}^{\text{cumul}}$ at each step $k$. It maintains cumulative statistics required for calculating the feature importance scores.

\vspace{-0.05in}
\begin{itemize}[itemsep=1pt, topsep=0pt, leftmargin=15pt, rightmargin=0pt, labelsep=2pt, parsep=0pt]
    \item \textbf{Cumulative Statistics:} At step $k$, we need the aggregate sum of squared activations and the total number of tokens for all forget data seen so far. Let $A^2_{F,i}(j) = \sum_{x \in D_{F,i}} \sum_{t=1}^{|x|} [f_j(\mathbf{h}_{x,t})]^2$ be the sum of squared activations for feature $j$ on dataset $D_{F,i}$, and $N_{F,i} = \sum_{x \in D_{F,i}} |x|$ be the total number of tokens in $D_{F,i}$. The cumulative sums at step $k$ are:
        \begin{align*}
            \Sigma^2_{F,k}(j) &= \sum_{i=1}^k A^2_{F,i}(j) \\
            N_{F,k}^{\text{cumul}} &= \sum_{i=1}^k N_{F,i}
        \end{align*}
    These sums can be updated incrementally as each new $D_{F,k}$ arrives, without needing to store all previous datasets. The retain set statistics ($A^2_R(j)$ and $N_R$) are computed once from $D_R$.

    \item \textbf{Score Calculation:} The importance scores are calculated using the cumulative statistics:
        \begin{align*}
            \texttt{forget\_score}_{\text{all},k}(j) &= \frac{\Sigma^2_{F,k}(j)}{N_{F,k}^{\text{cumul}}} \\
            \texttt{retain\_score}(j) &= \frac{A^2_R(j)}{N_R} \quad (\text{constant across } k)\\
            \texttt{imp\_ratio}_{\text{all},k}(j) &= \frac{\texttt{forget\_score}_{\text{all},k}(j)}{\max\{\texttt{retain\_score}(j), \varepsilon\}}
        \end{align*}

    \item \textbf{Feature Selection:} Using $\texttt{imp\_ratio}_{\text{all},k}(j)$ and $\texttt{forget\_score}_{\text{all},k}(j)$, select the feature set $S_{\text{all},k}$ containing the top $n_{\text{feats}}$ features, following the procedure in Algorithm~\ref{alg:DSG} (filtering by percentile $p_{\text{ratio}}$ and ranking by forget score).

    \item \textbf{Dynamic Threshold and Intervention:} Calculate the activation statistic 
    $\rho_{\text{all},k}(x) = \frac{1}{|x|}\sum_{t} \mathbf{1}[\exists j\in S_{\text{all},k}: f_j(\mathbf{h}_t)>0]$. Calibrate the dynamic threshold $\tau_{\text{all},k} = \text{Percentile}(\{\rho_{\text{all},k}(x)\}_{x\in D_R}, p_{\text{dyn}})$. Apply conditional clamping using $S_{\text{all},k}$ and $\tau_{\text{all},k}$ during inference.
\end{itemize}

DSG$_{\text{all}}$ aims for the most accurate representation of feature importance with respect to all forgotten data combined.

\paragraph{Strategy 2: DSG$_{\text{union}}$ (Union of Feature Sets)}

This strategy selects features based on each individual forget request $D_{F,k}$ and then uses the union of these feature sets for intervention.

\begin{itemize}[itemsep=1pt, topsep=0pt, leftmargin=15pt, rightmargin=0pt, labelsep=2pt, parsep=0pt]
    \item \textbf{Independent Score Calculation:} At step $k$, calculate importance scores using only the current forget set $D_{F,k}$ and the retain set $D_R$:
        \begin{align*}
            \texttt{forget\_score}_{\text{indep},k}(j) &= \frac{A^2_{F,k}(j)}{N_{F,k}} \\
            \texttt{retain\_score}(j) &= \frac{A^2_R(j)}{N_R} \\
            \texttt{imp\_ratio}_{\text{indep},k}(j) &= \frac{\texttt{forget\_score}_{\text{indep},k}(j)}{\max\{\texttt{retain\_score}(j), \varepsilon\}}
        \end{align*}

    \item \textbf{Independent Feature Selection:} Select the feature set $S_{\text{indep}, k}$ containing the top $n_{\text{feats}}$ features based on $\texttt{imp\_ratio}_{\text{indep},k}(j)$ and $\texttt{forget\_score}_{\text{indep},k}(j)$.

    \item \textbf{Union Set Formation:} Maintain the cumulative union of feature sets identified at each step:
        \begin{equation*}
            S_{\text{union}, k} = S_{\text{union}, k-1} \cup S_{\text{indep}, k} \quad (\text{with } S_{\text{union}, 0} = \emptyset)
        \end{equation*}
    The size of $S_{\text{union}, k}$ may grow beyond $n_{\text{feats}}$.

    \item \textbf{Dynamic Threshold and Intervention:} Calculate the activation statistic $\rho_{\text{union},k}(x) = \frac{1}{|x|}\sum_{t} \mathbf{1}[\exists j\in S_{\text{union},k}: f_j(\mathbf{h}_t)>0]$. Calibrate the dynamic threshold $\tau_{\text{union},k} = \text{Percentile}(\{\rho_{\text{union},k}(x)\}_{x\in D_R}, p_{\text{dyn}})$. Apply conditional clamping using $S_{\text{union}, k}$ and $\tau_{\text{union},k}$ during inference.
\end{itemize}

DSG$_{\text{union}}$ ensures that features deemed important for any past forget request are considered for intervention, potentially capturing a broader range of forget-related concepts but possibly leading to a larger intervention set over time.

\paragraph{Result.} As reported in the main text (\autoref{fig:knowmem_tradeoff}(b)), both DSG$_{\text{all}}$ and DSG$_{\text{union}}$ demonstrated strong and stable performance across the four sequential unlearning requests on the MUSE benchmark, significantly outperforming gradient-based methods which showed rapid degradation.

\section{Relearning attack}

\subsection{Superficial Alignment Hypothesis}
\label{app:alignment_hypothesis}

\begin{figure}[htbp]
    \centering
    \begin{subfigure}[b]{0.49\textwidth}
        \centering
        \includegraphics[width=\textwidth]{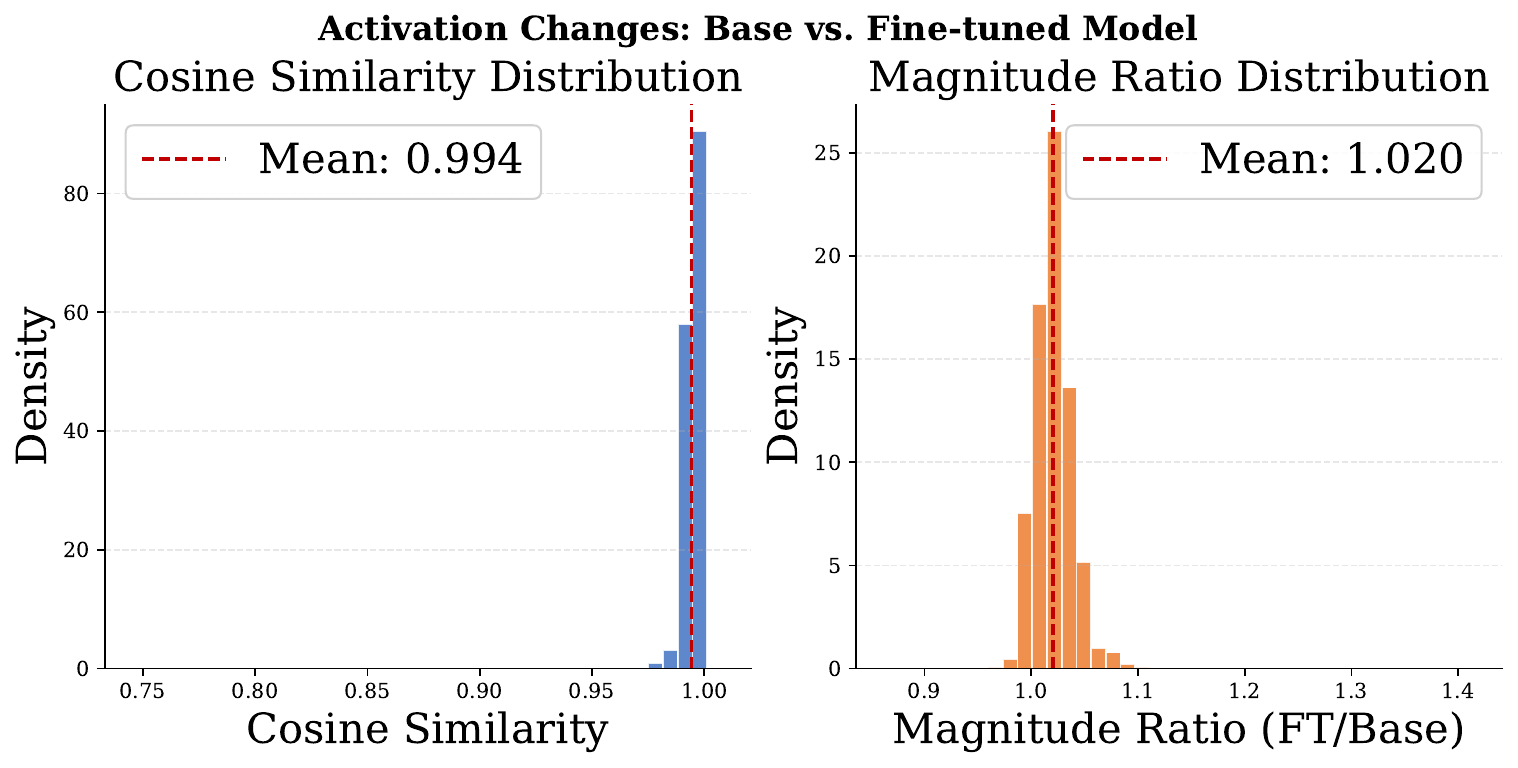}
        \caption{}
        \label{fig:relearn_cosine}
    \end{subfigure}
    \hfill
    \begin{subfigure}[b]{0.49\textwidth}
        \centering
        \includegraphics[width=\textwidth]{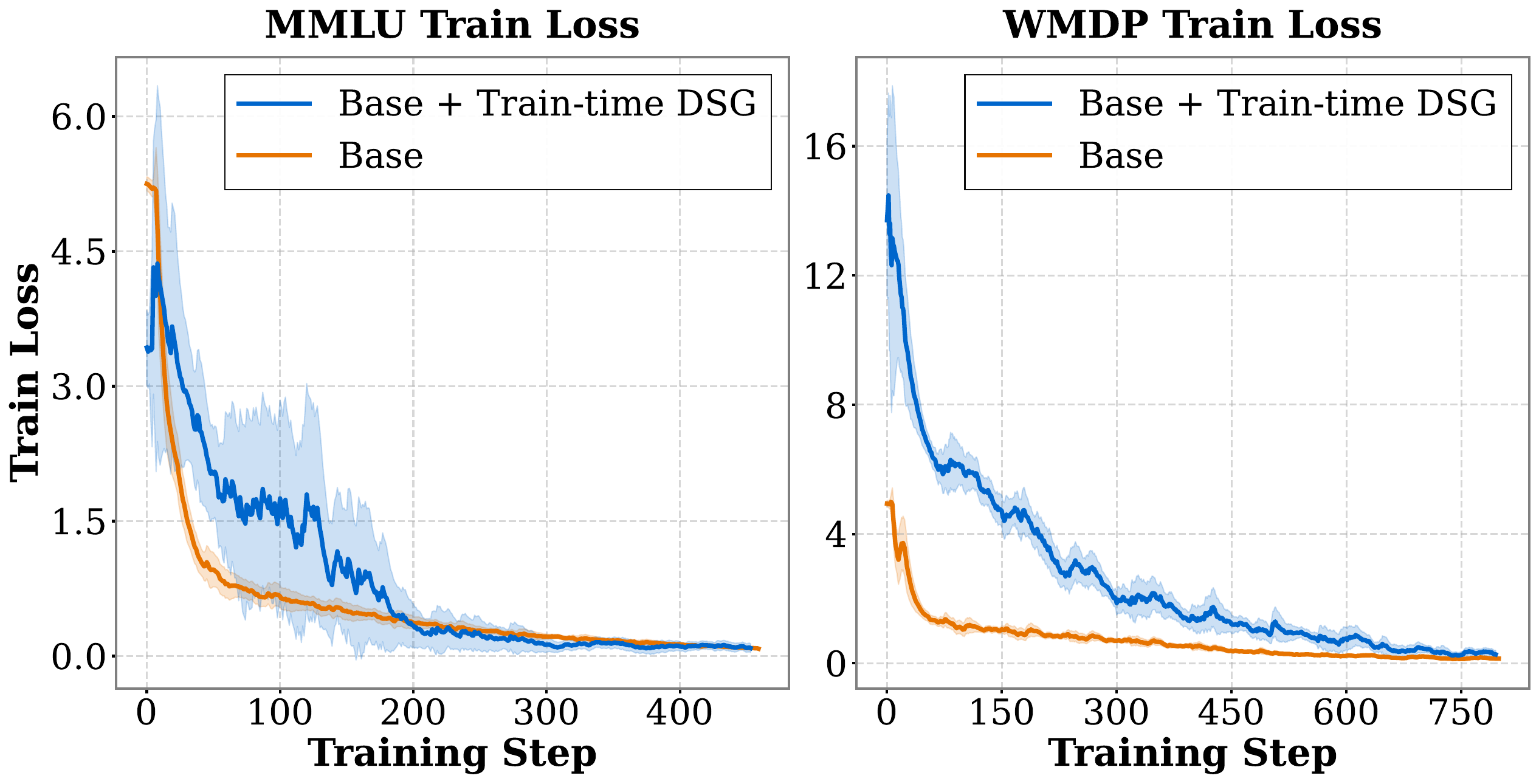}
        \caption{}
        \label{fig:train_loss_comparison}
    \end{subfigure}
    \caption{\small (a) Distribution of activation cosine similarity and activation magnitude ratio between Base and Finetuned models. Finetuning does not significantly change the underlying activation space. (b) Train loss when finetuning Base model and Base+SAE model on WMDP and MMLU. Loss on WMDP for the BASE+SAE model is significantly higher than on MMLU. }
    \label{fig:combined_analysis}
\end{figure}

The resistance of DSG to relearning attacks can be understood through the lens of the Superficial Alignment Hypothesis \citep{zhou2023lima}, which posits that a model's activation geometry is established during pretraining and remains relatively stable during subsequent finetuning. We provide empirical evidence supporting this hypothesis in Figure~\ref{fig:relearn_cosine}, which presents the distribution of activation cosine similarities and magnitude ratios between the base and finetuned models.

The concentration of cosine similarity values near 1.0 indicates that finetuning preserves the directional information in the activation space, with minimal rotational changes. Similarly, the activation magnitude ratios cluster tightly around 1.0, demonstrating that the scale of activations remains largely unchanged during finetuning. These findings align with previous research suggesting that while weights may change substantially during finetuning, the underlying activation patterns and geometry remain remarkably stable.

This stability of activation geometry is the basis for DSG's effectiveness against relearning attacks. By operating directly on these stable activation patterns rather than weights, DSG establishes a more durable defense mechanism that persists even when adversaries attempt to modify the model's weights through finetuning.

\subsection{Train-time DSG Details}
\label{app:train_time_dsg} %

Beyond applying DSG only at inference (Test-time DSG), we explore integrating it directly into the finetuning process itself to further enhance resistance against relearning attacks. This approach, termed \textbf{Train-time DSG}, applies the standard DSG logic during each forward pass of the finetuning/relearning phase.

Specifically, during finetuning on a potentially adversarial dataset (like the forget set itself in a relearning attack scenario), Train-time DSG operates as follows:
\begin{enumerate}[itemsep=1pt, topsep=0pt, leftmargin=*, rightmargin=0pt, labelsep=2pt, parsep=0pt]
    \item For each input sequence $x$ in a training batch, compute the hidden states $h_t$ and corresponding SAE feature activations $f_j(h_t)$.
    \item Calculate the statistic $\rho(x)$ based on the pre-selected forget feature set $S_{\text{nfeats}}$.
    \item Classify the sequence using the dynamic threshold $\tau$: $C(x) = \mathbf{1}[\rho(x) > \tau]$.
    \item \textbf{Conditional Clamping:} If $C(x) = 1$ (forget-relevant), modify the activations for features $j \in S_{\text{nfeats}}$ by setting $f'_j(h_t) = -c$ for all tokens $t$. Otherwise, $f'_j(h_t) = f_j(h_t)$. These potentially modified activations $f'$ are then used for the reconstruction $\hat{h}'$ and subsequent layers of the LLM.
    \item The final loss (e.g., cross-entropy on the relearning task) is computed based on the LLM's output derived from these potentially clamped activations.
    \item \textbf{Gradient Blocking:} During the backward pass, gradients flow back through the model as usual. However, for any feature activation $f_j(h_t)$ that was clamped to $-c$, the gradient of the loss with respect to the upstream components (that produced $h_t$) through that specific feature pathway is effectively blocked. Setting the activation to a constant $-c$ detaches it from the upstream computations for the purpose of gradient calculation via that feature's contribution path. This is conceptually akin to applying a \texttt{stop\_gradient} operation specifically on the clamped feature activations.
\end{enumerate}
During this finetuning process, the parameters of the SAE itself (encoder $W_{\text{enc}}, b_{\text{enc}}$ and decoder $W_{\text{dec}}, b_{\text{dec}}$) are kept \textbf{frozen}. This prevents the SAE from adapting to circumvent the clamping intervention.

This dynamic gradient blocking prevents the finetuning process from easily undoing the unlearning effect by simply adjusting weights to reactivate the specific features in $S_{\text{nfeats}}$ that carry the forget-set information. When the model attempts to minimize loss on forget-set examples by utilizing these features, Train-time DSG clamps them and blocks the relevant gradient signal. This forces the model, if it attempts to relearn, to find potentially much less direct or alternative pathways through other features or model components. This difficulty in relearning via the original pathways contributes to the significantly higher training loss observed on WMDP-Bio when finetuning with Train-time DSG active, as seen in Figure~\ref{fig:train_loss_comparison}.

\subsection{Tamper-Resistant Safeguards}
\label{app:tamper_resistance}

DSG functions as a tamper-resistant safeguard during finetuning by effectively filtering gradients that would otherwise enable the model to relearn forgotten knowledge. Figure~\ref{fig:train_loss_comparison} demonstrates this mechanism quantitatively, showing the training loss profiles when finetuning the base model and the base model with DSG active on both WMDP-Bio (forget set) and MMLU (retain set) datasets.

When DSG is active during finetuning, we observe significantly elevated training loss values on WMDP-Bio compared to MMLU. This marked difference in loss profiles indicates that DSG selectively impedes the model from reducing loss on forget set content while allowing normal optimization on retain set content. This selective gradient filtering creates an effective barrier against relearning targeted information.

The mechanism works because during finetuning, DSG constantly monitors activations and applies clamping whenever forget-relevant features are activated above the dynamic threshold. This intervention disrupts the gradient flow for targeted concepts, requiring the model to develop entirely new processing pathways rather than simply recovering previously established connections. This \emph{rewiring} requirement explains the delayed recovery pattern observed in the main relearning experiments, where performance remains near random for approximately six epochs before beginning to increase.

\subsection{Relearning Attack at Learning Rate 1e-6}

\label{app:relearning_attack_lr}
\begin{figure}[h!]
    \centering
    \includegraphics[width=0.8\textwidth]{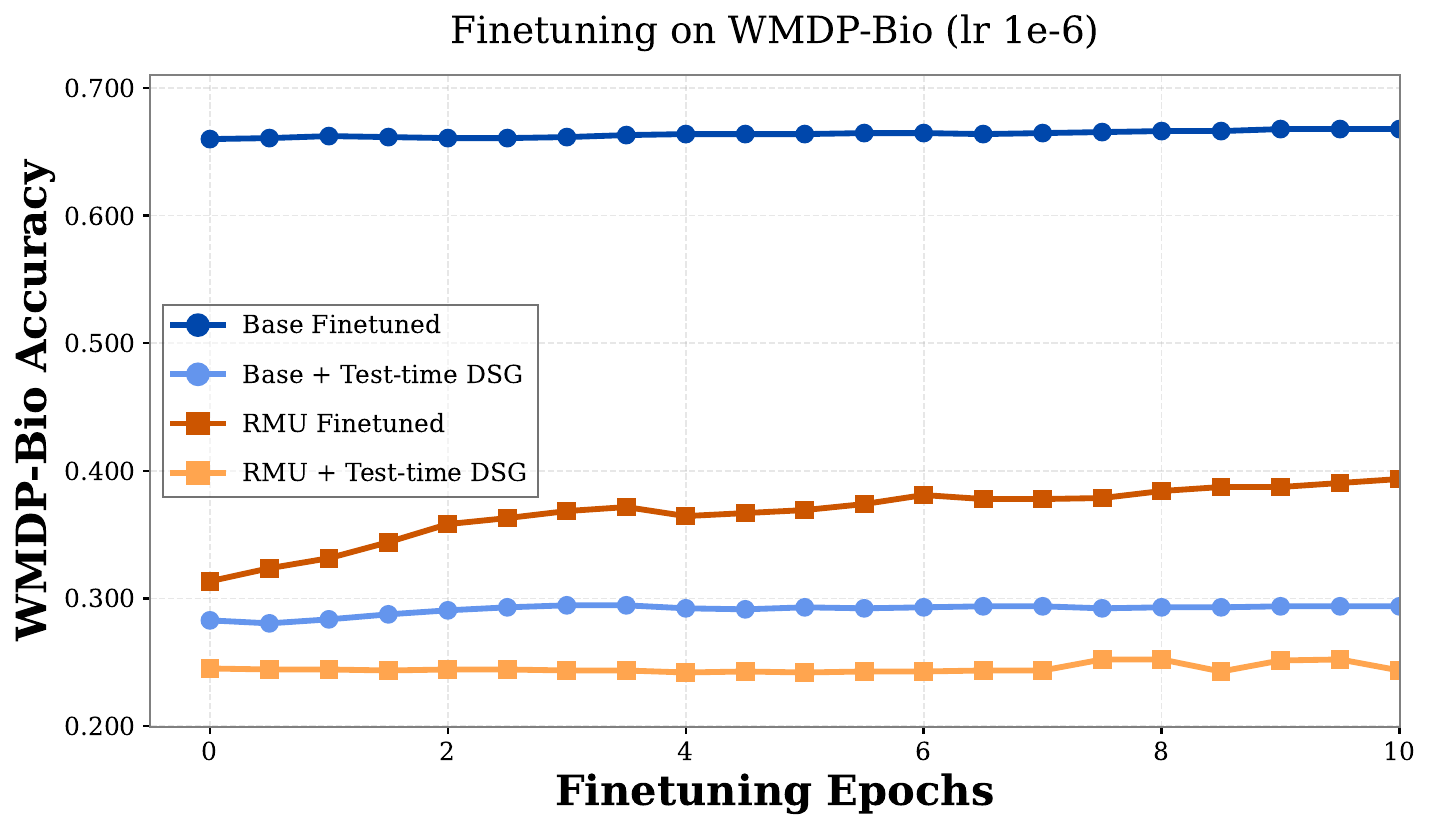} %
    \caption{\small Relearning attack performance with reduced learning rate (1e-6). All configurations show minimal performance changes across finetuning epochs, demonstrating that relearning attack efficacy is strongly dependent on learning rate.}
    \label{fig:relearning_1e-6}
\end{figure}

To investigate the impact of learning rate on relearning attack efficacy, we conducted a supplementary analysis using a reduced learning rate of 1e-6 (compared to 1e-5 in the main experiments). Figure~\ref{fig:relearning_1e-6} presents WMDP-Bio accuracy across finetuning epochs for all configurations under this reduced learning rate condition.

The results demonstrate minimal performance changes across all configurations throughout the finetuning process. This stability indicates that relearning attack efficacy is strongly dependent on learning rate, with lower rates substantially limiting the model's ability to recover forgotten knowledge. This finding has important implications for practical deployment scenarios, suggesting that implementing learning rate constraints on model access APIs could serve as an additional defense layer against relearning attacks.

\subsection{Relearning Hyperparameters}
\label{app:relearning_hyperparams}
For the relearning experiments, we used the RMU unlearned model as described in Section~\ref{sec:unlearningWMDP}, with RMU hyperparameters set to steering coefficient 400, alpha 100, monitoring layer 3, AdamW optimizer, learning rate 5e-6, batch size 8, and 400 steps. For DSG configurations, we employed the optimal parameters identified in our WMDP-Bio experiments: importance ratio percentile ($p_{\text{ratio}}$) of 95, feature count of 20, and clamp strength ($c$) of 500 for both test-time and train-time DSG interventions. The dynamic threshold percentile ($p_{\text{dyn}}$) was maintained at 95, consistent with our main experiments.

All finetuning and inference was performed on 2 A100 GPUs in under a day.

\section{Data Efficiency and Zero-shot Capabilities}
\label{app:data-efficiency}

\subsection{Hyperparameters}

\paragraph{Data Efficiency} 
For the data efficiency experiments, we maintain consistent hyperparameter settings across all data subsets to isolate the impact of dataset size. We use the optimal DSG configuration identified for WMDP-Bio with 100\% data, as shown in Table~\ref{tab:dsg-efficiency-hyperparams}.

\begin{table}[h]
    \centering
    \small
    \begin{tabular}{ll}
        \toprule
        \textbf{Parameter} & \textbf{Value} \\
        \midrule
        SAE & \texttt{gemma-scope-2b-pt-res} SAE (width 16k) \\
        SAE layer & Layer 3, $\ell_0$ 142 \\
        Importance ratio percentile ($p_{\text{ratio}}$) & 95 \\
        Dynamic threshold percentile ($p_{\text{dyn}}$) & 95 \\
        Number of selected features & 20 \\
        Clamp strength ($c$) & 500 \\
        \bottomrule
    \end{tabular}
        \caption{\small DSG hyperparameters for data efficiency experiments}
    \label{tab:dsg-efficiency-hyperparams}
\end{table}

For RMU comparisons, we evaluate two approaches: (1) maintaining the same number of training steps (400) across all data subsets, and (2) completing one full epoch over each dataset subset. Maintaining the same number of training steps produced a better Pareto front. We select the model with lower WMDP accuracy for each subset. The base RMU configuration for WMDP-Bio is presented in Table~\ref{tab:rmu-hyperparams}. 

\begin{table}[h]
    \centering
    \small
    \begin{tabular}{ll}
        \toprule
        \textbf{Parameter} & \textbf{Value} \\
        \midrule
        Steering coefficient & 400 \\
        Alpha ($\alpha$) & 100 \\
        Monitoring layer & 3 \\
        Learning rate & 5e-6 \\
        Parameter subset & MLP layers only \\
        \bottomrule
    \end{tabular}
    \caption{\small RMU hyperparameters for data efficiency experiments}
    \label{tab:rmu-hyperparams}
\end{table}

\paragraph{Zero-shot} 
For zero-shot experiments, we vary only the dynamic threshold $\tau$ (as no retain set is available for calibration) while keeping all other hyperparameters fixed at their optimal values for each task, as shown in Table~\ref{tab:zero-shot-hyperparams}.

\begin{table}[h]
    \centering
    \small
    \begin{tabular}{lll}
        \toprule
        \textbf{Parameter} & \textbf{WMDP-Bio} & \textbf{WMDP-Cyber} \\
        \midrule
        SAE layer & Layer 3, $\ell_0$ 59 & Layer 3, $\ell_0$ 59 \\
        Importance ratio percentile ($p_{\text{ratio}}$) & 95 & 90 \\
        Feature selection & 20 features  & 20 features via  \\
        Clamp strength ($c$) & 500 & 500 \\
        $\tau$ range tested & 0.1 to 0.9 (increments of 0.1) & 0.1 to 0.9 (increments of 0.1) \\
        Optimal $\tau$ & 0.6 & 0.2 \\
        \bottomrule
    \end{tabular}
        \caption{\small Hyperparameters for zero-shot experiments}
    \label{tab:zero-shot-hyperparams}
\end{table}

The optimal thresholds were determined to be $\tau = 0.6$ for WMDP-Bio and $\tau = 0.2$ for WMDP-Cyber, as shown in Figure~\ref{fig:zero_shot}B.

\section{Ablations}
\label{app:ablations}
This appendix provides comprehensive details on our ablation studies for DSG. We analyze each component's contribution to overall performance and explore sensitivity to various hyperparameters.

\subsection{Additional Ablations}

\paragraph{DSG Clamp Strength \( c \).}  
The clamping parameter \( c \) determines the magnitude of intervention applied to selected SAE features. As shown in Figure~\ref{fig:multiplier}, WMDP-Bio accuracy drops significantly at modest clamp values (\( c=25 \)), reaching near-optimal unlearning performance, while MMLU accuracy remains above 99\% for configurations with 10-20 features. For these optimal feature counts, performance remains remarkably stable across a wide range of clamp strengths (\( 100 \leq c \leq 500 \)), demonstrating DSG's robustness to this parameter.
By contrast \citet{farrell2024applying} exhibit greater sensitivity to clamp values, as seen in Figure~\ref{fig:ablations}A.

\begin{figure}[h!]
    \centering
    \includegraphics[width=0.5\textwidth]{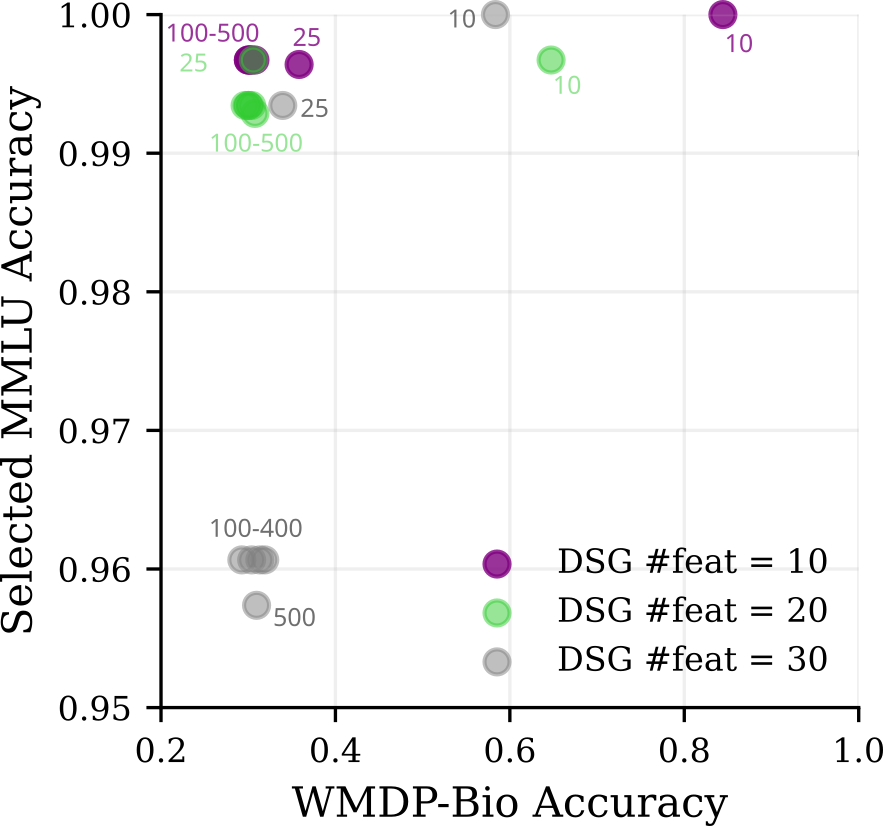}
\caption{\small Effect of clamp strength $c$ on DSG performance across different feature counts. MMLU accuracy (solid lines) remains consistently high ($>99\%$) for 10-20 features across all clamp values, while WMDP-Bio accuracy (dashed lines) drops sharply even at modest clamp strengths ($c=25$). This demonstrates DSG's ability to effectively remove targeted knowledge while preserving general model capabilities with minimal parameter sensitivity.}
    \label{fig:multiplier}
\end{figure}

\paragraph{DSG Number of Features.}  
Across experiments, the number of features selected during percentile-based feature selection represents a critical balance between coverage and precision. Selecting too few features may result in insufficient removal of forget-set information, as some forget-set inputs might not activate the limited feature set strongly enough to trigger intervention. Conversely, selecting too many features increases the risk of including noisy features selected using importance scoring or less discriminative features that activate on retain-set samples, potentially causing false positive detections and reducing model utility.

Our experiments consistently show that 20 features provides an optimal balance for both WMDP-Bio and WMDP-Cyber domains. Configurations with 10 features occasionally show reduced unlearning effectiveness despite good utility preservation, while 30-feature configurations begin to impact retain-set performance at higher clamp strengths. The precise optimal feature count may vary by domain and dataset characteristics as well as SAE width, but the overall pattern of diminishing returns with increased feature counts remains consistent.

\paragraph{Choice of Activation Statistic: Percentage vs. Raw Count.}
DSG's dynamic classification uses a sequence-level statistic derived from forget-feature ($S_{n_{\text{feats}}}$) activations. We compared two statistics: 
(1) Percentage-based ($\rho$), the fraction of tokens where any $j \in S_{n_{\text{feats}}}$ activates ($f_j(\mathbf{h}_t) > 0$):
\begin{equation*}
    \rho(x) = \frac{1}{|x|} \sum_{t=1}^{|x|} \mathbf{1}[\exists j \in S_{n_{\text{feats}}}: f_j(\mathbf{h}_t) > 0] 
\end{equation*}
and (2) Raw count-based ($\rho_{\text{raw}}$), the absolute number of such tokens:
\begin{equation*}
    \rho_{\text{raw}}(x) = \sum_{t=1}^{|x|} \mathbf{1}[\exists j \in S_{n_{\text{feats}}}: f_j(\mathbf{h}_t) > 0]
\end{equation*}
Effective dynamic thresholding (calibrated on WikiText) requires low distributional distance (Total Variation Distance, TVD) between retain sets (WikiText vs. MMLU) for generalization, and high TVD between retain and forget sets (WikiText vs. WMDP) for discrimination (Figure~\ref{fig:distances}).

Empirically, $\rho$ performs significantly better. For WMDP-Bio: 
(1) Retain alignment (WikiText vs. MMLU): $\text{TVD}(\rho)=0.38 \pm 0.03$ vs. $\text{TVD}(\rho_{\text{raw}})=0.88 \pm 0.01$, indicating $\rho$ generalizes better across retain sets. 
(2) Retain/Forget separation (WikiText vs. WMDP-Bio): $\text{TVD}(\rho)=0.90 \pm 0.02$ vs. $\text{TVD}(\rho_{\text{raw}})=0.41 \pm 0.03$, showing $\rho$ discriminates more effectively. Similar results hold for WMDP-Cyber (Figure~\ref{fig:distances}).

The percentage-based statistic $\rho$ outperforms $\rho_{\text{raw}}$ due to its inherent normalization. Raw counts ($\rho_{\text{raw}}$) are confounded by sequence length, whereas $\rho$ measures activation density, providing a length-invariant signal. This normalization improves both generalization across retain data and discrimination from forget data, making $\rho$ the more robust choice for DSG.

\begin{figure}[h!]
    \centering
    \includegraphics[width=0.8\textwidth]{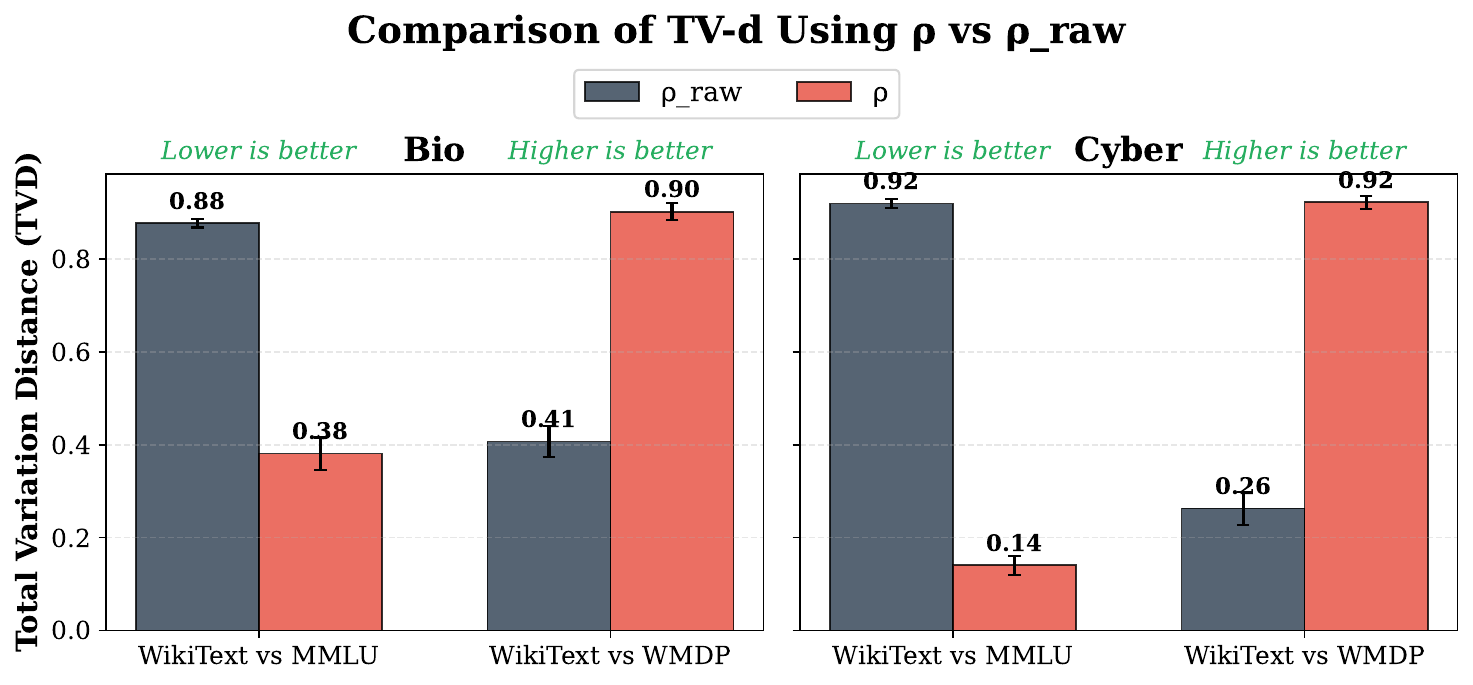}
    \caption{\small Total Variation Distance (TVD) between WikiText and benchmark datasets using percentage-based (\( \rho \)) vs. raw count-based (\( \rho_{\text{raw}} \)) metrics. Lower TVD between WikiText and MMLU indicates better alignment of retain sets, while higher TVD between WikiText and WMDP indicates better separation between retain and forget distributions. Percentage-based metrics consistently outperform raw counts on both measures across all benchmarks.}
    \label{fig:distances} %
\end{figure}

\subsection{Ablations Hyperparameter Details}

Our ablation studies used the following hyperparameter configurations:

\paragraph{Clamp Strength and Feature Count.} We evaluated DSG performance with feature counts of 10, 20, and 30, across clamp strengths \( c \in \{10, 25, 50, 100, 200, 300, 400, 500\} \) and \( p_{\text{ratio}}=95 \).

\paragraph{Feature Selection Comparison.} To compare our percentile-based approach with \citet{farrell2024applying}, we tested both methods using 20 and 30 features, with clamp values in the range [10-500]. We set \( p_{\text{ratio}}=95 \) for DSG and used the recommended threshold of 0.01 for \citet{farrell2024applying}.

\paragraph{Dynamic Threshold.} We varied \( p_{\text{dyn}} \) from 60 to 97 using 20 and 30 features with \( c = 500 \) to examine the impact of threshold selection on the forget-retain trade-off.

\paragraph{Importance Ratio Threshold.} We tested \( p_{\text{ratio}} \) values from 75 to 95 using 20 and 30 features with \( c=500 \) to assess feature selection stringency effects.

\paragraph{Activation Metrics.} For comparing percentage vs. raw count metrics, we applied bootstrap resampling with 1000 iterations, using Kernel Density Estimation to compute robust TVD estimates between WikiText and test set distributions.

\section{Computational Cost (Inference Latency)}
\label{app:latency}

A practical consideration for deploying unlearning methods is their impact on inference speed. We evaluated the latency introduced by DSG compared to the original model and a static clamping baseline \citep{farrell2024applying}.

Interventions using SAEs inherently introduce some latency compared to the original LLM without the SAE. This overhead stems from two main sources: \textbf{(1) The baseline cost of the SAE's forward pass}, which involves matrix multiplications for both encoding ($\mathbf{z} = \sigma(\mathbf{W}_{\text{enc}} \mathbf{h} + \mathbf{b}_{\text{enc}})$) and decoding ($\hat{\mathbf{h}} = \mathbf{W}_{\text{dec}} \mathbf{z} + \mathbf{b}_{\text{dec}}$), scaling with the SAE's width ($d_{\text{sae}}$); and \textbf{(2) The cost of the specific intervention logic} applied to the SAE features.

For DSG, this intervention logic involves two main steps beyond the standard SAE pass: (a) calculating the $\rho(x)$ statistic (fraction of forget-activated tokens) across the sequence's activations, and (b) conditionally applying the clamping intervention based on the $\rho(x) > \tau$ comparison.

Table \ref{tab:latency_comparison} presents the mean inference times (in seconds) and standard deviations over 100 samples (batch size 1) for processing sequences of varying lengths (256, 512, 1024 tokens). These measurements were performed using the \texttt{google/gemma-2-2b-it} model \citep{lieberum2024gemma} and with the \texttt{gemma-scope-2b-pt-res} SAE (width 16k, applied at layer 3, $\ell_0$ 142) \citep{lieberum2024gemma} on a single A6000 GPU.

As shown in Table \ref{tab:latency_comparison}, the \textbf{total combined overhead} (SAE matrix multiplications + intervention logic) introduced by both static clamping and DSG is \textbf{minimal}. Specifically, DSG increases latency by about 5\% (ranging from approximately 3.6\% to 7.3\%) over the original model across the tested sequence lengths. Importantly, the additional overhead incurred by DSG's dynamic classification logic (calculating $\rho(x)$ and thresholding) compared to simple static clamping is negligible indicating that the primary source of the observed latency increase relative to the base LLM is the SAE's own forward pass.

\begin{table}[h]
    \centering
    \small
    \begin{tabular}{lccc}
        \toprule
        \textbf{Seq Length} & \textbf{Original Model (s)} & \textbf{Static Clamping (s)} & \textbf{Dynamic Clamping (DSG) (s)} \\
        \midrule
        256 tokens  & $0.0872 \pm 0.0098$ & $0.0933 \pm 0.0091$ (+7.0\%) & $0.0936 \pm 0.0090$ (+7.3\%) \\
        512 tokens  & $0.1618 \pm 0.0061$ & $0.1659 \pm 0.0029$ (+2.5\%) & $0.1676 \pm 0.0047$ (+3.6\%) \\
        1024 tokens & $0.3300 \pm 0.0081$ & $0.3403 \pm 0.0083$ (+3.1\%) & $0.3420 \pm 0.0081$ (+3.6\%) \\
        \bottomrule
    \end{tabular}%
    \caption{\small Comparison of Inference Latency Across Sequence Lengths for \texttt{gemma-2-2b-it} with \texttt{gemma-scope-2b-pt-res} SAE. Data reported as mean $\pm$ std over 100 samples on a single A6000 GPU. Percentage increase relative to the Original Model shown in parentheses.}
    \label{tab:latency_comparison}
\end{table}

While DSG introduces this slight inference overhead, it is important to consider the broader computational context. Gradient-based unlearning methods require computationally intensive finetuning processes involving backward passes through the model for each unlearning request. In contrast, DSG's unlearning cost primarily involves a one-time computation of activation statistics (which can be amortized across many uses) and the \textbf{minimal}, constant inference-time overhead detailed above.

Therefore, DSG offers a highly efficient alternative for unlearning, achieving state-of-the-art forgetting effectiveness and utility preservation with only a marginal increase in inference latency. This makes it particularly attractive for scenarios requiring frequent or sequential unlearning operations where the cost of repeated gradient-based finetuning would be prohibitive.

\clearpage

\section{Feature Interpretability}
\label{app:intepretability}

A key strength of Dynamic SAE Guardrails (DSG) is \textbf{interpretable unlearning}, especially in zero-shot scenarios where domain-specific data is absent.  To demonstrate this, we used Neuronpedia API's \emph{search by SAE} \citep{neuronpedia} to directly identify Sparse Autoencoder (SAE) features relevant to biosecurity and cybersecurity hazards.  For WMDP-Bio and WMDP-Cyber, ``Biology" and ``Cybersecurity" queries retrieved the top 20 feature IDs from the \texttt{gemma-scope-2b-pt-res} SAE (width 16k) \citep{lieberum2024gemma} applied to \texttt{gemma-2-2b-it} layer 3 ($\ell_0$ 59).

Table \ref{tab:bio_cyber} shows the \textbf{semantic alignment} of these zero-shot features with the targeted knowledge.  Listing the top 20 SAE feature IDs for both domains, alongside Neuronpedia interpretations, the table shows features for ``Biology" consistently described with terms like ``biological processes", ``cellular functions", and ``genetics"—core concepts of biosecurity risks.  Similarly, ``Cybersecurity" features are linked to ``cyber threats", ``digital security", and ``encryption," reflecting cybersecurity risks in WMDP-Cyber. This highlights SAEs' ability to extract topically precise features, even without task-specific data.

Figure \ref{fig:interpretability} further illustrates this, visualizing activations on WMDP-Bio and WMDP-Cyber forget set sequences.  Figure \ref{fig:interpretability}A (WMDP-Bio) shows activations for IDs 373 and 10933 clustering around biological terms like ``bacteria", ``cellular", and ``infection" while Figure \ref{fig:interpretability}B (WMDP-Cyber, IDs 15286 and 2905) shows clusters around cybersecurity terms like ``encryption", ``data", and "security."

These examples and Table \ref{tab:bio_cyber} show that \textbf{zero-shot SAE feature selection captures semantically rich, domain-relevant concepts associated with hazardous knowledge}. This interpretability is \textbf{prescriptive for unlearning}: by targeting these topically coherent features, DSG achieves \textbf{zero-shot interpretable unlearning}. This is a key practical advantage over gradient-based methods, which require task-specific data and lack inherent interpretability, making DSG a uniquely transparent and data-efficient solution for mitigating hazardous knowledge, especially in data-scarce or zero-shot deployment.

\begin{figure}[htbp]
    \centering
    \includegraphics[width=\textwidth]{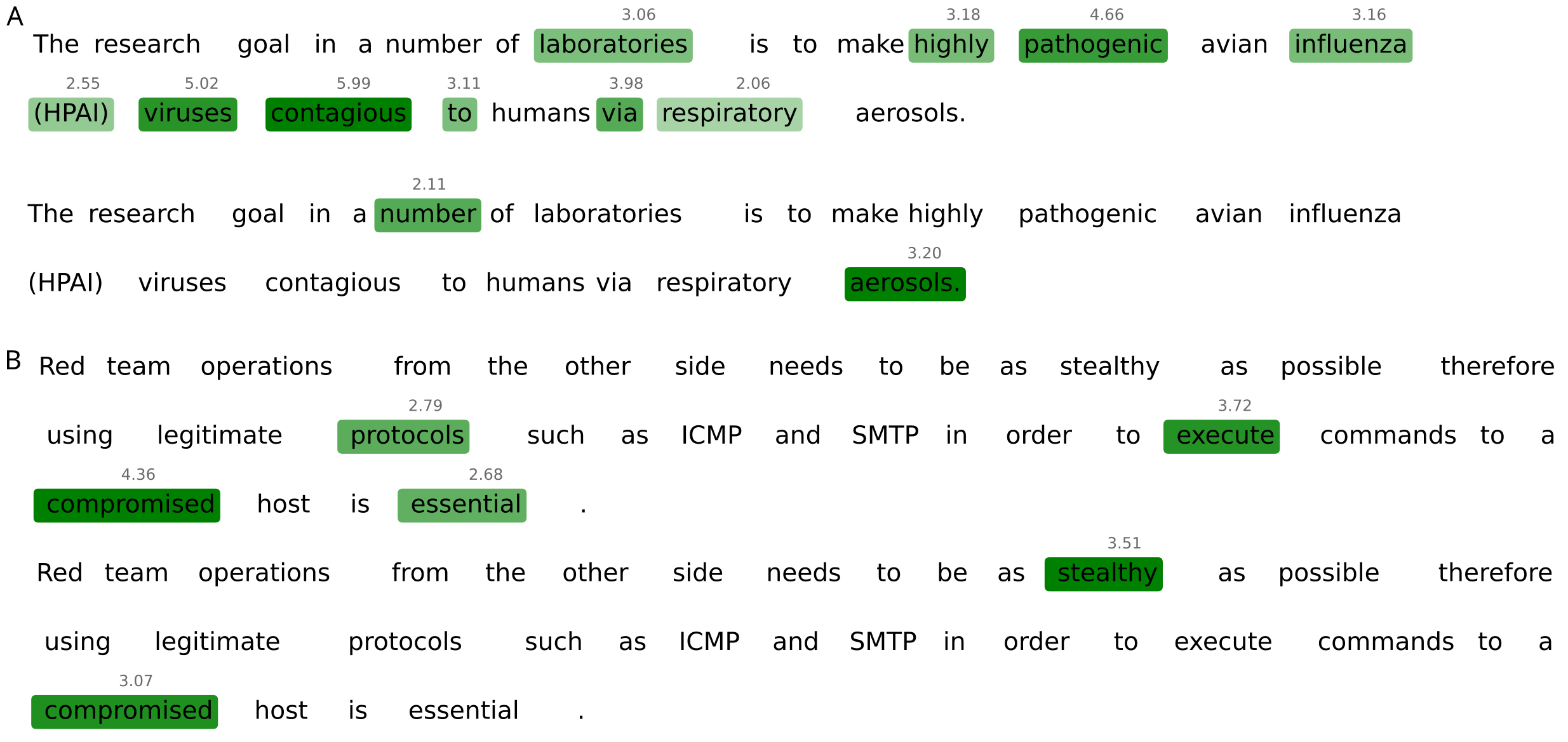} %
 \caption{ \small \textbf{\looseness=-1 Feature Activations on Example Sequences from Forget Sets.} (A) WMDP-Bio sequence with words highlighted in green indicating activation values $>0$ for feature ID (top) 373 and  (bottom) 10933. (B) WMDP-Cyber sequence with words highlighted in green indicating activation values $>0$ for feature ID (top) 15286 and  (bottom) 2905. Activation magnitudes are reported above the words in grey.}
    \label{fig:interpretability}
\end{figure}

\begin{table}[h]
    \centering
    \small
    \begin{tabular}{c|p{12cm}}

        \multicolumn{2}{c}{\textbf{Biology}} \\
        \hline
        \textbf{ID} & \textbf{Sentence} \\
        \hline
        12382 & Terms related to biological processes and structures in living organisms \\
        9722  & Concepts related to biological processes and systems \\
        343   & Terms related to biological processes and laboratory techniques \\
        373   & Scientific terminology related to biological processes and cellular functions \\
        11    & Scientific terms and concepts related to biology \\
        15969 & Terms related to biotechnology and bio-related fields \\
        12117 & Concepts related to biological or cellular processes and conditions, particularly focusing on requirements, limitations, and energy dynamics \\
        5877  & Terms related to biological processes and molecular interactions \\
        968   & Terms related to biological or medical processes and conditions, especially those involving cellular or molecular biology \\
        622   & Scientific terminology related to cellular processes and functions \\
        5231  & Specific terminology related to biological processes and gene expression \\
        10546 & Biological and genetic terms or sequences \\
        12037 & Medical terms and technical jargon related to genetic and biological research \\
        6150  & Elements related to scientific terminology, particularly in genetics and molecular biology \\
        5704  & Scientific terms and jargon related to biological research \\
        14747 & Technical terminology and references related to biotechnology and medical research \\
        8786  & Scientific terminology related to molecular biology and laboratory procedures \\
        10933 & Terms related to biological research and medical methodologies \\
        140   & Technical terms and concepts related to biology and bioengineering \\
        13527 & Terms related to biological or medical research, particularly focusing on specific conditions and associated microorganisms \\
        \hline
        \multicolumn{2}{c}{\textbf{Cybersecurity}} \\
        \hline
        \textbf{ID} & \textbf{Sentence} \\
        \hline
        15331  & Terms related to cyber threats and cybersecurity issues \\
        2060   & Explicit mentions of digital security concerns \\
        15286  & Concepts and terms related to digital security and data integrity \\
        11015  & Terms related to security and the act of securing something \\
        364    & References to security and related terms \\
        4836   & Concepts related to secure web connections and cryptocurrency surplus \\
        2905   & Terms related to data security and encryption \\
        10931  & References to national security and related governmental positions or actions \\
        11716  & Technical terms and language related to coding and software functionality, specifically focusing on vulnerabilities \\
        16160  & Discussions related to technology and computer systems \\
        6309   & References to technology and its applications across various sectors \\
        10543  & Keywords related to safety and security measures in various contexts \\
        11513  & Terms related to computing and data centers \\
        1803   & References to Common Weakness Enumeration (CWE) identifiers \\
        12681  & Keywords related to safety and security \\
        11520  & References to information technology and IT-related concepts \\
        11323  & Key concepts related to digital citizenship and its implications in various contexts \\
        10415  & Key components of data processing and communication in systems, particularly focusing on the details of data packet headers and their significance for routing and interpreting data \\
        3943   & References to computing systems and technologies \\
        4686   & References to technology and tech-related topics \\
        \hline
    \end{tabular}
 \caption{\small \textbf{Top 20 SAE Features for Biology and Cybersecurity in Zero-Shot Setting.}  List of the top 20 SAE feature IDs identified by querying Neuronpedia with ``Biology" and ``Cybersecurity", alongside their corresponding Neuronpedia-provided interpretations, showing the semantic relevance of the selected features to the targeted knowledge domains.}
    \label{tab:bio_cyber}
\end{table}

\end{document}